\documentclass[11pt]{article}

%

%




\usepackage{longtable}
\usepackage{microtype}
\usepackage[load-configurations=version-1]{siunitx} 
\usepackage{siunitx}

\usepackage[square]{natbib}
\usepackage{hyperref}
\hypersetup{colorlinks,
            linkcolor=blue,
            citecolor=blue,
            urlcolor=magenta,
            linktocpage,
            plainpages=false}

\usepackage[margin=1.35in]{geometry}

\usepackage{graphicx}

\usepackage[english]{babel}  
\usepackage{times}
\usepackage[T1]{fontenc}
\usepackage[utf8]{inputenc} 
\DeclareUnicodeCharacter{00A0}{~}
  
\usepackage{amssymb,amsmath,amscd,amsfonts,amsthm,bbm,mathrsfs,yhmath}
\usepackage{url}
\usepackage{graphicx}
\usepackage{wrapfig}
\usepackage{subfigure}

\usepackage{tikz}
\usetikzlibrary{shapes, decorations,arrows,calc,arrows.meta,fit,positioning}
\tikzset{
    -Latex,auto,node distance =1. cm and 1. cm,semithick,
    state/.style ={ellipse, draw, minimum width = 0.4 cm},
    point/.style = {circle, draw, inner sep=0.04cm,fill,node contents={}},
    directed/.style={Latex-Latex,dashed},
    el/.style = {inner sep=2pt, align=left, sloped}
}

\usepackage{comment}
\theoremstyle{definition}

\newtheorem*{theorem*}{Theorem}
\newtheorem{theorem}{Theorem}
\newtheorem{proposition}{Proposition}
\newtheorem{lemma}{Lemma}
\newtheorem{remark}{Remark}

\newtheorem{definition}{Definition}
\newtheorem{corollary}{Corollary}
\newtheorem*{corollary*}{Corollary}
\newtheorem{condition}{Condition}

\def\z{{\phi(Z)}}
\def\E{{\mathcal {E}}}
\def\Ex{{\mathbb{E}}}
\def\H{{\mathcal{H}}}
\def\X{{\mathcal{X}}}
\def\Y{{\mathcal{Y}}}
\def\Z{{\mathcal{Z}}}
\def\L{L}
\def\A{{\mathcal{A}}}

\def\sumni{\sum_{i=1}^{n_1}}

\def\sumnj{\sum_{j=1}^{n_2}}
\def\summ{\sum_{l=1}^m}

\def\T{\top}
\def\op{O_p}
\def\di{{d_1}}
\def\dii{{d_2}}
\def\para{{||}}
\def\res{{\bot}}
\def\S{\mathbb S}

\DeclareMathOperator*{\argmin}{arg\,min}

\newcommand{\R}{{{\mathbb R}}} 

\usepackage{cleveref}
\usepackage{todonotes}

\newcommand{\td}[1]{\textcolor{magenta}{#1}}

\begin{document}

%

%





\title{Nonlinear Causal Discovery via Kernel Anchor Regression}

\author{Wenqi Shi$^{1}$ \and Wenkai Xu$^2$}
\date{%
    $^1$Department of Industrial Engineering, Tsinghua University\\[1ex]%
    $^2$Department of Statistics,
    University of Oxford\\[2ex]%
}
\maketitle

\begin{abstract}
Learning causal relationships 
is a fundamental problem in science. Anchor regression has been developed to address this problem for a large class of causal graphical models, 
though the relationships between the variables are assumed to be linear. 
In this work, we tackle the nonlinear setting by proposing kernel anchor regression (KAR).
Beyond the natural formulation using a classic two-stage least square 
estimator, we also study an improved variant that involves nonparametric regression in three separate stages.
We provide 
convergence results
for the
proposed 
KAR estimators
and the identifiability conditions for
KAR to learn the nonlinear structural equation models (SEM).
Experimental results demonstrate the superior performances of the proposed KAR estimators over existing baselines.
\end{abstract}

\section{ Introduction}\label{sec:intro}
Causal relationships are concerned with consequences of actions or decisions;
thus, understanding these relationships can be the key ingredient in many scientific studies. For instance, medical practitioners need to know whether a treatment is effective to the target disease in clinical trials; econometricians ask whether a particular purchasing behaviour drives a change in Consumer Price Index (CPI); epidemiologists want to understand whether a government intervention policy has a positive effect on the pandemic. 
While the goal of revealing causal effects remains the same, the focus in causal relationships can differ by applications. 
To describe different aspects of the causal notion and design statistical procedures for inferring causal effects, various frameworks have been developed including 
Rubin's 
potential outcome framework \citep{rubin2004direct,rubin2005causal}, 
counterfactual distributions \citep{chernozhukov2013inference} and 
Pearl's causal graphical models \citep{pearl2000models,pearl2016causal}. A succinct yet comprehensive introduction can be found in \citet{peters2017elements}.

Causality has also been an evolving field in machine learning community and
machine learning techniques have been considered to improve the statistical procedures for causal discovery. In particular, nonparmetric independence \citep{gretton2005measuring} and conditional independence  \citep{fukumizu2007kernel} measures have been exploited to infer causal graphical models \citep{colombo2012learning, mooij2009regression}, especially with additive noise \citep{hoyer2008nonlinear, peters2014causal}.   Independent Component Analysis (ICA) methods \citep{hyvarinen2013independent, hyvarinen2017nonlinear} have been employed to identify causal relationship in both linear \citep{hyvarinen2010estimation, shimizu2006linear, shimizu2011directlingam} and non-linear settings \citep{monti2020causal, khemakhem2021causal}. Score matching \citep{hyvarinen2005estimation} has also been considered \citep{rolland2022score} for non-linear causal discovery.
Moreover, kernel methods, that utilize rich representation of reproducing kernel Hilbert space (RKHS), have been applied to tackle nonparametric estimation \citep{muandet2021counterfactual,singh2019kernel} and regression \citep{singh2019kernel, zhu2022causal} problems with causal implications.
Deep neural networks have also been attempted for  learning treatment effect \citep{johansson2020generalization, kallus2020deepmatch, louizos2017causal} or useful causal representations \citep{besserve2019counterfactuals,scholkopf2021toward, xu2020learning,xu2021deep}.


Recently, an elegant and statistically robust approach formulates causality as an invariant risk minimization (IRM), see for example \citep{buhlmann2018invariance, peters2016causal}. The causal structure is thought to be invariant across the environment and robust under intervention. The IRM learning procedure \citep{arjovsky2019invariant} on the observational data is then formulated as a regularised empirical risk minimization (ERM) to achieve both in-distribution
performance and out-of-distribution generalization.
In particular, anchor regression \citep{rothenhausler2018anchor} has been developed under the IRM framework to tackle a very general class of causal graphical models with the confounders being partly 
(but not fully)
observed. By choosing different regularisation parameter, anchor regression is able to unify the ordinary least square (OLS) regression, partialling out (PA) regression, and instrumental variable (IV) regression. While existing works
\citep{oberst2021regularizing,rothenhausler2018anchor} mostly considered linear cases, we explore the non-linear setting for anchor regression \citep{kook2022distributional}. Specifically, we consider the nonparametric estimation to tackle non-linear features via RKHS functions.

The paper is structured as follows. In   \Cref{sec:background}, we review useful concepts including instrumental variable (IV), anchor regression (AR), and reproducing kernel Hilbert space (RKHS).
Then we develop 
two versions of kernel anchor regression (KAR) estimators in \Cref{sec:kar}. Theoretical analysis on the estimators and the causal interpretation with nonlinear SEM are provided in \Cref{sec:analysis}. Experimental results for synthetic data and real-world applications are shown in \Cref{sec:simulation} followed by  concluding discussion
and future directions 
in \Cref{sec:conclusion}. The code for the experiments is 
available in at \url{https://github.com/Swq118/Kernel-Anchor-Regression}.

\section{Background}\label{sec:background}

Directed Acyclic Graph (DAG) is a power class of graphical model for characterising conditional dependency structures and has been widely used for probabilistic modelling 
such as hidden Markov models
\citep{rabiner1986introduction}, latent variable models \citep{bishop1998latent} and topic models \citep{blei2012probabilistic}.
By enforcing certain Markov and faithfulness assumptions \citep{peters2011identifiability}, as well as noise structures \citep{hoyer2008nonlinear}, DAG models the causal relationships \citep{glymour2019review,spirtes2013causal} 
and the learning procedures have been developed \citep{colombo2012learning,spirtes2000causation,zhang2018learning}.

\subsection*{From Instrumental Variable to Anchor Regression}

Instrumental variable (IV) has been developed to incorporate endogenous explanatory variables in econometrics \citep{bowden1990instrumental} and then applied for estimating causal effect 
\citep{angrist1996identification}.
Consider the linear regression problem $Y = X \beta + \epsilon$. OLS assumes 
independence between noise $\epsilon$ and explanatory $X$ (the exogenous variable) and $\beta$ is estimated via minimizing
\begin{equation}\label{eq:ols_obj}
\beta^{OLS} = \argmin_{\beta}\mathbb E_{train}[\|Y-X\beta\|^2].
\end{equation}
The IV setting assumes explicit dependency between $X$  and $\epsilon$ via instrumental variable $Z$, i.e. $X = Z\theta + \varepsilon$ where $Z \perp\varepsilon$.
The two-stage least squares (2SLS)  procedure, widely used in economics, tackles the linear IV estimation by first regressing $Z$ over $X$ to get conditional means $\bar{X}(z) := \mathbb E[X|Z=z]$ and secondly 
regressing outputs $Y$ on these conditional
means\footnote{For the second stage, by writing 
$Y = X \beta + \epsilon  = 
\underset{\mathbb E[{X}| Z]}{\underbrace{Z\theta }}\beta + (\varepsilon\beta + \epsilon)
,$ then the regressor is independent of noise and the OLS estimator can then apply.}.
This corresponds to minimizing the projected least square objective, 
\begin{equation}\label{eq:iv_obj}
\beta^{IV} = \argmin_{\beta}\mathbb E_{train}[\|P_Z(Y-X\beta)\|^2].
\end{equation}
$P_Z$ denotes the projection to $Z$ where $P_{Z=z}(X) = \mathbb E [X|Z=z] = \bar X(z)$.
2SLS works well when the underlying assumptions hold. 
The corresponding DAG is shown in \Cref{fig:anchor} with only solid lines. 
In practice, the relation between
$Y$ and $X$ may not be linear, nor may be the relation between $X$ and $Z$. Nonlinear IV has been explored \citep{bennett2019deep, centorrino2019nonparametric,hartford2017deep, singh2019kernel, xu2020learning,zhu2022causal}.



\begin{figure}[t!]
    \centering
\begin{tikzpicture}
    \node[state] (1) {$Z$};
    \node[state] (2) [right =of 1] {$X$};
    \node[state] (3) [right =of 2] {$Y$};
    \node[state] (4) [above right =of 2,xshift=-0.9cm,yshift=-0.3cm] {$C$};

    \path (1) edge node[above] {} (2);
    \path (2) edge node[above] {} (3);
    \path (4) edge node[el,above] {} (2);
    \path (4) edge node[el,above] {} (3);
    \path (1) edge[dashed, bend right=30] node[above] {} (3);
    \path (1) edge[dashed, bend left=10] node[above] {} (4);
\end{tikzpicture}
\caption{IV regression (solid lines only) and anchor regression (with dashed lines).
}\label{fig:anchor}
\end{figure}
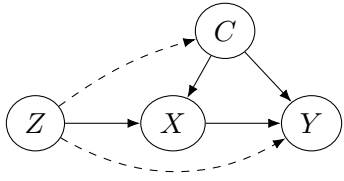

However, $Y$'s dependency on $Z$ may not be solely through $X$, i.e. as the dashed lines from $Z$ to $Y$ in \Cref{fig:anchor} indicate, $Y$ may depend on $Z$ directly, even though the strength of such dependency may remain unknown. The latent confounder $C$ may not be independent of 
$Z$, as indicated by dashed line from $Z$ to $C$ in \Cref{fig:anchor}. Incorporating such dependency structures tackles a much more general class of DAG, e.g. IV is a special case. To estimate $\beta$, anchor regression has been proposed \citep{rothenhausler2018anchor} that effectively combines \Cref{eq:ols_obj} and \Cref{eq:iv_obj}. 
For chosen regularization parameter $\gamma$ and identity operator $Id(Z):=Z$, 
\begin{align}
\beta^{\gamma} = \argmin_{\beta}\mathbb E_{train}[\|(Id-P_Z)(Y-X\beta)\|^2] \label{eq:ar_obj1}\\
+ \gamma \mathbb E_{train}[\|P_Z(Y-X\beta)\|^2].
\label{eq:ar_obj2}
\end{align}
Here, $\gamma\geq 0$ can be thought of the level of dependencies of $Y$ from $Z$ variable\footnote{The smaller $\gamma$ value dashed line, the stronger the dependency, i.e. the more solid dashed line from $Z$ to $Y$.}.
By setting different $\gamma$ values, anchor regression resembles classical settings, i.e. $\gamma=1$ corresponds to OLS, $\beta^1 = \beta^{OLS}$;
$\gamma\to \infty$ corresponds to IV, $\beta^{\to \infty}:= \lim_{\gamma \to \infty}\beta^{\gamma} = \beta^{IV}$;
$\gamma=0$ corresponds to the "partialling out" setting where only residuals between regression of $Z$ to $X$ and $Y$ are of interest.

\subsection*{Kernel-based 
Methods}

Kernel methods employ functions in RKHS to tackle various statistical and machine learning tasks with nonlinear features \citep{hofmann2008kernel}, e.g. kernel ridge regression, support vector machine\citep{scholkopf2018learning,steinwart2008support}, etc.
Functions in RKHS have also been developed to represent and characterize distributions, via kernel mean embedding \citep{muandet2017kernel}. For probability measure $p$, and kernel $k$ associated with RKHS $\H$, the mean embedding $\mu_p := \int k(x,\cdot) dp(x) \in \H$. This notion has been widely used to compare distributions, e.g. via maximum-mean-discrepancy (MMD) \citep{gretton2012kernel}.
With $p$ being a conditional distribution, conditional mean embedding \citep{song2009hilbert} has also been considered for learning and regression problems.
Various techniques have also been developed to formulate and learn operators to manipulate conditional mean embeddings \citep{fukumizu2007kernel,grunewalder2012conditional}.
With the rich representation of nonlinear features, RKHS functions are also applicable of learning distribution directly via distribution regression \citep{szabo2015two, szabo2016learning}.

\section{Kernel Anchor Regression}\label{sec:kar}
To capture the non-linear features in the DAG, 
we kernelize the anchor regression framework by
utilizing the rich feature representation of RKHS functions.
The kernelizing procedure is inspired from kernel instrumental variable (KIV) \citep{singh2019kernel} where the operators are learned for conditional mean embedding in two separate regression stages. The DAG representation is illustrated in \Cref{fig:KAR}.

Let $k_\X: \X \times \X \rightarrow \R$, $k_\Z: \Z \times \Z \rightarrow \R$ be measurable positive definite kernels corresponding to 
RKHS $\H_\X$ and $\H_\Z$. Denote the feature maps
$
\psi: \X \rightarrow \H_\X, x \rightarrow k_\X(x, \cdot)$ and 
$\phi: \Z \rightarrow \H_\Z, z \rightarrow k_\Z(z, \cdot).
$
Let $P_{\z}$ and $Id$ denote the $L_2$-projection on the linear span from the components of $\phi(Z)$ and the identity operator, respectively. Denote $H: \H_\X \to \Y$ as the conditional operator we aim to learn.
Then for $\gamma \geq 0$, define the population-level kernel anchor regression operator $H^{\gamma}$ as
\begin{align}
    {H}^{\gamma}=\argmin_{H}  \Ex[\|(Id-P_{\z})(Y-H\psi(X))\|^2] \nonumber\\
     + \gamma \Ex[\|P_{\z}(Y-H\psi(X))\|^2].
     \label{eq:KAR}
\end{align}
To unravel $P_\z$, both IV and AR estimators applied the two-stage procedure, where the first stage is to estimate the projection operator $P_\z$ and the second stage is to perform the projection adjusted regression.

\begin{figure}[!tb]
  \centering
\centering
\begin{tikzpicture}
    \node[state] (1) {$Z$};
    \node[state] (2) [right =of 1]{$\mathcal{H}_\Z$};
    \node[state] (3) [below =of 1] {$X$};
    \node[state] (4) [right =of 3] {$\mathcal{H}_\X$};
    \node[state] (5) [right =of 4] {$C$}; 
    \node[state] (6) [below =of 4] {$Y$};
    
    \path (1) edge node[above] {$\phi$} (2);
    \path (2) edge node[right] {} (4);
    \path (3) edge node[above] {$\psi$} (4);
    \path (2) edge[dashed] node[above] {} (5);
    \path (5) edge node[above, xshift=0.9cm] {} (4);
    \path (4) edge node[right] {} (6);
    \path (5) edge node[above] {} (6);
    \path (2) edge[dashed, bend left=45] node[above] {} (6);
\end{tikzpicture}
\caption{
DAG representation for kernel anchor regression. 
}
\label{fig:KAR}
\end{figure}
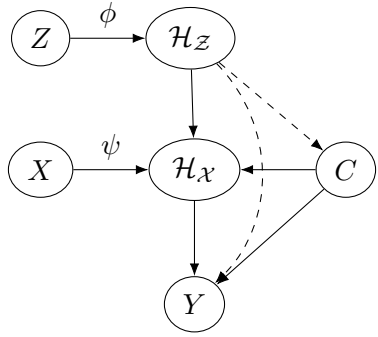

\subsection{Projection Stage}\label{sec:projection}

The projection stage aims to tackle $P_\z$ by
transforming the problem of learning 
$P_\z \psi(X)$ and $P_\z Y$ into two separate
kernel ridge regressions.
Let operators $E_{X}:\H_\Z \to \H_\X$ and $E_{Y}:\H_\Z \to \Y$ be the projections to learn\footnote{We note that due to the explicit dependency from $Z$ to $Y$, $P_\z Y$ needs to be treated separately from $P_\z \psi(X)$. This is different from the IV setting where $P_\z Y = Y$ as the edge from $Z$ to $Y$ in the DAG is absent.}; 
$\alpha_1, \alpha_2 > 0$ be regularization parameters. The  
objectives regularized by Hilbert-Schmidt (HS) norm are
\begin{align}\label{eq:proj_x}
    &\E_{\alpha_1}(E_X) = \Ex
    \Vert \psi(X) - E_X \phi(Z) \Vert_{\H_\X}^2+ \alpha_1 \Vert E_X \Vert_{HS}^2,\\
    &\E_{\alpha_2}(E_Y) = \Ex \Vert Y - E_Y \phi(Z) \Vert_{{\Y}}^2+ \alpha_2 \Vert E_Y \Vert_{HS}^2.\label{eq:proj_y}
\end{align}
Denote 
the optimal operators for the population risks as
$E^p_{\alpha_1,X} =  \argmin
_{E_X} 
\E_{\alpha_1}(E_X)$, and
$E^p_{\alpha_2,Y} = \argmin
_{E_Y} 
\E_{\alpha_2}(E_Y)$.
We then consider two variants of empirical risks and their corresponding estimations. 

\subsubsection{Disjoint sample sets projection}\label{sec:disjoint_proj}
Firstly, we treat two ridge regression in \Cref{eq:proj_x} and  \Cref{eq:proj_y} independently, by using two \textit{disjoint}
sets of samples 
$\S_1 = \{(x_i, z_i)\}_{i \in [n_1]}$ and $\S_2 = \{(y_j, z_j)\}_{j \in [n_2]}$. The empirical forms for \Cref{eq:proj_x} and \Cref{eq:proj_y} are
\begin{align}\label{eq:proj_emp1}
\frac{1}{n_1}\sum_{i \in [n_1]}\Vert \psi(x_i) - E_X \phi(z_i) \Vert_{\H_\X}^2 
+ \alpha_1 \Vert E_X \Vert_{HS}^2, \\
\label{eq:proj_emp2}
\frac{1}{n_2}\sum_{j \in [n_2]} \Vert y_j - E_Y \phi(z_j) \Vert_{\Y}^2 
+ \alpha_2 \Vert E_Y \Vert_{HS}^2&.
\end{align}
Denote by
$\Phi_{1,Z} = (\phi(z_1), \dots, \phi(z_{n_1}))$, 
$\{z_i\}_{i\in[n_1]} \subset \S_1$; 
$\Phi_{2,Z} = (\phi(z_1), \dots, \phi(z_{n_2}))$, $\{z_j\}_{j\in[n_2]} \subset \S_{2}$; 
their 
corresponding 
gram matrices
$K_{1,ZZ} = \Phi_{1,Z}^\T\Phi_{1,Z} \in \R^{n_1 \times n_1}$ and $K_{2,ZZ} = \Phi_{2,Z}^\T\Phi_{2,Z}\in \R^{n_2 \times n_2}$. 
Denote $\Psi_{1,X}=(\psi(x_1), \dots, \psi(x_{n_1}))$, $\{x_i\}_{i\in[n_1]} \subset \S_1$ and $Y_2=(y_1,\dots, y_{n_2})$, $\{y_j\}_{j\in[n_2]}\subset \S_2$. 
By the standard regression formula, the optimal operators to minimize \Cref{eq:proj_emp1} and \Cref{eq:proj_emp2} are
\begin{align}\label{eq:est_proj_x1}
    E_{\alpha_1, X}^{n_1} &= \Psi_{1,X}(K_{1,ZZ} + n_1\alpha_1 I)^{-1}\Phi_{1,Z}^\T, \\
    E_{\alpha_2, Y}^{n_2} &= Y_2 (K_{2,ZZ} + n_2 \alpha_2 I)^{-1}\Phi_{2,Z}^\T,
    \label{eq:est_proj_y1}
\end{align}
where the superscripts $n_1, n_2$ explicitly reveal sample sizes.
We note that the  projections $P_{\phi(Z)}$ are estimated differently
for $P_{\phi(Z)}\psi(X)$ and $P_{\phi(Z)}Y$, through $(K_{1,ZZ} + n_1\alpha_1 I)^{-1}$ and $(K_{2,ZZ} + n_2 \alpha_2 I)^{-1}$, respectively.
$K_{1,ZZ}$ and $K_{2,ZZ}$ are independent due to the use of disjoint i.i.d. sample sets of $Z$.

\subsubsection{Joint sample set projection}\label{sec:joint_proj}
On the other hand, we can also consider the projection analogous to \citet{rothenhausler2018anchor} where we jointly consider the samples used for both projections, i.e. projecting onto the same $\phi(Z)$ subspace. 
Setting $n = n_1 + n_2$ 
and $\alpha_1 = \alpha_2 = \alpha$, we consider the joint sample set $\S=\{(x_i, y_i, z_i)\}_{i \in [n]}$ and the empirical risks
\begin{eqnarray}\label{eq:proj}
    \frac{1}{n}\sum_{i \in [n]}\Vert \psi(x_i) - E_X \phi(z_i) \Vert_{\H_\X}^2+ \alpha \Vert E_X \Vert_{HS}^2, \\
    \frac{1}{n}\sum_{i \in [n]}\frac{1}{n}\Vert y_i - E_Y \phi(z_i) \Vert_{\Y}^2+ \alpha \Vert E_Y \Vert_{HS}^2.
\end{eqnarray}

Denote $K_{ZZ}  \in \R^{n \times n}$ as the gram matrix from  
$\{z_i\}_{i \in [n]}\subset \S $; 
$\Phi_{Z} = (\phi(z_1), \dots$, $\phi(z_n)), \{z_i\}_{i\in[n]} \subset \S$; $\Psi_{X}=(\psi(x_1), \dots, \psi(x_{n})), \{x_i\}_{i\in[n]} \subset \S$ and $Y=(y_1,\dots, y_{n}), y_i\in \S$. 
Then we have
\begin{align}\label{eq:est_proj_x2}
    E_{\alpha, X}^{n} &= \Psi_{X}(K_{ZZ} + n\alpha I)^{-1}\Phi_{Z}^\T, \\
    E_{\alpha, Y}^{n} &= Y^\T (K_{ZZ} + n \alpha I)^{-1}\Phi_{Z}^\T.
    \label{eq:est_proj_y2}
\end{align}
By setting the same level of regularisation, we can see that the $P_{\phi(Z)}$ projection, through $(K_{ZZ} + n\alpha I)^{-1}$, are the same for $P_{\phi(Z)}\psi(X)$ and $P_{\phi(Z)}Y$.


\subsection{Regression Stage}\label{sec:regression}
With the learned projections $P_{\phi(Z)}\psi(X)$ and $P_{\phi(Z)}Y$, we can now tackle the overall objective in \Cref{eq:KAR}.

Denote $\E(E_X)$ and $\E(E_Y)$ as the unregularized version of \Cref{eq:proj_x} and \Cref{eq:proj_y}; $E^p_X$ and $E^p_Y$ their corresponding optimal operators, respectively. 
For given 
$\gamma$, define the transformed input and output as 
\begin{equation}\label{eq:transform_x}
 \psi_\gamma(X) = \psi(X) - E^p_{X} \phi(Z) + \sqrt{\gamma} E^p_{X} \phi(Z) \in \H_\X,    
\end{equation}
\begin{equation}\label{eq:transform_y}
Y_\gamma = Y - E^p_{Y} \phi(Z) + \sqrt{\gamma} E^p_{Y} \phi(Z) \in \Y.
\end{equation} 
\begin{proposition}[Equivalence]
Let $H:\H_\X \to \Y$, and 
consider the regression of transformed output in \Cref{eq:transform_y} on transformed input in \Cref{eq:transform_x}
\begin{equation}\label{eq:transform_obj}
    \E^\gamma(H) = \Ex_{(Z,X,Y)} \Vert Y_\gamma - H \psi_\gamma(X) \Vert_\Y^2.
\end{equation}
The solution to \Cref{eq:transform_obj} is equivalent to the KAR estimator in \Cref{eq:KAR}, i.e.
$H^\gamma = \argmin_{H} \E^\gamma(H).$
\end{proposition}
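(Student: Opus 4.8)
The plan is to show that the objective $\E^\gamma(H)$ in \Cref{eq:transform_obj} coincides, up to terms that do not depend on $H$, with the KAR objective in \Cref{eq:KAR}, so that they share the same minimizer. The key observation is that the $L_2$-projection $P_\z$ acts on $\psi(X)$ and $Y$ exactly by the operators $E_X^p$ and $E_Y^p$: since $E_X^p = \argmin_{E_X}\E(E_X) = \argmin_{E_X}\Ex\|\psi(X)-E_X\phi(Z)\|^2$ is the population least-squares operator, we have $E_X^p\phi(Z) = P_\z\psi(X)$ almost surely, and likewise $E_Y^p\phi(Z) = P_\z Y$. Consequently $(Id-P_\z)\psi(X) = \psi(X)-E_X^p\phi(Z)$ and $P_\z\psi(X) = E_X^p\phi(Z)$, and similarly for $Y$. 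With this identification, the transformed quantities in \Cref{eq:transform_x} and \Cref{eq:transform_y} read $\psi_\gamma(X) = (Id-P_\z)\psi(X) + \sqrt\gamma\, P_\z\psi(X)$ and $Y_\gamma = (Id-P_\z)Y + \sqrt\gamma\, P_\z Y$.

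Next I would expand the residual $Y_\gamma - H\psi_\gamma(X)$ using linearity of $H$ and of $Id-P_\z$ and $P_\z$:
\begin{align}
Y_\gamma - H\psi_\gamma(X) &= (Id-P_\z)\bigl(Y - H\psi(X)\bigr) + \sqrt\gamma\, P_\z\bigl(Y - H\psi(X)\bigr). \nonumber
\end{align}
Here I use that $H$ commutes with the (scalar-valued, or $L_2$) projection in the relevant sense — more precisely, that $H P_\z\psi(X)$ and $P_\z(H\psi(X))$ agree because $P_\z$ is the $L_2$ projection onto the closed span of the coordinates of $\phi(Z)$ and $H$ is a bounded linear operator into $\Y$, so applying $H$ after projecting equals projecting after applying $H$ when $\Y$ is finite-dimensional (or a separable Hilbert space and the projection is understood coordinatewise). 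I would state this commutation explicitly as the one nontrivial algebraic fact. Taking the squared $\Y$-norm and the expectation, the cross term vanishes by orthogonality: $(Id-P_\z)W \perp P_\z W$ in $L_2$ for any $W$, so
\begin{align}
\E^\gamma(H) &= \Ex\|(Id-P_\z)(Y-H\psi(X))\|^2 + \gamma\,\Ex\|P_\z(Y-H\psi(X))\|^2, \nonumber
\end{align}
which is exactly the KAR objective \Cref{eq:KAR}. Since the two objectives are literally equal (not merely equal up to an $H$-independent constant), their $\argmin$ sets coincide, giving $H^\gamma = \argmin_H \E^\gamma(H)$.

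The main obstacle — really the only place requiring care — is justifying the two structural claims cleanly: first, that the unregularized population minimizers $E_X^p, E_Y^p$ realize the $L_2$ projection $P_\z$ (this is the defining variational characterization of conditional-mean / least-squares projection, but one should note the minimizer may only be defined on the closure of the range of the relevant covariance operator, so "$E_X^p\phi(Z) = P_\z\psi(X)$ a.s." should be read as equality of the projected random elements rather than of operators); and second, the commutation $H P_\z = P_\z H$ on $\psi(X)$. Both are standard once the function spaces and the meaning of $P_\z$ as a coordinatewise $L_2$-projection are pinned down, so I would handle them with a short lemma or an explicit remark and then let the orthogonal decomposition do the rest. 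Everything else is the Pythagorean identity and linearity.
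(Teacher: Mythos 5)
Your argument is correct and is essentially the paper's own proof spelled out: the paper proves the proposition by expanding $E^p_X\phi(Z)=P_\z\psi(X)$ and $E^p_Y\phi(Z)=P_\z Y$ and following the linear anchor-regression calculation of Rothenh\"ausler et al., which is exactly your decomposition $Y_\gamma-H\psi_\gamma(X)=(Id-P_\z)(Y-H\psi(X))+\sqrt{\gamma}\,P_\z(Y-H\psi(X))$ plus the Pythagorean identity. Your explicit flagging of the commutation $HP_\z=P_\z H$ and of the identification of the population least-squares operators with the $L_2$-projection are the right points of care, and they match what the paper leaves implicit.
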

The proof is by expanding the projection operator $E^p_X$ and $E^p_Y$, which is similar to the linear case in \citet{rothenhausler2018anchor}.


With regularization parameter $\xi \geq 0$, \Cref{eq:transform_obj} has the kernel ridge regression form defined as
\begin{equation}\label{eq:transform_reg}
        \E_\xi^\gamma(H) = \Ex_{(Z,X,Y)} \Vert Y_\gamma - H \psi_\gamma(X) \Vert_\Y^2 + \xi \Vert H \Vert_{HS}^2.
\end{equation}
The regression stage is formulated regardless how the projections are estimated in \Cref{sec:projection}. For the empirical version, we consider the estimated operators 
$\widehat E_X \in \{ E_{\alpha_1, X}^{n_1},  E_{\alpha, X}^{n}\}$ 
and $\widehat E_Y \in \{ E_{\alpha_2, Y}^{n_2}, E_{\alpha, Y}^{n}\}$. 
With samples $\S^m=\{(x_l, y_l, z_l)\}_{l \in [m]}$, 
we compute the transformed inputs and outputs as $$\widehat \psi_{\gamma,l}(x) = \psi(x_{l}) + (\sqrt{\gamma}-1) \widehat E_{X}  \phi(z_{l}) \in \H_\X,$$ 
$$\widehat y_{\gamma,l} = y_{l} + (\sqrt{\gamma}-1) \widehat E_{Y} \phi(z_{l}) \in \Y.$$ 
The empirical risk has the form
\begin{equation}\label{eq:transform_reg}
        \widehat \E_\xi^{\gamma,m}(H) = \frac{1}{m}\sum_{l \in [m]}\Vert \widehat y_{\gamma,l} - H \widehat \psi_{\gamma,l}(x) \Vert_\Y^2 + \xi \Vert H \Vert_{HS}^2,
\end{equation}
$$
\widehat H_\xi^{\gamma,m} = \argmin \widehat \E_\xi^{\gamma,m}(H).
$$
Denote $\widehat \Psi_{\gamma} = (\widehat \psi_{\gamma,1}(x),\dots,\widehat \psi_{\gamma,m}(x))$ and 
its gram matrix $K_{\widehat \Psi_\gamma  \widehat \Psi_\gamma}=\widehat \Psi_\gamma^{\T}\widehat \Psi_\gamma \in \R^{m\times m}$; $\widehat Y_\gamma=(\widehat y_{\gamma,1},\dots,\widehat y_{\gamma,m})$. Again, by standard regression formula,
\begin{equation}\label{eq:KAR_estimator}
    \widehat H_\xi^{\gamma,m} = \widehat{Y}_\gamma (K_{\widehat \Psi_\gamma  \widehat \Psi_\gamma} + m\xi I)^{-1} \widehat \Psi_\gamma^\T.
\end{equation}

\subsection{KAR Estimator}
Given observational data of size $N$, $\{(x_i, y_i, z_i)\}_{i \in [N]}$, the KAR procedure can be performed in two ways based on the two variants in the projection stage.
\paragraph{Three-stage KAR}
To apply the disjoint sample sets projection in \Cref{sec:disjoint_proj}, we \emph{randomly} split the data set of size $N$ into three disjoint sets of sample size $n_1,n_2,m$ where $N=n_1 + n_2 + m$ 
and re-index them from $\{1:N\}$. The first two sets of data $\{(x_i, z_i)\}_{i \in \{1 : n_1\}}$ and  $\{(y_j, z_j)\}_{j \in \{n_1+1 : n_1+n_2\}}$ are used for learning the projection operators in \Cref{eq:est_proj_x1} and \Cref{eq:est_proj_y1}.{ 
We note that 
samples $\{y_i\}_{i\in\{1:n_1\}}$ and $\{x_j\}_{j\in\{n_1+1:n_1+n_2\}}$ are not used.}
The third set $\{(x_l, y_l, z_l)\}_{l \in \{n_1+n_2+1 : N\}}$ is used for regression stage to learn $ \widehat H_\xi^{\gamma,m}$ in \Cref{eq:KAR_estimator}. 
This procedure, termed \textbf{KAR}, includes solving three different regression problems,
which is different from the two-stage settings used in linear anchor regression \citep{rothenhausler2018anchor}.

\paragraph{Two-stage KAR}
For the joint sample set projection in \Cref{sec:joint_proj}, we only split the data of size $N$ into two disjoint sets \emph{randomly} of size $n$ and $m$ where $N = n + m$ and re-index them such that $\{(x_i, y_i, z_i)\}_{i \in \{1 : n\}}$ and $\{(x_l, y_l, z_l)\}_{l \in \{n+1 : N\}}$.
The first set is then grouped into $\{(x_i, z_i)\}_{i \in \{1 : n\}}$ and  $\{(y_i, z_i)\}_{j \in \{1:n\}}$ to learn the projection operators in \Cref{eq:est_proj_x1} and \Cref{eq:est_proj_y1}.{ 
In this manner, $\{z_i\}_{i\in\{1:n\}}$ are used twice.} 
The second set $\{(x_l, y_l, z_l)\}_{l \in \{n+1 : N\}}$ is used for regression stage to learn $ \widehat H_\xi^{\gamma,m}$ in \Cref{eq:KAR_estimator}, which is the same as the three-stage procedure above.  
This procedure, termed \textbf{KAR.2}, replicates the 2SLS used in KIV \citep{singh2019kernel} and linear anchor regression \citep{rothenhausler2018anchor}. 


\section{
Analysis of KAR Estimators}\label{sec:analysis}

\subsection{Consistency}
In this section, we first focus on the three-stage KAR procedure with disjoint sample sets for projection in \Cref{sec:disjoint_proj}.
The closed form solutions and convergence rates of the estimators are extended from the analysis of 2SLS in KIV \citep{singh2019kernel}. 
We follow the integral operator notations in \citet{singh2019kernel}. Define the 
projection stage operators as
\begin{eqnarray*}
S_1^* &:& \H_\Z \rightarrow L^2(\Z,\rho_\Z), \quad
 l \rightarrow \langle l, \phi(\cdot)\rangle_{\H_\Z},\\
S_1 &:& L^2(\Z,\rho_\Z) \rightarrow \H_\Z, \quad \tilde l \rightarrow \int \phi(z)\tilde l(z) d\rho_\Z(z),
\end{eqnarray*}
where $\rho$ denotes the joint distribution of $(Z, X, Y)$. $L^2(\Z,\rho_\Z)$ denotes the space of square integrable functions from $\Z$ to $\Y$ with respect to measure $\rho_\Z$, where $\rho_\Z$ is the restriction of $\rho$ to $\Z$. $T_1 = T_2 = S_1 \circ S_1^*$ are then uncentered covariance operators.
%
We define the power of operator $T_1$ with respect to its eigendecomposition. Let $\H_\Gamma = \H_\X\otimes\H_\Z$, $\H_\Theta = \Y \otimes\H_\Z$ and $\H_\Omega = \Y \otimes \H_\X$ be the relevant tensor product spaces for the operators.

\begin{condition}\label{cond::s1}
    
    
    
    (i) $\X$ and $\Z$ are Polish, i.e. separable and completely metrizable topological spaces.
(ii) $k_\X$ and $k_\Z$ are continuous and bounded: 
    $\sup_{x \in \X} \Vert \psi(x) \Vert_{\H_\X} \leq Q_1$, $\sup_{z \in \Z} \Vert \phi(z) \Vert_{\H_\Z} \leq \kappa$.
    (iii) $\psi$ and $\phi$ are measurable.
    (iv) $k_\X$ is characteristic.
    (v) $E_{X}^p \in \H_{\Gamma}$ s.t. 
    $\E(E_{X}^p) = \inf_{E_X \in \H_{\Gamma}} \E(E_{X})$.
    (vi) Fix $\zeta_1 < \infty$. For 
    $c_1 \in (1,2]$, define the prior $\mathcal P (\zeta_1, c_1)$ as the set of probability distributions $\rho$ on $\X \times \Z$ s.t.
    $\exists G_1 \in \H_\Gamma$ s.t. 
    $E_{X}^p = T_1^{(c_1-1)/2} \circ G_1$ and $\Vert G_1 \Vert_{\H_\Gamma}^2 \leq \zeta_1$.
\end{condition}

Condition~\ref{cond::s1} 
is adapted from \citet{singh2019kernel} to bound the approximation error of the regularized estimator $E_{\alpha_1,X}^{n_1}$. 
Parameter $c_1$ suggests the smoothness of conditional 
operator $E_{\alpha_1,X}^{n_1}$. A larger $c_1$ corresponds to a smoother operator.

\begin{lemma}\label{lem::s1}
$\forall \alpha_1 > 0$, the solution $E_{\alpha_1,X}^{n_1}$ of the regularized empirical objective 
in \Cref{eq:proj_emp1} exists and is unique. With $
    \mathbf{T}_1 = \frac{1}{n_1} \sumni \phi(z_{i}) \otimes \phi(z_{i})$ and
    $\mathbf{g}_1 = \frac{1}{n_1} \sumni \phi(z_{i}) \otimes \psi(x_{i})$, the estimator in \Cref{eq:est_proj_x1} has the form
    $    E_{\alpha_1,X}^{n_1} = (\mathbf{T}_1 + \alpha_1)^{-1} \circ \mathbf{g}_1.$

Under Condition~\ref{cond::s1} and $\alpha_1  = n_1^{-1/(c_1+1)}$, we have: 
\begin{eqnarray*}
    \Vert E_{\alpha_1,X}^{n_1} - E_{X}^p \Vert_{\H_\Gamma} = O_p(n_1^{-\frac{c_1-1}{2(c_1+1)}}).
\end{eqnarray*}
\end{lemma}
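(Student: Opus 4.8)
The plan is to mirror the integral-operator analysis of two-stage kernel regression in \citet{singh2019kernel}, specialized to the projection objective \Cref{eq:proj_emp1}. I would first dispose of existence, uniqueness and the closed form. The functional $E_X\mapsto \frac{1}{n_1}\sum_{i\in[n_1]}\Vert\psi(x_i)-E_X\phi(z_i)\Vert_{\H_\X}^2+\alpha_1\Vert E_X\Vert_{HS}^2$ is continuous and, since $\alpha_1>0$, strongly convex on $\H_\Gamma\cong\mathrm{HS}(\H_\Z,\H_\X)$, hence has a unique minimizer; setting its Fréchet derivative to zero gives the normal equation $(\mathbf{T}_1+\alpha_1)E_X=\mathbf{g}_1$ in $\H_\Gamma$, and since $\mathbf{T}_1\succeq 0$ the operator $\mathbf{T}_1+\alpha_1$ is boundedly invertible, yielding $E_{\alpha_1,X}^{n_1}=(\mathbf{T}_1+\alpha_1)^{-1}\circ\mathbf{g}_1$. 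The sample form \Cref{eq:est_proj_x1} then follows from the standard reproducing-kernel ``push-through'' identity.

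For the rate I would work with the population regularized operator $E^p_{\alpha_1,X}=(T_1+\alpha_1)^{-1}g_1$, where $g_1=T_1\circ E^p_X$ is the population right-hand side; this last identity is precisely the unregularized normal equation, whose solvability in $\H_\Gamma$ is granted by Condition~\ref{cond::s1}(v). Then split
\begin{equation*}
E_{\alpha_1,X}^{n_1}-E^p_X=\underbrace{\big(E_{\alpha_1,X}^{n_1}-E^p_{\alpha_1,X}\big)}_{\text{sampling error}}+\underbrace{\big(E^p_{\alpha_1,X}-E^p_X\big)}_{\text{regularization bias}}.
\end{equation*}
The bias is handled by spectral calculus: $E^p_{\alpha_1,X}-E^p_X=-\alpha_1(T_1+\alpha_1)^{-1}E^p_X$, and substituting the source condition $E^p_X=T_1^{(c_1-1)/2}G_1$ with $\Vert G_1\Vert_{\H_\Gamma}^2\le\zeta_1$ (Condition~\ref{cond::s1}(vi)) together with the scalar bound $\sup_{t\ge0}\alpha_1 t^{(c_1-1)/2}/(t+\alpha_1)\le\alpha_1^{(c_1-1)/2}$, which holds because $(c_1-1)/2\le 1$, gives $\Vert E^p_{\alpha_1,X}-E^p_X\Vert_{\H_\Gamma}=O(\alpha_1^{(c_1-1)/2})$.

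For the sampling error I would expand with the resolvent identity
\begin{equation*}
E_{\alpha_1,X}^{n_1}-E^p_{\alpha_1,X}=(\mathbf{T}_1+\alpha_1)^{-1}(\mathbf{g}_1-g_1)+(\mathbf{T}_1+\alpha_1)^{-1}(T_1-\mathbf{T}_1)\,E^p_{\alpha_1,X},
\end{equation*}
use $\Vert(\mathbf{T}_1+\alpha_1)^{-1}\Vert_{op}\le\alpha_1^{-1}$ and $\Vert E^p_{\alpha_1,X}\Vert_{\H_\Gamma}\le\Vert E^p_X\Vert_{\H_\Gamma}$, and control the deviations $\Vert\mathbf{g}_1-g_1\Vert_{\H_\Gamma}$ and $\Vert\mathbf{T}_1-T_1\Vert_{op}\le\Vert\mathbf{T}_1-T_1\Vert_{HS}$, each an empirical mean of i.i.d.\ bounded Hilbert-space-valued elements by the boundedness in Condition~\ref{cond::s1}(ii)--(iii), via a Bernstein inequality in Hilbert space; this yields a sampling error of order $O_p(\alpha_1^{-1}n_1^{-1/2})$. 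Combining the two pieces gives $\Vert E_{\alpha_1,X}^{n_1}-E^p_X\Vert_{\H_\Gamma}=O_p(\alpha_1^{(c_1-1)/2}+\alpha_1^{-1}n_1^{-1/2})$, and the choice $\alpha_1=n_1^{-1/(c_1+1)}$ balances the two terms at $n_1^{-(c_1-1)/(2(c_1+1))}$, which is the claim.

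I expect the main obstacle to be the Hilbert-space concentration step: obtaining exponential tail bounds on $\mathbf{g}_1-g_1$ and $\mathbf{T}_1-T_1$ requires the boundedness assumptions and a Bernstein inequality for Hilbert--Schmidt-valued random variables, and one must keep the operator bookkeeping honest — the identification $\H_\Gamma\cong\mathrm{HS}(\H_\Z,\H_\X)$, the side on which the covariance operators $\mathbf{T}_1$ and $T_1$ act, and the domain and range conditions that legitimize the fractional-power spectral calculus used in the bias bound. Since all of these ingredients are already in place in \citet{singh2019kernel}, the remaining work is essentially a transcription into the present notation.
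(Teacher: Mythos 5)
Your proposal is correct and follows essentially the same route as the paper: the paper proves Lemma~\ref{lem::s1} by deferring to Theorem~2 of \citet{singh2019kernel} (restated as Lemma~\ref{lem::as1} in the appendix), which is precisely the Smale--Zhou style bias/sampling-error decomposition under the source condition of Condition~\ref{cond::s1}(vi) that you reconstruct, with the rate obtained by balancing $\alpha_1^{(c_1-1)/2}$ against $\alpha_1^{-1}n_1^{-1/2}$ at $\alpha_1=n_1^{-1/(c_1+1)}$. No substantive gap; your write-up just makes explicit the concentration and spectral-calculus steps that the paper imports by citation.
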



Lemma~\ref{lem::s1} follows from \citet{singh2019kernel}, and shows that the efficient rate of $\alpha_1$ is $n_1^{-1/(1+c_1)}$. Note that the convergence rate of $E_{\alpha_1,X}^{n_1}$ is calibrated by $c_1$, which measures the smoothness of the conditional expectation operator $E_{X}$.

For the disjoint set projection in \Cref{sec:disjoint_proj},
the closed form solution and convergence rate for learning $P_{\z}Y$ estimator is similar to that of learning $P_\z \psi(X)$ due to the independent estimation procedure and further requires the following conditions.

\begin{condition}\label{cond::s2}
    
    \textup{(i)} $\Y$ is a Polish space.
    \textup{(ii)} $Y$ is bounded: $\sup_{y \in \Y} \Vert y \Vert_{\Y} \leq Q_2$.
    \textup{(iii)} $E_{Y}^p \in \H_{\Theta}$ s.t. 
    $\E(E_{Y}^p) = \inf_{E_Y \in \H_{\Theta}} \E(E_Y)$.
    \textup{(iv)} Fix $\zeta_2 < \infty$. For 
    $c_2 \in (1,2]$, define the prior $\mathcal P (\zeta_2, c_2)$ as the set of probability distributions $\rho$ on $\Y \times \Z$ s.t. 
    $\exists G_2 \in \H_\Theta$ s.t. 
    $E_{Y}^p = T_2^{(c_2-1)/{2}} \circ G_2$ and $\Vert G_2 \Vert_{\H_\Theta}^2 \leq \zeta_2$.
\end{condition}

\begin{lemma}\label{lem:s2}
$\forall \alpha_2 > 0$, the solution $E_{\alpha_2,Y}^{n_2}$ of the regularized empirical objective 
in \Cref{eq:proj_emp2} exists and is unique. With $\mathbf{T}_2 = \frac{1}{n_2} \sumnj \phi(z_{j}) \otimes \phi(z_{j})$ and
    $\mathbf{g}_2 = \frac{1}{n_2} \sumnj \phi(z_{j}) y_{j}$, the estimator in \Cref{eq:est_proj_x1} has the form
    $    E_{\alpha_2,Y}^{n_2} = (\mathbf{T}_2 + \alpha_2)^{-1} \circ \mathbf{g}_2.
    $
Under Condition~\ref{cond::s1}--~\ref{cond::s2} and $\alpha_2  = n_2^{-1/(c_2+1)}$, we have:
\begin{eqnarray*}
    \Vert E_{\alpha_2,Y}^{n_2} - E_{Y}^p \Vert_{\H_\Theta} = O_p(n_2^{-\frac{c_2-1}{2(c_2+1)}}).
\end{eqnarray*}
\end{lemma}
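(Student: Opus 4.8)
The plan is to run, essentially verbatim, the argument behind Lemma~\ref{lem::s1} — i.e.\ the stage-one analysis of KIV in \citet{singh2019kernel} — the only structural change being that the regression target $Y$ now takes values in the Hilbert space $\Y$ rather than in $\H_\X$. The uniform bound $\sup_{y\in\Y}\|y\|_\Y\le Q_2$ of Condition~\ref{cond::s2}(ii) will play the role that $\sup_{x\in\X}\|\psi(x)\|_{\H_\X}\le Q_1$ played in Lemma~\ref{lem::s1}, so no genuinely new ingredient is needed; Condition~\ref{cond::s1} is re-used only through the shared bound $\|\phi(z)\|_{\H_\Z}\le\kappa$ (which makes $T_2$ bounded, self-adjoint and trace class, so that $T_2^{(c_2-1)/2}$ and $(T_2+\alpha_2)^{-1}$ are well defined) and the Polish/measurability hypotheses on $\Z$.

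First I would recast \Cref{eq:proj_emp2} as a vector-valued ridge regression over the Hilbert space $\H_\Theta=\Y\otimes\H_\Z$ of Hilbert–Schmidt operators $\H_\Z\to\Y$: the functional $E_Y\mapsto\frac1{n_2}\sum_{j\in[n_2]}\|y_j-E_Y\phi(z_j)\|_\Y^2+\alpha_2\|E_Y\|_{HS}^2$ is a continuous, strongly convex (thanks to $\alpha_2>0$) quadratic on $\H_\Theta$, hence has a unique minimizer; setting its Fréchet derivative to zero yields the normal equation $(\mathbf T_2+\alpha_2)E_{\alpha_2,Y}^{n_2}=\mathbf g_2$, i.e.\ the stated closed form $E_{\alpha_2,Y}^{n_2}=(\mathbf T_2+\alpha_2)^{-1}\circ\mathbf g_2$. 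Identifying this with the representer form \Cref{eq:est_proj_y1} is the standard push-through identity relating $(\mathbf T_2+\alpha_2)^{-1}$ to $(K_{2,ZZ}+n_2\alpha_2I)^{-1}$, exactly as in the derivation of \Cref{eq:est_proj_y1}.

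For the convergence rate I would introduce the population limit $g_2^p:=\Ex[\phi(Z)Y]\in\H_\Theta$ of $\mathbf g_2$ and the population-regularized operator $E_{\alpha_2,Y}^p:=(T_2+\alpha_2)^{-1}\circ g_2^p$, and split the error as
\[
E_{\alpha_2,Y}^{n_2}-E_Y^p=\underbrace{\big(E_{\alpha_2,Y}^{n_2}-E_{\alpha_2,Y}^p\big)}_{\text{sampling error}}+\underbrace{\big(E_{\alpha_2,Y}^p-E_Y^p\big)}_{\text{approximation error}}.
\]
Since \Cref{eq:proj_y} is well specified (Condition~\ref{cond::s2}(iii)), the population normal equation gives $g_2^p=T_2E_Y^p$, whence $E_{\alpha_2,Y}^p-E_Y^p=-\alpha_2(T_2+\alpha_2)^{-1}E_Y^p$; substituting the source condition $E_Y^p=T_2^{(c_2-1)/2}G_2$ with $\|G_2\|_{\H_\Theta}^2\le\zeta_2$ (Condition~\ref{cond::s2}(iv)) and the elementary spectral estimate $\sup_{t\ge0}\alpha_2 t^{(c_2-1)/2}/(t+\alpha_2)\le\alpha_2^{(c_2-1)/2}$ (valid because $(c_2-1)/2\in(0,\tfrac12]$) bounds the approximation error by $\sqrt{\zeta_2}\,\alpha_2^{(c_2-1)/2}$. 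For the sampling error I would use the resolvent identity
\[
E_{\alpha_2,Y}^{n_2}-E_{\alpha_2,Y}^p=(\mathbf T_2+\alpha_2)^{-1}(\mathbf g_2-g_2^p)+(\mathbf T_2+\alpha_2)^{-1}(T_2-\mathbf T_2)E_{\alpha_2,Y}^p
\]
together with $\|(\mathbf T_2+\alpha_2)^{-1}\|\le\alpha_2^{-1}$, the crude bound $\|E_{\alpha_2,Y}^p\|_{\H_\Theta}\le\|E_Y^p\|_{\H_\Theta}+\sqrt{\zeta_2}\,\alpha_2^{(c_2-1)/2}=O(1)$, and Hilbert-space Bernstein inequalities for i.i.d.\ bounded sums giving $\|\mathbf g_2-g_2^p\|_{\H_\Theta}=O_p(n_2^{-1/2})$ and $\|\mathbf T_2-T_2\|_{HS}=O_p(n_2^{-1/2})$ — where the summand bounds $\|\phi(z_j)y_j\|_{\H_\Theta}\le\kappa Q_2$ and $\|\phi(z_j)\otimes\phi(z_j)\|_{HS}\le\kappa^2$ come from Conditions~\ref{cond::s1}(ii) and \ref{cond::s2}(ii). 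This yields $\|E_{\alpha_2,Y}^{n_2}-E_{\alpha_2,Y}^p\|_{\H_\Theta}=O_p(\alpha_2^{-1}n_2^{-1/2})$, so altogether $\|E_{\alpha_2,Y}^{n_2}-E_Y^p\|_{\H_\Theta}=O_p\big(\alpha_2^{(c_2-1)/2}+\alpha_2^{-1}n_2^{-1/2}\big)$; the choice $\alpha_2=n_2^{-1/(c_2+1)}$ equates the two terms at $n_2^{-\frac{c_2-1}{2(c_2+1)}}$, which is the asserted rate.

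The step I expect to be the main obstacle is the concentration part: making the two Hilbert-space Bernstein bounds fully rigorous and, in particular, verifying that their implied constants depend only on $\kappa,Q_2,\zeta_2,c_2$ and not on the unknown law $\rho$, so that the rate holds uniformly over the prior class $\mathcal{P}(\zeta_2,c_2)$. The remaining pieces are deterministic spectral calculus transported directly from Lemma~\ref{lem::s1}. I also note that, because \Cref{sec:disjoint_proj} builds $\S_2$ from samples disjoint from those used for $E_{\alpha_1,X}^{n_1}$, the two projection-stage estimators are stochastically independent, so no cross-stage dependence has to be controlled in this lemma.
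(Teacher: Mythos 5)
Your proposal is correct and takes essentially the same route as the paper, which handles Lemma~\ref{lem:s2} by adapting the stage-one ridge-regression analysis of \citet{singh2019kernel} (the explicit bound is recorded as Lemma~\ref{lem:as2} in the appendix) with $Q_2$ and Condition~\ref{cond::s2} playing the roles of $Q_1$ and the corresponding stage-one hypotheses; your existence/normal-equation step, sampling/approximation split, source-condition bound $\sqrt{\zeta_2}\,\alpha_2^{(c_2-1)/2}$, and Bernstein-type $O_p(\alpha_2^{-1}n_2^{-1/2})$ sampling term are exactly that argument and balance to the stated rate. The uniformity-of-constants concern you flag is not a real obstacle: under the source condition $\Vert E_Y^p\Vert_{\H_\Theta}\le \Vert T_2\Vert^{(c_2-1)/2}\sqrt{\zeta_2}\le \kappa^{c_2-1}\sqrt{\zeta_2}$, so all constants depend only on $\kappa, Q_2, \zeta_2, c_2$, in line with the explicit bound $r_{E_2}$ in the appendix.
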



Similar to learning projection $P_\z \psi(X)$, the efficient rate of $\alpha_2$ is $n_2^{-1/(1+c_2)}$, where $c_2$ measures the smoothness of the conditional expectation operator $E_{Y}$.

Let $L^2(\H_\X,\rho_{\H_\X})$ denote the space of square integrable functions from $\H_\X$ to $\Y$ with respect to measure $\rho_{\H_\X}$, where $\rho_{\H_\X}$ is the extension of $\rho$ to $\H_\X$. Define the regression stage operator as 
\begin{eqnarray*}
S^* &:& \H_\Omega \rightarrow L^2(\H_\X,\rho_{\H_\X}), \quad 
H \rightarrow \Omega^*_{(\cdot)} H,\\
S &:& L^2(\H_\X,\rho_{\H_\X}) \rightarrow \H_\Omega, \\ 
&&\tilde H \rightarrow \int \Omega_{\psi_\gamma} \circ \tilde H \psi_\gamma d \rho_{\H_\X}(\psi_\gamma),
\end{eqnarray*}
where $\Omega_{\psi_\gamma}: \Y \rightarrow \H_\Omega$ defined by $y \rightarrow \Omega (\cdot, \psi_\gamma)y$ is the point evaluator of  \citet{micchelli2005learning}. 
Define $T_{\psi_\gamma} = \Omega_{\psi_\gamma} \circ \Omega^*_{\psi_\gamma}$ and covariance operator $T=S \circ S^*$. Define the power of operator $T$ with respect to its eigendecomposition. Condition~\ref{cond::s3} below extends hypothesis 7--9 in \citet{singh2019kernel}, and is sufficient to bound the excess error of 
$\widehat H_{\xi}^{\gamma, m}$ with 
the error propagated from the estimators in the projection stage. 

\begin{condition}\label{cond::s3}
\begin{itemize}
    \item[\textup{(i)}] The $\{ \Omega_{\psi_\gamma}\}$ operator family is uniformly bounded in Hilbert-Schmidt norm: $\exists B$ s.t. $\forall \psi_\gamma$, 
    $\Vert \Omega_{\psi_\gamma} \Vert_{\L_2(\Y,\H_\Omega)}^2 = Tr( \Omega^*_{\psi_\gamma} \circ \Omega_{\psi_\gamma}) \leq B$. 
    \item[\textup{(ii)}] The $\{ \Omega_{\psi_\gamma}\}$ operator family is is Hölder continuous in operator norm: $\exists L > 0, \iota \in (0,1]$ s.t. $\forall \psi_\gamma, \psi_\gamma^\prime, \Vert \Omega_{\psi_\gamma} - \Omega_{\psi_\gamma^\prime} \Vert_{\L(\Y,\H_\Omega)} \leq L \Vert \psi_\gamma - \psi_\gamma^\prime \Vert_{\H_\X}^\iota$.
    \item[\textup{(iii)}] $H^\gamma \in \H_\Omega$, then $\E^\gamma(H^\gamma) = \inf_{H \in \H_\Omega} \E^\gamma(H)$.
    \item[\textup{(iv)}] $Y_\gamma$ is bounded, i.e. $\exists C < \infty$ s.t. $\Vert Y_\gamma \Vert_\Y \leq C$.
    \item[\textup{(v)}] Fix $\zeta < \infty$. For given $b_\gamma \in (1,\infty]$ and $c_\gamma \in (1,2]$, define the prior $\mathcal P (\zeta, b_\gamma, c_\gamma)$ as the set of probability distributions $\rho$ on $\H_\X \times \Y$ s.t. 
    \begin{itemize}
        \item [\textup{(a)}] 
        range space assumption is satisfied: 
        $\exists G \in \H_\Omega$ s.t. $H^\gamma = T^{\frac{(c_\gamma-1)}{2}} \circ G$ and $\Vert G \Vert_{\H_\Omega}^2 \leq \zeta$;
        \item [\textup{(b)}] the eigenvalues from spectral decomposition $T = \sum_{k=1}^\infty \lambda_k e_k\langle\cdot, e_k\rangle_{\H_\Omega} $, where $\{e_k\}_{k=1}^\infty$ is a basis of $Ker(T)^{\bot}$
        , 
        satisfy $\alpha \leq k^{b_\gamma}\lambda_k \leq \beta$ for some $\alpha, \beta >0$.
    \end{itemize}
\end{itemize}
\end{condition}
We {note that 
all parameters mentioned in Condition~\ref{cond::s3} 
depend on $\gamma$, though the function representations are not explicit. We set subscript $\gamma$ especially for parameters $b_\gamma$ and $c_\gamma$ to emphasize their dependency on $\gamma$.} Parameter $b_\gamma$ measures the decay of eigenvalues of the covariance operator $T$ and specifically, larger $b_\gamma$ suggests smaller effective input dimension. A larger $c_\gamma$ corresponds to a smoother operator $H^\gamma$.

\begin{lemma}\label{lem:s3}
$\forall \xi > 0$, 
the solution $\widehat H_\xi^{\gamma,m}$ to $\widehat \E_\xi^{\gamma,m}$ exist, and is unique for each $\gamma$. 
Let $\widehat{\mathbf{T}}= \frac{1}{m} \summ T_{\widehat \psi_{\gamma,l}}$, $\widehat{\mathbf{g}} = \frac{1}{m} \summ \Omega_{\widehat \psi_{\gamma,l}} \widehat y_{\gamma,l}$. \Cref{eq:KAR_estimator} has the form
$$\widehat H_\xi^{\gamma,m} = ( \widehat{\mathbf{T}} + \xi)^{-1} \circ \widehat{\mathbf{g}}.$$
\end{lemma}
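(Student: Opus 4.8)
The plan is to follow the vector-valued kernel ridge regression template already used for Lemmas~\ref{lem::s1} and~\ref{lem:s2}, now for the transformed regression \Cref{eq:transform_reg} whose inputs are $\widehat\psi_{\gamma,l}(x)=\psi(x_l)+(\sqrt\gamma-1)\widehat E_X\phi(z_l)\in\H_\X$ and whose outputs are $\widehat y_{\gamma,l}=y_l+(\sqrt\gamma-1)\widehat E_Y\phi(z_l)\in\Y$. The preliminary observation is that, for each $l$, $\widehat\psi_{\gamma,l}(x)$ is a bona fide element of $\H_\X$ --- a finite linear combination built from the already-constructed operator $\widehat E_X$ --- so the operator-valued point evaluator $\Omega_{\widehat\psi_{\gamma,l}}$ of \citet{micchelli2005learning} and the operator $T_{\widehat\psi_{\gamma,l}}=\Omega_{\widehat\psi_{\gamma,l}}\circ\Omega^*_{\widehat\psi_{\gamma,l}}$ are well defined; by Condition~\ref{cond::s3}(i) each $T_{\widehat\psi_{\gamma,l}}$ is positive with $\mathrm{Tr}(T_{\widehat\psi_{\gamma,l}})\le B$, so $\widehat{\mathbf{T}}=\frac1m\summ T_{\widehat\psi_{\gamma,l}}$ is a positive, self-adjoint, trace-class operator on $\H_\Omega$.

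For existence and uniqueness, note that $H\mapsto\widehat\E_\xi^{\gamma,m}(H)$ is a finite sum of the convex, continuous, Fr\'echet-differentiable maps $H\mapsto\|\widehat y_{\gamma,l}-\Omega^*_{\widehat\psi_{\gamma,l}}H\|_\Y^2$ (using $\Omega^*_{\widehat\psi_{\gamma,l}}H=H\widehat\psi_{\gamma,l}(x)$) plus the $\xi$-strongly convex term $\xi\|H\|_{HS}^2$, hence $\widehat\E_\xi^{\gamma,m}$ is strongly convex, coercive and lower semicontinuous on the Hilbert space $\H_\Omega$; it therefore has a unique minimizer $\widehat H_\xi^{\gamma,m}$ for every $\xi>0$ and every $\gamma$. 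To get the closed form, set the Fr\'echet derivative to zero: for all $\Delta\in\H_\Omega$, $\frac2m\summ\langle\Omega^*_{\widehat\psi_{\gamma,l}}H-\widehat y_{\gamma,l},\,\Omega^*_{\widehat\psi_{\gamma,l}}\Delta\rangle_\Y+2\xi\langle H,\Delta\rangle_{HS}=0$, equivalently $\frac1m\summ\Omega_{\widehat\psi_{\gamma,l}}(\Omega^*_{\widehat\psi_{\gamma,l}}H-\widehat y_{\gamma,l})+\xi H=0$, i.e. $(\widehat{\mathbf{T}}+\xi)H=\widehat{\mathbf{g}}$. Because $\widehat{\mathbf{T}}\succeq0$ is bounded, $\widehat{\mathbf{T}}+\xi$ is boundedly invertible with operator norm at most $\xi^{-1}$, so $\widehat H_\xi^{\gamma,m}=(\widehat{\mathbf{T}}+\xi)^{-1}\circ\widehat{\mathbf{g}}$, which also re-derives uniqueness.

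It then remains to check that this operator identity coincides with the matrix formula \Cref{eq:KAR_estimator}. Writing $A=\widehat\Psi_\gamma^\T:\H_\X\to\R^m$ for the sampling operator $h\mapsto(\langle\widehat\psi_{\gamma,l}(x),h\rangle_{\H_\X})_{l\in[m]}$, so that $A^*=\widehat\Psi_\gamma$ and $AA^*=K_{\widehat\Psi_\gamma\widehat\Psi_\gamma}$, a short computation shows that $\widehat{\mathbf{T}}$ acts on $M\in\H_\Omega\cong\mathrm{HS}(\H_\X,\Y)$ by right-composition with $\frac1mA^*A$, and $\widehat{\mathbf{g}}=\frac1m\widehat Y_\gamma A$. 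Hence $\widehat H_\xi^{\gamma,m}=\widehat{\mathbf{g}}\circ(\frac1mA^*A+\xi I)^{-1}=\frac1m\widehat Y_\gamma A(\frac1mA^*A+\xi I)^{-1}$, and the push-through identity $A(\frac1mA^*A+\xi I)^{-1}=(\frac1mAA^*+\xi I)^{-1}A$ turns this into $\widehat Y_\gamma(AA^*+m\xi I)^{-1}A=\widehat Y_\gamma(K_{\widehat\Psi_\gamma\widehat\Psi_\gamma}+m\xi I)^{-1}\widehat\Psi_\gamma^\T$, as claimed.

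The only step that needs genuine care --- everything else being a near-verbatim transcription of the stage-2 computation in \citet{singh2019kernel} --- is this last reconciliation, where one must track the tensor structure $\H_\Omega=\Y\otimes\H_\X$ and the fact that $\Omega_{\psi_\gamma}$ is a vector-valued (operator-valued-kernel) point evaluator: the Gram matrix $K_{\widehat\Psi_\gamma\widehat\Psi_\gamma}$ has scalar entries only because the underlying operator-valued kernel acts as a scalar multiple of $\mathrm{Id}_\Y$, and one must also confirm that Condition~\ref{cond::s3}(i) genuinely applies to the data-dependent transformed inputs $\widehat\psi_{\gamma,l}(x)$ rather than only to points in the support of the population law of $\psi_\gamma(X)$.
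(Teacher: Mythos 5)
Your proposal is correct and follows essentially the same route as the paper, which does not spell the argument out but defers to Theorem 3 of \citet{singh2019kernel}; the content of that cited result is precisely the standard derivation you give (strong convexity and coercivity for existence and uniqueness, the normal equations $(\widehat{\mathbf{T}}+\xi)H=\widehat{\mathbf{g}}$ from the first-order condition, and the push-through identity to reconcile the operator form with \Cref{eq:KAR_estimator}). Your closing caveats — the scalar-times-identity structure of the operator-valued kernel and the applicability of Condition~\ref{cond::s3}(i) to the data-dependent inputs $\widehat\psi_{\gamma,l}(x)$ — are consistent with how the paper uses the cited result and do not affect the conclusion.
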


\begin{condition}
\label{cond::s1&2}
For $c_1, c_2$ 
set in Conditions \ref{cond::s1}--~\ref{cond::s2} and $\iota$ satisfying Condition~\ref{cond::s3}, assume
$n_2 \geq n_1^{\frac{\iota(c_1-1)(c_2+1)}{(c_1+1)(c_2-1)}}$.
\end{condition}

\begin{remark}
Condition~\ref{cond::s1&2} is sufficient but not necessary to ensure that the error propagates to regression stage from estimating $E_Y^p$ is smaller than that from estimating $E_X^p$ in disjoint sample sets projection.
\end{remark}

The main challenge of extending the convergence rate of 
KIV estimator \citep{singh2019kernel} to 
KAR estimator is that in our case, the excess error depends not only on the accuracy of $E_X^p$ estimator but also on the accuracy of $E_Y^p$ estimator.
However, by assuming Condition~\ref{cond::s1&2}, we ensure the error 
from estimating $E_Y^p$ is dominated by that of $E_X^p$, and manage to illustrate the optimal convergence rate for KAR
as shown below in Thereom~\ref{thm::s3c}.



\begin{theorem}
\label{thm::s3c}
Under Condition~\ref{cond::s1}--~\ref{cond::s1&2}, let $d_1, d_2 > 0$
and 
choose
$\alpha_1 = n_1^{-\frac{1}{c_1+1}}$, $\alpha_2 = n_2^{-\frac{1}{c_2+1}}$,
    $n_1 = m^{\frac{d_1(c_1+1)}{\iota(c_1-1)}}$, 
    $n_2 = m^{\frac{d_2(c_2+1)}{\iota(c_2-1)}}$,
We have:
\begin{itemize}
    \item [\textup(i)] If $d_1 \leq \frac{b_\gamma(c_\gamma+1)}{b_\gamma c_\gamma+1}$, then $\E^\gamma(\widehat H_\xi^{\gamma,m}) - \E^\gamma( H^\gamma) = \op(m^{-\frac{d_1 c_\gamma}{c_\gamma+1}})$ with $\xi = m^{-\frac{d_1}{c_\gamma+1}}$.
    \item [\textup(ii)] If $d_1 > \frac{b_\gamma(c_\gamma+1)}{b_\gamma c_\gamma+1}$, then $\E^\gamma(\widehat H_\xi^{\gamma,m}) - \E^\gamma( H^\gamma) = \op(m^{-\frac{b_\gamma c_\gamma}{b_\gamma c_\gamma+1}})$ with $\xi = m^{-\frac{b_\gamma}{b_\gamma c_\gamma+1}}$.
\end{itemize}
\end{theorem}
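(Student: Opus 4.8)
The plan is to follow the template of the KIV convergence analysis in \citet{singh2019kernel}, but carefully track how the two projection-stage errors (for $E_X^p$ and $E_Y^p$) propagate into the regression-stage excess risk. First I would decompose $\E^\gamma(\widehat H_\xi^{\gamma,m}) - \E^\gamma(H^\gamma)$ into three contributions: (a) the pure regression-stage estimation error, i.e. the error one would incur if the transformed samples $\psi_\gamma(x_l), Y_{\gamma,l}$ were available exactly; (b) the error from replacing $E_X^p$ by $E_{\alpha_1,X}^{n_1}$ inside $\widehat\psi_{\gamma,l}$; and (c) the error from replacing $E_Y^p$ by $E_{\alpha_2,Y}^{n_2}$ inside $\widehat y_{\gamma,l}$. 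Term (a) is governed by the source condition (v)(a) and eigenvalue decay (v)(b) of $T$ in Condition~\ref{cond::s3}, and yields, via the standard integral-operator bias–variance split with $\xi$ as the ridge parameter, a rate of the familiar form $m^{-\min\{\cdot\}}$ with the two regimes in $d_1$ corresponding to whether the variance term or the "saturation'' term dominates — this is exactly the dichotomy appearing in the statement.

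Next I would bound (b) and (c). Using the closed forms from Lemmas~\ref{lem::s1} and~\ref{lem:s2}, the perturbation of $\widehat\psi_{\gamma,l}$ away from $\psi_\gamma(x_l)$ is $(\sqrt\gamma-1)(E_{\alpha_1,X}^{n_1}-E_X^p)\phi(z_l)$, whose norm is controlled by $\kappa\,\|E_{\alpha_1,X}^{n_1}-E_X^p\|_{\H_\Gamma} = O_p(n_1^{-(c_1-1)/(2(c_1+1))})$, and similarly the perturbation of $\widehat y_{\gamma,l}$ is $O_p(n_2^{-(c_2-1)/(2(c_2+1))})$. I would then use the Hölder continuity and uniform Hilbert–Schmidt boundedness of the $\Omega_{\psi_\gamma}$ family (Condition~\ref{cond::s3}(i),(ii)) together with the boundedness of $Y_\gamma$ (iv) to convert these input/output perturbations into a bound on $\|\widehat{\mathbf T}-\mathbf T\|$ and $\|\widehat{\mathbf g}-\mathbf g\|$ (where $\mathbf T,\mathbf g$ are the empirical operators built from the exact transformed samples), picking up a factor $n_1^{-\iota(c_1-1)/(2(c_1+1))}$ from the Hölder exponent on the input side and $n_2^{-(c_2-1)/(2(c_2+1))}$ on the output side. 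Substituting the prescribed rates $n_1 = m^{d_1(c_1+1)/(\iota(c_1-1))}$ and $n_2 = m^{d_2(c_2+1)/(\iota(c_2-1))}$ makes the $E_X^p$-propagated error exactly $O_p(m^{-d_1/2})$; since the excess \emph{risk} is quadratic, this feeds in at the right order to merge with term (a). Condition~\ref{cond::s1&2} is precisely what guarantees the $E_Y^p$-propagated error is of smaller order than the $E_X^p$ one, so it may be absorbed.

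Finally, I would combine the three terms: the dominant contributions are term (a) and the $E_X^p$-propagation term. Optimizing over $\xi$ as a function of $m$ (equivalently, balancing bias and variance in term (a) against the propagated error of order $m^{-d_1}$ in risk scale) gives the two cases. When $d_1 \le b_\gamma(c_\gamma+1)/(b_\gamma c_\gamma+1)$ the propagated error is the binding constraint, forcing $\xi = m^{-d_1/(c_\gamma+1)}$ and producing the rate $m^{-d_1 c_\gamma/(c_\gamma+1)}$; when $d_1$ exceeds that threshold the regression stage itself saturates at its intrinsic optimal rate $m^{-b_\gamma c_\gamma/(b_\gamma c_\gamma+1)}$ with $\xi = m^{-b_\gamma/(b_\gamma c_\gamma+1)}$, independently of how accurate the projection stage is. The main obstacle I anticipate is step (b)/(c): unlike KIV, here \emph{both} the regressors and the responses of the final regression are estimated quantities, so one must propagate two independent sources of error through the nonlinear (operator-valued, Hölder) feature map and show they enter the excess risk additively and at the claimed orders; getting the exponents to line up — in particular verifying that the $\iota$ in the Hölder condition and the $\iota$ in the prescribed choice of $n_1$ cancel to leave $m^{-d_1}$ in risk scale — is the delicate bookkeeping, and Condition~\ref{cond::s1&2} has to be invoked at exactly the right place to discard the $n_2$-dependent terms.
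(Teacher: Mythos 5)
Your proposal takes essentially the same route as the paper: the paper also reduces to the KIV five-term decomposition of the excess risk, bounds $\Vert\widehat\psi_\gamma-\psi_\gamma\Vert$ and $\Vert\widehat y_\gamma-y_\gamma\Vert$ by $|\sqrt\gamma-1|\kappa$ times the projection-stage rates, converts these via the H\"older/boundedness assumptions into a bound on $\Vert\widehat{\mathbf g}-\mathbf g\Vert$ (the only term affected by the $E_Y^p$ error), uses Condition~\ref{cond::s1&2} to show that contribution is dominated, and then applies the KIV-style balancing over $\xi$. One small bookkeeping point: the propagated error enters the risk as $m^{-d_1}/\xi$ (through the factor $\Vert\sqrt T\circ(\widehat{\mathbf T}+\xi)^{-1}\Vert^2\leq 4/\xi$), not $m^{-d_1}$ as written in your final balancing step, and it is precisely this $1/\xi$ factor that makes the stated choices of $\xi$ and the two regimes come out—this is handled automatically by the KIV Theorem~4 balancing you invoke.
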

At $d_1 = b_\gamma(c_\gamma+1)/(b_\gamma c_\gamma+1) < 2$, the convergence rate of KAR estimator $m^{-b_\gamma c_\gamma/(b_\gamma c_\gamma+1)}$ is optimal. 
This statistically efficient rate is calibrated by $b_\gamma$, the effective input dimension, 
together with $c_\gamma$, the smoothness of 
the operator $H^\gamma$. The condition $d_1 = b_\gamma(c_\gamma+1)/(b_\gamma c_\gamma+1) < 2$ also suggests that $n_1 > m$.

Additional results 
and discussions 
including the two-stage approach 
are included in the Appendix.

\subsection{Causal effect and target KAR estimate}
In this section, we discuss the scenarios assuming that the data are generated from a 
structural causal model with nonlinear features 
as shown below,
\begin{align}\label{eq:nonlinear_sem}
    \begin{pmatrix}
        C  \\
        \psi(X) \\
        Y \end{pmatrix}
        = B
   \begin{pmatrix}
        \phi(Z) \\
        C  \\
        \psi(X) \\
        Y \end{pmatrix}
        + 
        \begin{pmatrix}
        \epsilon_C  \\
        \epsilon_X \\
        \epsilon_Y\end{pmatrix},
\end{align}
where we write operator $B$ in the following matrix form
$$
B = \begin{pmatrix}
        B_{CZ} & 0 & 0 & 0 \\
        B_{XZ} & B_{XC} & 0 & 0 \\
        B_{YZ} & B_{YC} & B_{YX} & 0
   \end{pmatrix}.
$$
We note that each operator $B_{\triangle \square}$ represents an operator that takes an element from $\square$-related space to $\triangle$-related space, e.g. $B_{XZ}:\H_\Z \to \H_\X$ and $B_{YZ}:\H_\Z \to \Y$.
%
The 
noise variables $\epsilon_Z$, $\epsilon_C$, $\epsilon_X$ and $\epsilon_Y$ 
are independent of each other. Let $\Sigma_Z$, $\Sigma_C$, $\Sigma_X$ and $\Sigma_Y$ denote the covariance of $\epsilon_Z$, $\epsilon_C$, $\epsilon_X$ and $\epsilon_Y$, respectively. Here each 
operator in $B$ represents a line in the model shown in Figure~\ref{fig:KAR}. For instance, $B_{CZ}$ stands for the line from $\H_\Z$ to $C$; $B_{YX}$ corresponds to the line from $\H_\X$ to $Y$. $B_{YX}$ in \Cref{eq:nonlinear_sem} reflects the causal effect we are interested in.
We study the identifiability scenarios where operator $B_{YX}$ can be learned via 
KAR
estimator $H^{\gamma}$.
\begin{theorem}
\label{thm::causal} 
An operator $B_{XZ}$ is a zero operator written by $B_{XZ}=0$, if $\langle \psi(x), B_{XZ}\phi(z)\rangle_{\H_\X}=0$, $\forall \psi(X) \in \H_\X, \phi(z) \in \H_\Z$.
Operator $B_{CZ} = 0$ if $c^\T B_{CZ}\phi(z)=0$, $\forall c \in \mathcal C, \phi(z) \in \H_\Z$. A matrix-valued operator, e.g. $B_{YC}=0$ if all entries are $0$.
For data generation process following \Cref{eq:nonlinear_sem}, we have $H^\gamma = B_{YX}$ in following cases.
\begin{itemize}
\vspace{-0.35cm}
    \item [\textup{(i)}] $B_{YC}=0$ and $\gamma = 0$, 
    i.e.
    no 
    latent confounder.
    \item [\textup{(ii)}] $B_{YZ} + B_{YC}B_{CZ} = 0$ and $\gamma = \infty$, where kernel IV  is a special case, i.e. both $B_{YZ} = 0$ and  $B_{CZ} = 0$.
    \item [\textup{(iii)}] $B_{YC}=0$, $B_{YZ} + B_{YC}B_{CZ} = 0$, and $\gamma \geq 0$.
    \item [\textup{(iv)}] $\Sigma_{YX}^\para = - a \Sigma_{YX}^\res$ for some $a > 0$, and $\gamma = 1/a$.
    
    $\Sigma_{YX}^\para = (B_{YZ} + B_{YC}B_{CZ})\Sigma_Z (B_{ZX} + B_{ZC}B_{CX}) $ denotes the covariance between $\psi(X)$ and $Y$ projected on the linear span from the components of $\phi(Z)$; and $\Sigma_{YX}^\res = B_{YC}\Sigma_{C}B_{CX}$ denote the covariance between the residuals of $\psi(X)$ and $Y$.
\vspace{-0.cm}
\end{itemize}
\end{theorem}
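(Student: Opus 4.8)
The plan is to reduce the claim to the population normal equations of the transformed least-squares problem and then read off the ``bias'' $H^\gamma - B_{YX}$ directly from the structural equations. By the Equivalence proposition, $H^\gamma$ minimizes $\Ex\|Y_\gamma - H\psi_\gamma(X)\|_\Y^2$, and since $E_X^p\phi(Z) = P_\z\psi(X)$ and $E_Y^p\phi(Z) = P_\z Y$, the transformed variables split as $\psi_\gamma(X) = \psi(X)^\res + \sqrt{\gamma}\,P_\z\psi(X)$ and $Y_\gamma = Y^\res + \sqrt{\gamma}\,P_\z Y$, where $\psi(X)^\res := (Id-P_\z)\psi(X)$ and $Y^\res := (Id-P_\z)Y$. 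Thus $H^\gamma$ is characterized by $\Ex[Y_\gamma\otimes\psi_\gamma(X)] = H^\gamma\,\Ex[\psi_\gamma(X)\otimes\psi_\gamma(X)]$. Because $P_\z$ is the $L^2$-projection onto the closed span of the coordinates of $\phi(Z)$, every cross term between a ``$\res$''-piece and a ``$P_\z$''-piece has zero expectation, so
\[
\Ex[\psi_\gamma(X)\otimes\psi_\gamma(X)] = \Sigma_{XX}^\res + \gamma\,\Sigma_{XX}^\para, \qquad \Ex[Y_\gamma\otimes\psi_\gamma(X)] = \Ex[Y^\res\otimes\psi(X)^\res] + \gamma\,\Ex[P_\z Y\otimes P_\z\psi(X)],
\]
with $\Sigma_{XX}^\res := \Ex[\psi(X)^\res\otimes\psi(X)^\res]$ and $\Sigma_{XX}^\para := \Ex[P_\z\psi(X)\otimes P_\z\psi(X)]$.

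Next I would substitute the reduced form of \Cref{eq:nonlinear_sem}. Solving recursively using the block-triangular structure of $B$ and the fact that $\epsilon_C,\epsilon_X,\epsilon_Y$ are mean-zero and independent of $\phi(Z)$, one obtains $\psi(X) = \widetilde B_{XZ}\phi(Z) + B_{XC}\epsilon_C + \epsilon_X$ with $\widetilde B_{XZ} := B_{XZ}+B_{XC}B_{CZ}$, and $Y = \widetilde B_{YZ}\phi(Z) + (B_{YC}+B_{YX}B_{XC})\epsilon_C + B_{YX}\epsilon_X + \epsilon_Y$ with $\widetilde B_{YZ} := (B_{YZ}+B_{YC}B_{CZ}) + B_{YX}\widetilde B_{XZ}$. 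Hence $P_\z\psi(X) = \widetilde B_{XZ}\phi(Z)$, $P_\z Y = \widetilde B_{YZ}\phi(Z)$, $\psi(X)^\res = B_{XC}\epsilon_C + \epsilon_X$, $Y^\res = (B_{YC}+B_{YX}B_{XC})\epsilon_C + B_{YX}\epsilon_X + \epsilon_Y$. Taking second moments and pulling out the common factor $B_{YX}$ is the key bookkeeping step: the part of $Y$ mediated through $\psi(X)$ contributes a $B_{YX}\Sigma_{XX}^{\bullet}$ term on both the projected and residual sides, so that $\Ex[P_\z Y\otimes P_\z\psi(X)] = \Sigma_{YX}^\para + B_{YX}\Sigma_{XX}^\para$ and $\Ex[Y^\res\otimes\psi(X)^\res] = \Sigma_{YX}^\res + B_{YX}\Sigma_{XX}^\res$, with $\Sigma_{YX}^\para$ and $\Sigma_{YX}^\res$ exactly as in the statement (adjoints written with reversed subscripts). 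Substituting into the normal equation and cancelling $B_{YX}(\Sigma_{XX}^\res + \gamma\,\Sigma_{XX}^\para)$ from both sides leaves $(H^\gamma - B_{YX})(\Sigma_{XX}^\res + \gamma\,\Sigma_{XX}^\para) = \Sigma_{YX}^\res + \gamma\,\Sigma_{YX}^\para$, i.e.
\[
H^\gamma = B_{YX} + \bigl(\Sigma_{YX}^\res + \gamma\,\Sigma_{YX}^\para\bigr)\bigl(\Sigma_{XX}^\res + \gamma\,\Sigma_{XX}^\para\bigr)^{-1}.
\]

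The four cases then follow by checking when $\Sigma_{YX}^\res + \gamma\,\Sigma_{YX}^\para = 0$: in (i), $B_{YC}=0$ kills $\Sigma_{YX}^\res$ and $\gamma=0$ kills the other term; in (iii), $B_{YC}=0$ and $B_{YZ}+B_{YC}B_{CZ}=0$ kill both terms for every $\gamma\ge 0$; in (iv), $\gamma=1/a$ forces the cancellation $\Sigma_{YX}^\res + a^{-1}(-a\Sigma_{YX}^\res)=0$; and in (ii), dividing numerator and denominator of the bias by $\gamma$ and letting $\gamma\to\infty$ gives the limiting value $B_{YX} + \Sigma_{YX}^\para(\Sigma_{XX}^\para)^{-1}$, which equals $B_{YX}$ precisely when $B_{YZ}+B_{YC}B_{CZ}=0$ (kernel IV, $B_{YZ}=0=B_{CZ}$, being a sub-case).

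The main obstacle is exactly the feature that distinguishes this from the finite-dimensional linear anchor regression: on $\H_\X$ the covariance operator $\Sigma_{XX}^\res + \gamma\,\Sigma_{XX}^\para$ is compact and has no bounded inverse, so the displayed formula for $H^\gamma$, and the identification $H^\gamma = B_{YX}$, must be interpreted on the closure of the relevant range (equivalently, $H^\gamma$ is pinned down only on $\overline{\mathrm{ran}}(\Sigma_{XX}^\res + \gamma\,\Sigma_{XX}^\para)$), which is where the characteristic-kernel assumption on $k_\X$ and the range conditions already invoked in the analysis must be brought to bear; case (ii) additionally requires continuity of the limiting bias as $\gamma\to\infty$, reading the inverse as a pseudo-inverse where necessary. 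Verifying these regularity points under \Cref{eq:nonlinear_sem} is the substantive part of the argument; the algebraic identities above are routine once that is settled.
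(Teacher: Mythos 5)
Your proposal is correct and follows essentially the same route as the paper's proof: solve the block-triangular SEM for the reduced forms, identify $E_X^p$ and $E_Y^p$ with the composite operators onto $\phi(Z)$, write the population least-squares solution for the transformed variables, and arrive at the same bias formula $H^\gamma - B_{YX} = \bigl(\Sigma_{YX}^\res + \gamma\,\Sigma_{YX}^\para\bigr)\bigl(\Sigma_{XX}^\res + \gamma\,\Sigma_{XX}^\para\bigr)^{-1}$ before checking the four cases. The operator-inverse regularity issue you flag is real, but the paper's own proof also treats that inverse formally (reading the conclusion through inner products $y^\T(H^\gamma - B_{YX})\psi(x)=0$), so your argument matches the paper's level of rigor.
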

Thereom~\ref{thm::causal} (i) 
suggests that KPA 
is optimal 
when there is no unobserved confounder; (ii) is a generalized condition including
KIV; 
(iii) shows the KAR estimator identifies the causal relation from $X$ to $Y$ regardless of $\gamma$ with generalized KIV condition in (ii) and no latent confounder in (i);
(iv) shows the KAR identifiability condition with appropriate choice of $\gamma$ when $\Sigma_{XY}^\para$ and $\Sigma_{XY}^\res$ 
are in the 
flipped direction.
In the next section, we show the empirical results for KAR and relevant baseline methods.

\section{  Empirical Results}\label{sec:simulation}

\subsection{Synthetic experiments}\label{sec:synthetic}
We consider the data generating process
of the following nonlinear structural equation,
\begin{equation*}
    Y = 0.75C - 0.25Z + \ln(|16X - 8| + 1) sgn (X - 0.5),
\end{equation*}
where $sgn(x)\in \{-1, 0, +1\}$ denotes the sign of $x$.
The explanatory variables $X, Z, C$ are generated from 
\begin{eqnarray*}
\begin{pmatrix}
        C  \\
        V \\
        W \end{pmatrix} 
        \sim
        N \left( 
        \begin{pmatrix}
        0  \\
        0 \\
        0 \end{pmatrix}, 
        \begin{pmatrix}
        1, 0.3, 0.2  \\
        0.3, 1, 0 \\
        0.2, 0, 1 \end{pmatrix} 
        \right),\\
    X = F \left( \frac{W+V}{\sqrt{2}} \right),\quad
    Z = F(W) - 0.5,
\end{eqnarray*}
where $F$ denote the c.d.f of standard normal distribution. 

We generate $\{(x_i,y_i,z_i)\}_{i\in[N]}$ samples with $N=700$. To perform the data-splitting procedures described in \Cref{sec:kar}, we set $n_1 = n_2 = 250 $ and $n=500$ for a fair comparison in the projection stage (\Cref{sec:projection}); and $m=200$ in the regression stage (\Cref{sec:regression}). We set regularizers as
$\alpha_1 = 1.5n_1^{-0.5}$, $\alpha_2 = 1.5n_2^{-0.5}$, $\alpha = 1.5n^{-0.5}$ and $\xi = 1.5m^{-0.5}$.

\begin{figure}[t!]
    \centering
    {\includegraphics[width=0.4\textwidth]{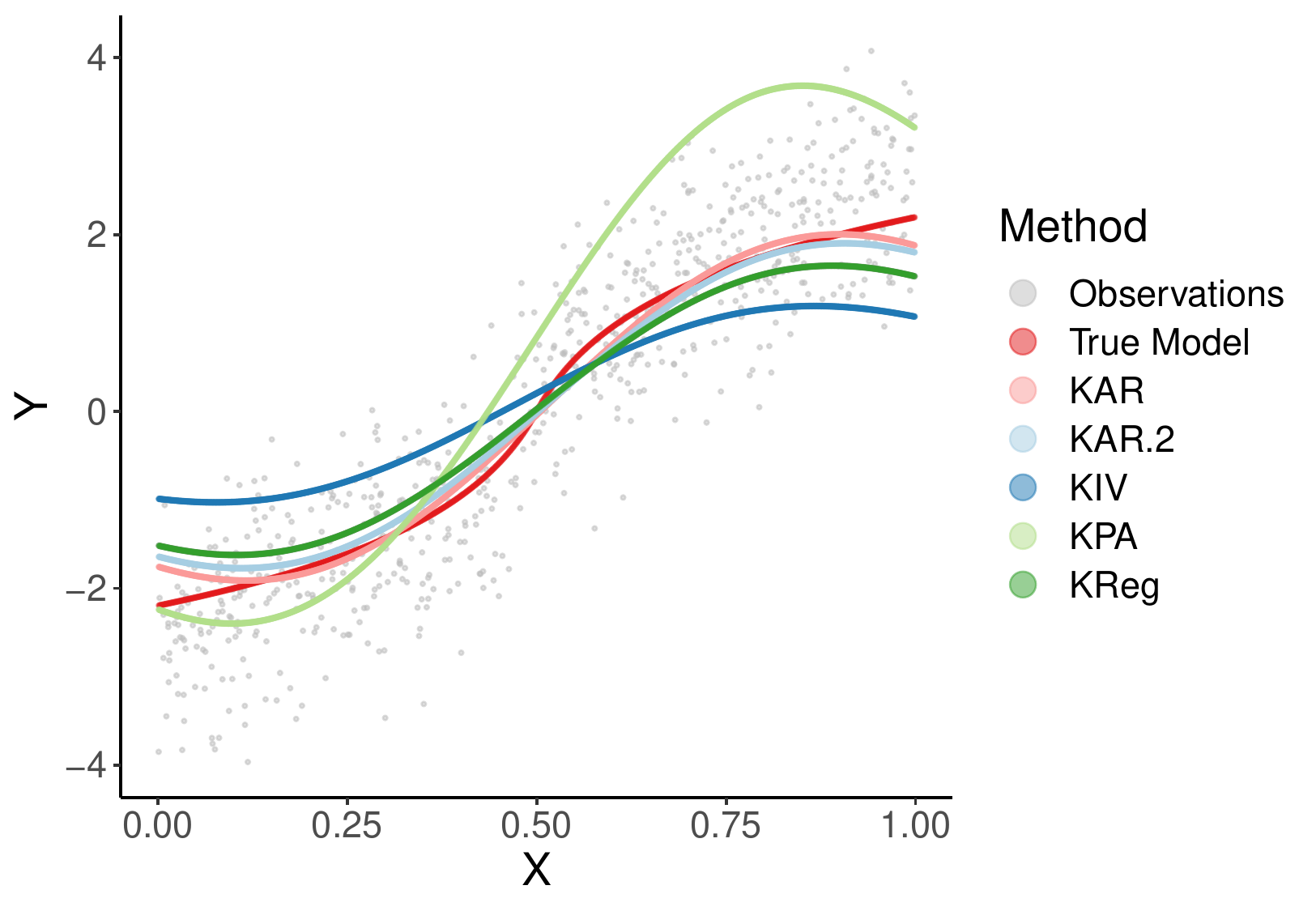}}
    {
    \includegraphics[width=0.4\textwidth]{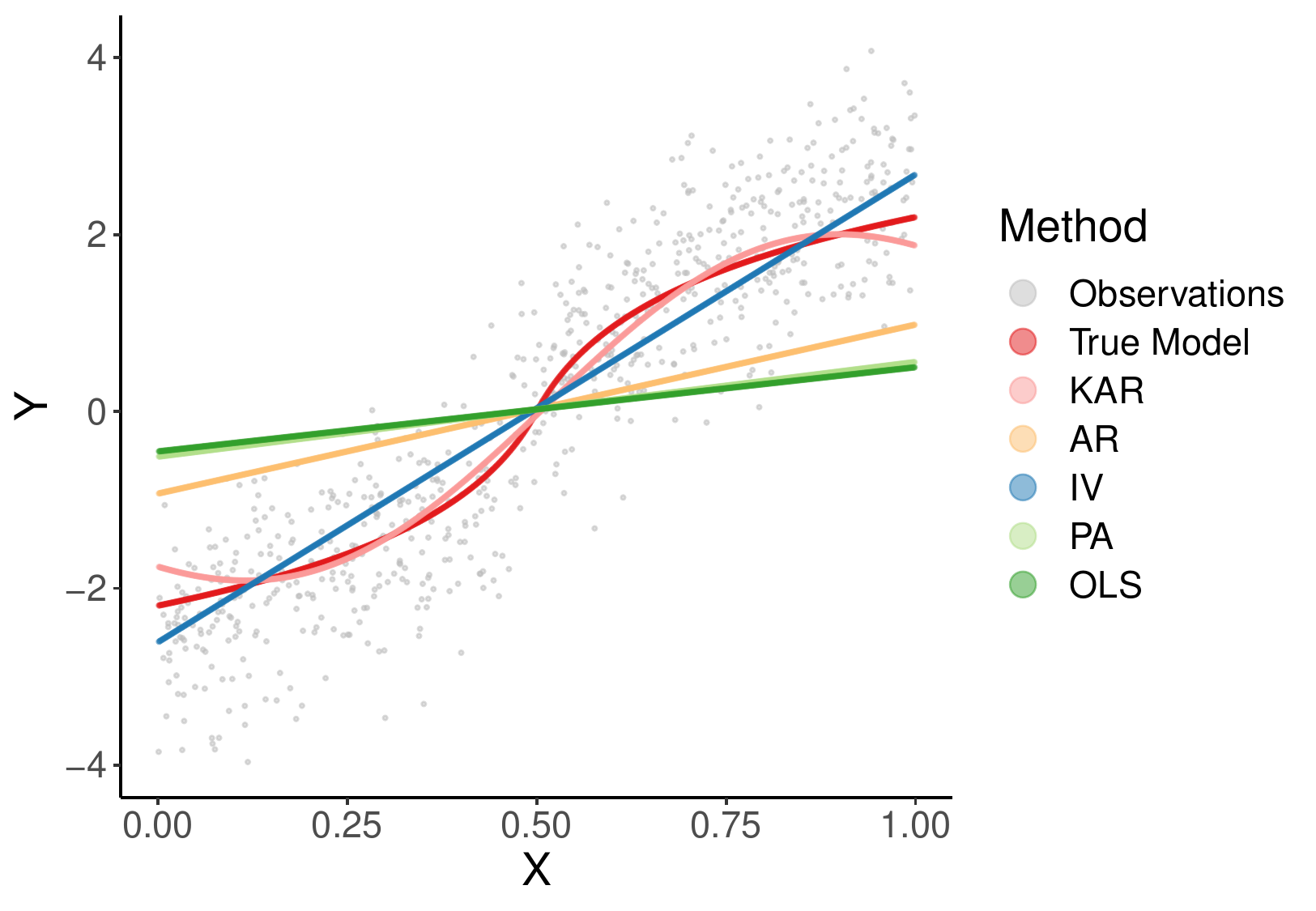}}    
    
    {\includegraphics[width=0.56\textwidth
    ]{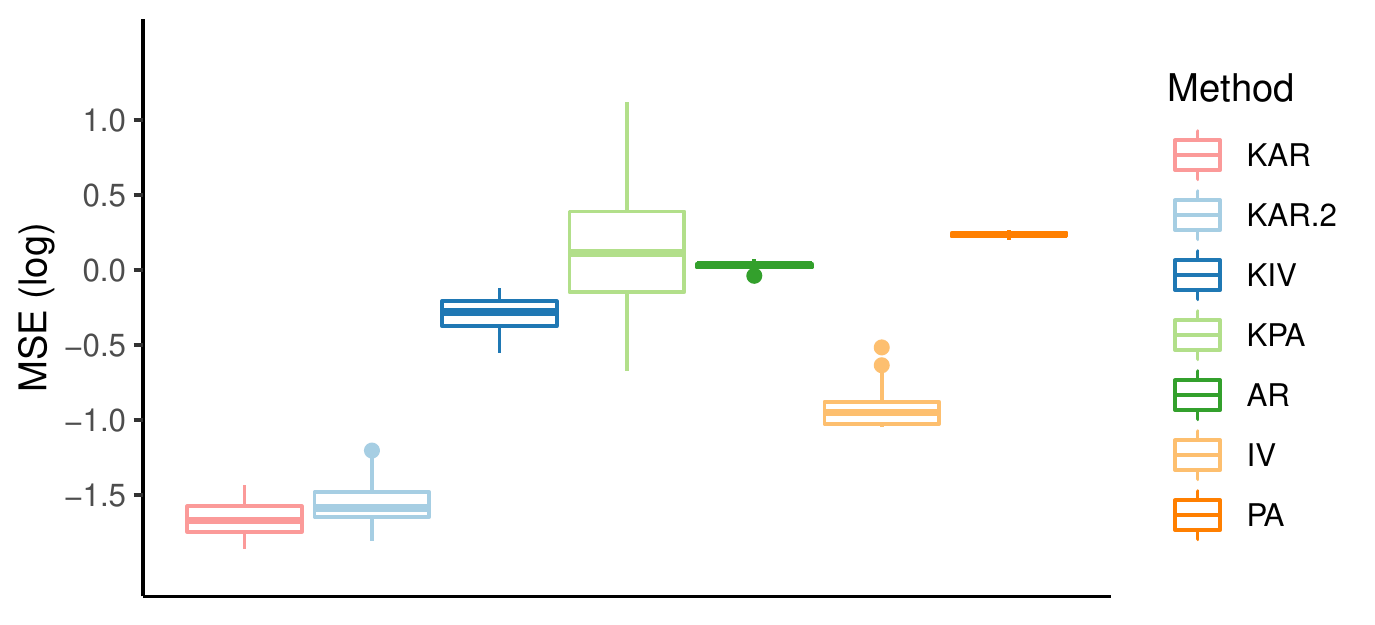}\label{fig:mse_synthetic}} 
    \caption{
    Restuls for the synthetic example:
    fitted
    (top left) nonlinear models; and (top right) linear models;
    (bottom): log MSE.}
    \vspace{-0.43cm}
    \label{fig:kar_fit}
\end{figure}

\paragraph{Fitting methods}
We consider estimations via the three-stage kernel anchor regression with disjoint data set projection (\textbf{KAR}) and two-stage kernel anchor regression with joint data set projection
(\textbf{KAR.2}). The baseline approaches include the kernel-based nonlinear methods: kernel instrument variable regression (\textbf{KIV}), kernel partialling out regression (\textbf{KPA}), kernel ridge regression (\textbf{KReg}); and the linear models: linear anchor regression (\textbf{AR}), linear instrument variable regression (\textbf{IV}), linear partialling out regression (\textbf{PA}) and ordinary least square (\textbf{OLS}). 
We use Gaussian kernel for all kernel methods, where the median heuristic is used for choosing the bandwidth \citep{gretton2012kernel}. 
For the synthetic example, we set $\gamma = 2$ for all anchor regressions (\textbf{KAR}, \textbf{KAR.2} and \textbf{AR}). 

For each algorithm, we 
implement 50 trials 
and calculate the mean squared error (MSE) with respect to the true causal model $\Ex( Y | do(x))$\footnote{Setting a particular value $X=x$ while ignoring other variables that may potentially changing the distribution of $y$, $p(y|X=x)$ is noted as $p(y|do(x)$ \citep{pearl2009causality, peters2016causal}. $\Ex[Y|do(x)]$ is set us the mean over $p(y|do(x))$ averaging out  different $Z$ values in this case.}, which can be computed from the structural model. 
A trial is shown in \Cref{fig:kar_fit} as a visual example. We can see that the \textbf{KAR} produces a closest estimation to the true model among all other methods and outperforms \textbf{KAR.2}. The comparison with linear models are also shown. IV model fits better than other linear models.
We report $\log_{10}(\text{MSE})$ in the bottom of \Cref{fig:kar_fit}, which shows
that both KAR methods 
have smaller errors than others. \textbf{KAR} performs slightly better than \textbf{KAR.2} in this case. 
To check
the robustness of KAR estimators, we study a less smooth variant of the generating process and show the results in \Cref{app:variant}.

\begin{figure}[t!]
    \centering
\includegraphics[width=0.75\textwidth]{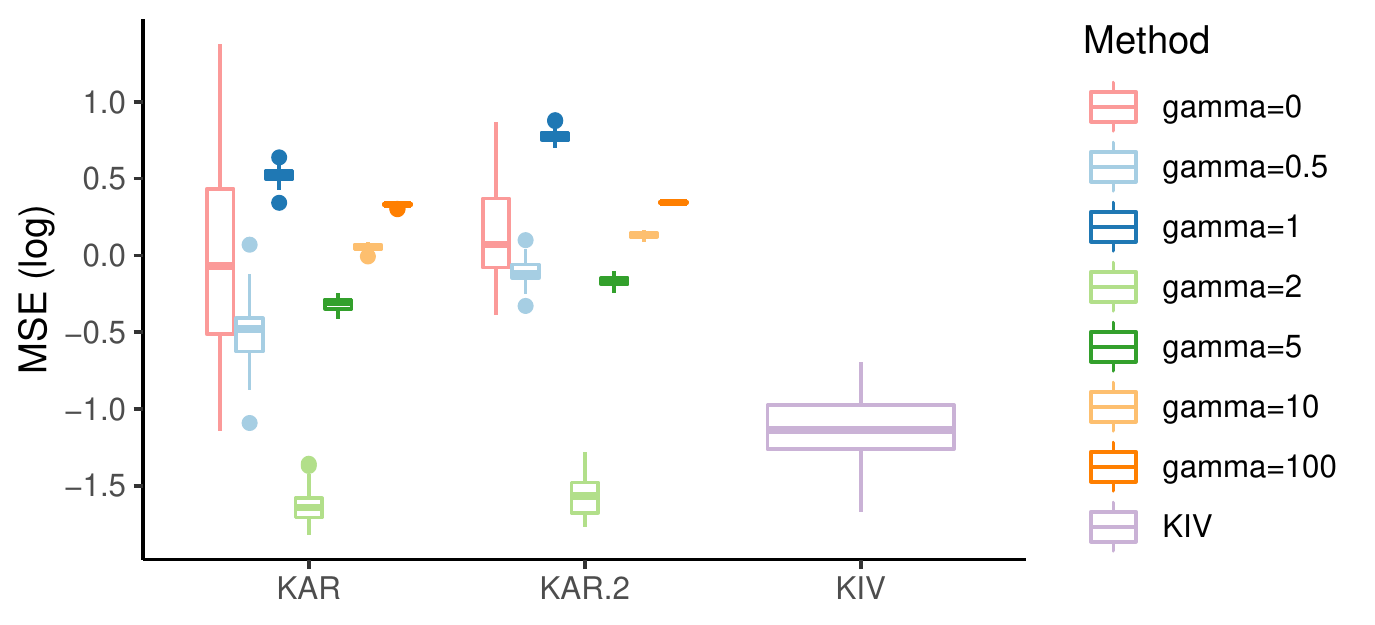}
    \caption{
    Effects of different $\gamma$ choices on MSE.}
    \label{fig:MSE_IV}
\end{figure}
\paragraph{The effect of $\gamma$ choices}
To investigate how the change of $\gamma$ affects the estimator, 
we consider KIV as our baseline as the IV setting corresponds to $\gamma \to \infty$. We consider the data generating process used in the KIV paper \citep{singh2019kernel}.
The $\log_{10}$(MSE) results of \textbf{KAR} and \textbf{KAR.2}, in comparison with \textbf{KIV}, are shown in \Cref{fig:MSE_IV}. For the simulation, we set $N=1000$, $n_1 = n_2 = 200$,  $n=n_1+n_2 =400$ and $m = 600$. From the result, we can see that both \textbf{KAR} and \textbf{KAR.2} achieves smaller error when choosing $\gamma=2$.
Data generation and model implementation details are included in \Cref{app:kiv}.

\begin{figure}[t!]
    \centering
    {\includegraphics[width=0.7\textwidth]{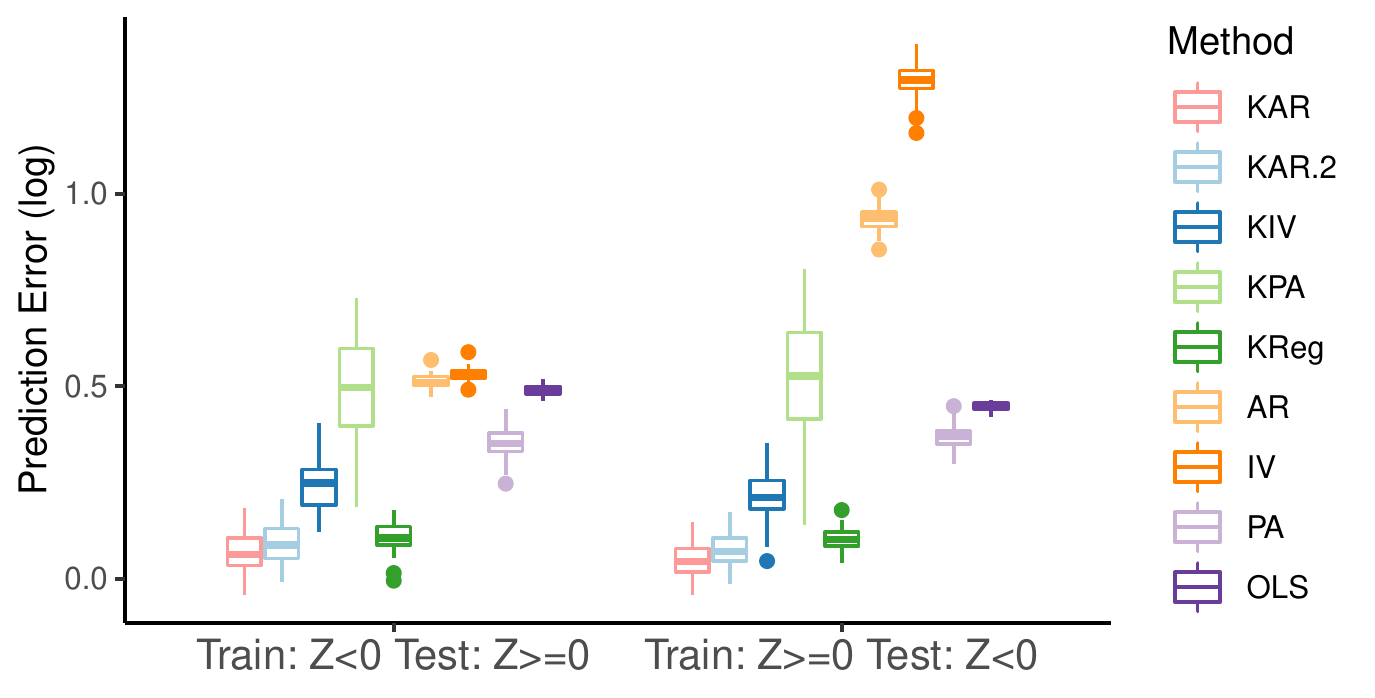}}
    \caption{
    Prediction error with distributional intervention.}\label{fig:PE_synthetic}
\end{figure}

\paragraph{Intervention and Generalization}
To evaluate the robustness and generalization performance of both KAR estimators under distribution shift, as discussed in \citet{rothenhausler2018anchor}, we intervene the anchor variable $Z$. We train the model on a subpopulation of samples with $Z<0$ and test on the samples with $Z \geq 0$. 
The performance is measured by prediction error (PE) of fitted model with respect to $\Ex(Y|X=x, Z \geq 0)$, 
where the true conditional model is not known in closed form but estimated from samples. 

We also exchange the training set and the testing set.
As shown in \Cref{fig:PE_synthetic}, our \textbf{KAR} estimator has the lowest PE among others, 
showing better out-of-distribution generalization performance. More importantly, by checking the 
two 
(flipped) scenarios, i.e. train on $Z<0$ v.s. train on $Z\geq 0$, we also see that \textbf{KAR} is the most invariant in terms of PE. On the contrary, linear version of \textbf{AR} and \textbf{IV} achieves very different PE in both cases. 
Variances of PE for \textbf{KPA} are also very different in the two cases. Despite \textbf{KReg} achieves a relatively low PE in both cases, the distributions of PE can be found very different.



\begin{figure}[t!]
    \centering
    \includegraphics[width=0.62\textwidth
    ]{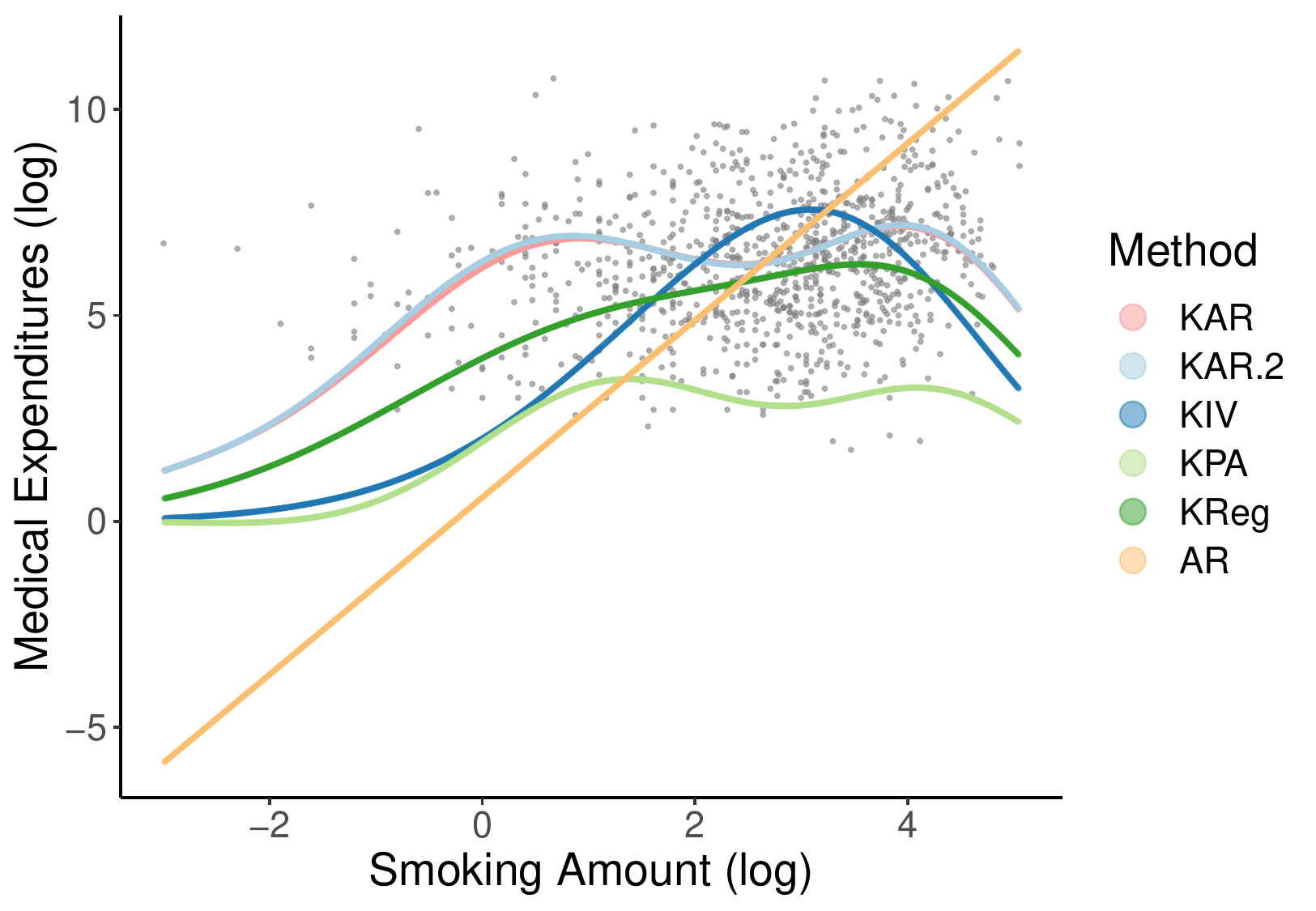}
    \includegraphics[width=0.58\textwidth
    ]{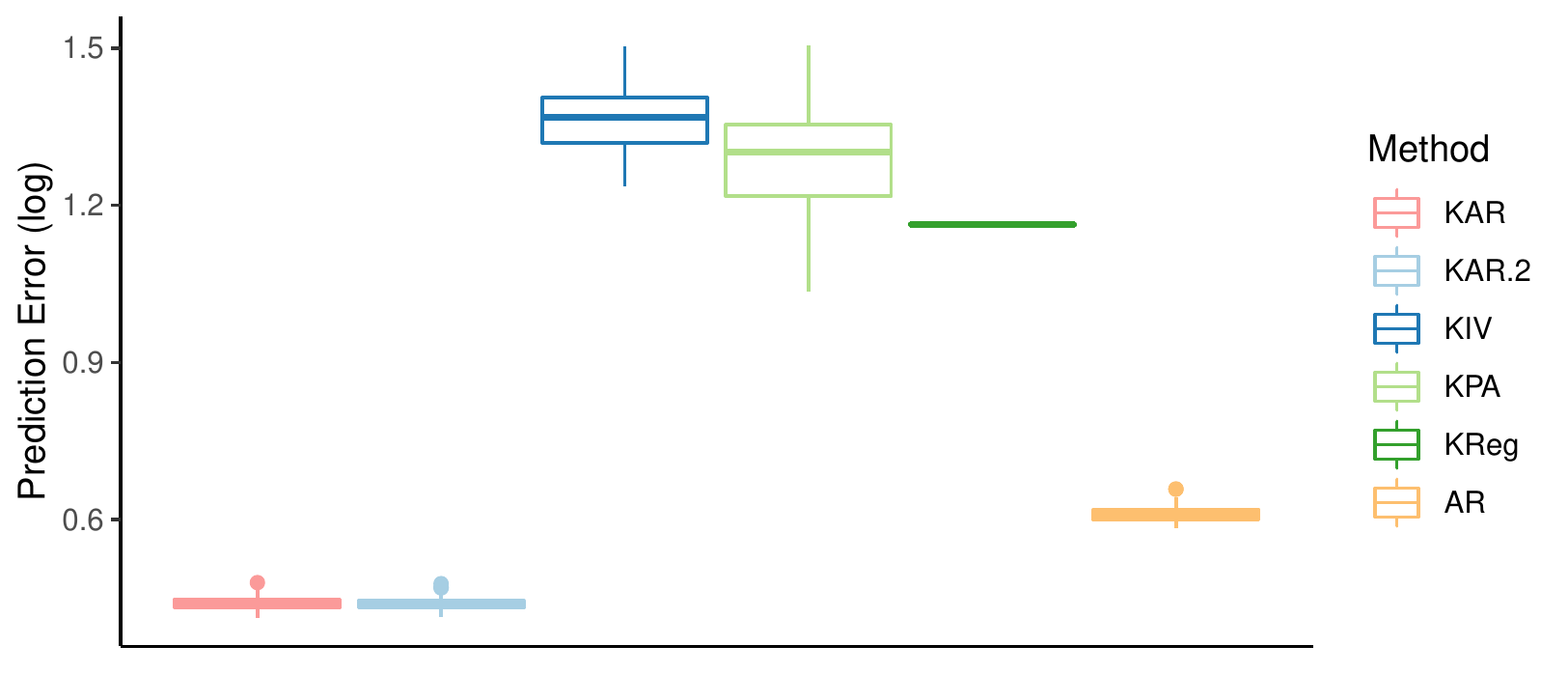}
    \caption{
    Fitted models (top)
    and prediction errors (bottom) 
    when training on male subjects and testing on female subjects.
    }
    \label{fig::sm}
\end{figure}

\subsection{Real-world application}
We consider the smoking dataset extracted from 
National Medical Expenditure Survey (NMES) \citep{johnson2003disease} to study the effect of smoking amount on medical expenditure \citep{imai2004causal}\footnote{The dataset is accessible through using the \texttt{R} package for ``estimating causal dose response function'' \texttt{causaldrf} \citep{galagate2016causal} \url{https://cran.r-project.org/web/packages/causaldrf/index.html}.}.

The treatment variable $X$ is the $\log$ of smoking amount, the outcome $Y$ is the $\log$ of medical expenditure, and the anchor $Z$ is set to be the last age for smoking. 
We use $1000$ samples, randomly selected from $9708$ available samples, to fit the model. We set 
$n_1=n_2=300$, 
$n=600$ 
and $m=400$. 
We also set 
$\gamma = 2.9$ and apply Gaussian kernel with median heuristic bandwidth \citep{gretton2012kernel} for all kernel methods. 
As shown in the upper part of Figure~\ref{fig::sm}, KAR estimators show that the effect of $X$ on $Y$ is more significant when $X\in [-2,1]$ compared to $X\in [1,4]$.
Our method can also be used in complement with the approaches finding causal directions, e.g.  \citep{peters2016causal}\footnote{Implementation with \texttt{R} package \texttt{CAM} can be found at \url{https://rdrr.io/cran/CAM/man/CAM.html}}. We run the CAM to ensure that there is a causal effect in the direction from $X$ to $Y$ and KAR procedure further learns the specific function representing such effect.
However, existing work such as propensity score approaches \citep{imai2004causal} did not manage to extract such causal relationship between smoking 
and medical expenditure.

To strengthen our finding, we quantify the performance of the estimators. Since we do not know the real generating process
of the data, we cannot 
compare the MSE as \Cref{fig:kar_fit} and \ref{fig:MSE_IV}.
Instead, it's feasible to evaluate the performance of estimators under distribution perturbation via PE, similar to \Cref{fig:PE_synthetic}. We train models on male subjects and compute the prediction accuracy of fitted model on female subjects. The results are shown in the bottom of Figure~\ref{fig::sm}. From the result, we see that both KAR approaches outperform other kernel-based approaches as well as the linear version of AR, suggesting a better learned effect 
from the smoking 
amount 
to medical expenditure.


\section{ Conclusion}\label{sec:conclusion}
In this work, we consider learning a more general class of causal DAG in a nonlinear setting
using 
kernelized anchor regression.
By considering different data splitting strategies to estimate the projection operators, 
we show that the three-stage approach not only performs better empirically than baseline approaches as well as the 2SLS approach, but also achieves optimal rate under given conditions. Identifiability results are provided and are shown to generalize KIV and ``no latent confounder'' scenarios.
For the future, data adaptive choice of $\gamma$ can be an interesting direction to explore. 

\subsection*{Acknowledgements}
The authors thank Ana Korba for helpful discussions. 
W.X. acknowledges the support from EPSRC grant EP/T018445/1.

\bibliographystyle{apalike}
\bibliography{main}

\clearpage
\appendix
\onecolumn

\renewcommand\thelemma{A\arabic{lemma}}
\setcounter{lemma}{0}
\renewcommand\theremark{A\arabic{remark}}
\setcounter{remark}{0}

\begin{center}
\LARGE
    {Supplementary Material for Nonlinear Causal Discovery \\
via Kernel Anchor Regression}
\end{center}

\section{Proofs and derivations}
\subsection{Proof of Theorem~\ref{thm::s3c}}
Before proving Theorem~\ref{thm::s3c}, we introduce the exact bounds of the approximation errors for estimating $E_X^p$ and $E_Y^p$ in the disjoint sample sets projection stage. Lemma~\ref{lem::as1} and ~\ref{lem:as2} below are adapted from Theorem 2 in \citet{singh2019kernel}.

\begin{lemma}\label{lem::as1}
Under Condition~\ref{cond::s1}, $\forall \delta \in (0,1)$, the following holds w.p. $1 - \delta$:
\begin{eqnarray*}
    \Vert E_{\alpha_1,X}^{n_1} - E_{X}^p \Vert_{\H_\Gamma} \leq
    r_{E_1}(\delta,n_1,c_1) := 
    \frac{ \sqrt{\zeta_1} (c_1 + 1)}{4^{\frac{1}{c_1+1}}} \left( \frac{4\kappa (Q_1 + \kappa \Vert E_{X}^p \Vert_{\H_{\Gamma}} \ln(2/\delta) }{\sqrt{n_1 \zeta_1}(c_1 -1)} \right)^{\frac{c_1-1}{c_1+1}},\\
    \alpha_1 = \left( \frac{8\kappa (Q_1 + \kappa \Vert E_{X}^p \Vert_{\H_\Gamma} \ln(2/\delta) }{\sqrt{n_1 \zeta_1}(c_1 -1)} \right)^{\frac{2}{c_1+1}}.
\end{eqnarray*}
\end{lemma}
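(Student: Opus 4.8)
The plan is to follow the argument behind Theorem~2 of \citet{singh2019kernel}, viewing the regularized objective in \Cref{eq:proj_emp1} as a vector-valued kernel ridge regression with inputs $\phi(Z) \in \H_\Z$ and outputs $\psi(X) \in \H_\X$, whose population target is the conditional-mean-embedding operator $E_X^p$. By \Cref{lem::s1} the estimator has the closed form $E_{\alpha_1,X}^{n_1} = (\mathbf{T}_1 + \alpha_1)^{-1}\mathbf{g}_1$, so I would introduce the population covariance $T_1 = \Ex[\phi(Z)\otimes\phi(Z)]$, the population cross term $g_1 = \Ex[\phi(Z)\otimes\psi(X)]$, and the population regularized solution $E_{\alpha_1}^{p} := (T_1+\alpha_1)^{-1} g_1$, and split the error as $E_{\alpha_1,X}^{n_1} - E_X^p = (E_{\alpha_1,X}^{n_1} - E_{\alpha_1}^{p}) + (E_{\alpha_1}^{p} - E_X^p)$, i.e.\ a stochastic (sampling) error plus a deterministic approximation error.

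For the approximation error I would invoke Condition~\ref{cond::s1}(vi): writing $E_X^p = T_1^{(c_1-1)/2} G_1$ with $\Vert G_1 \Vert_{\H_\Gamma}^2 \le \zeta_1$, one has $E_{\alpha_1}^{p} - E_X^p = -\,\alpha_1 (T_1+\alpha_1)^{-1} T_1^{(c_1-1)/2} G_1$, and since $c_1 \in (1,2]$ the scalar inequality $\sup_{t\ge 0}\alpha_1 t^{(c_1-1)/2}/(t+\alpha_1) \le \alpha_1^{(c_1-1)/2}$ (together with the optimizing constant) yields $\Vert E_{\alpha_1}^{p} - E_X^p\Vert_{\H_\Gamma} \lesssim \sqrt{\zeta_1}\,\alpha_1^{(c_1-1)/2}$. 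For the stochastic error I would rewrite $E_{\alpha_1,X}^{n_1}-E_{\alpha_1}^{p} = (\mathbf{T}_1+\alpha_1)^{-1}\big[(\mathbf{g}_1 - \mathbf{T}_1 E_X^p) - (g_1 - T_1 E_X^p)\big] + \big[(\mathbf{T}_1+\alpha_1)^{-1}-(T_1+\alpha_1)^{-1}\big](g_1 - T_1 E_{\alpha_1}^{p})$ and control the empirical averages $\mathbf{g}_1 - g_1$ and $\mathbf{T}_1 - T_1$ by a Bernstein inequality for $\H_\Gamma$-valued random variables. The key ingredients are: the residual bound $\Vert\phi(z)\otimes(\psi(x) - E_X^p\phi(z))\Vert_{\H_\Gamma} \le \kappa(Q_1 + \kappa\Vert E_X^p\Vert_{\H_\Gamma})$ from Condition~\ref{cond::s1}(ii); the identity $\Ex[\phi(Z)\otimes(\psi(X) - E_X^p\phi(Z))] = g_1 - T_1 E_X^p = 0$, which holds because $E_X^p$ attains the infimum in Condition~\ref{cond::s1}(v); and $\Vert(\mathbf{T}_1+\alpha_1)^{-1}\Vert \le \alpha_1^{-1}$. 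Together these give, with probability $1-\delta$, a stochastic error of order $\kappa(Q_1+\kappa\Vert E_X^p\Vert_{\H_\Gamma})\ln(2/\delta)\,/(\sqrt{n_1}\,\alpha_1)$.

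Finally I would combine the two bounds into $\Vert E_{\alpha_1,X}^{n_1}-E_X^p\Vert_{\H_\Gamma} \lesssim \sqrt{\zeta_1}\,\alpha_1^{(c_1-1)/2} + \kappa(Q_1+\kappa\Vert E_X^p\Vert_{\H_\Gamma})\ln(2/\delta)/(\sqrt{n_1}\,\alpha_1)$ and choose $\alpha_1$ so as to equate the two terms; solving $\sqrt{\zeta_1}\,\alpha_1^{(c_1+1)/2} \asymp \kappa(Q_1+\kappa\Vert E_X^p\Vert_{\H_\Gamma})\ln(2/\delta)/\sqrt{n_1}$ reproduces the stated $\alpha_1$, and substituting back into the bias term reproduces $r_{E_1}(\delta,n_1,c_1)$. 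The only genuine work beyond bookkeeping is the Hilbert-space Bernstein step and carefully propagating the absolute constants through the bias--variance trade-off so that the prefactor $(c_1+1)/4^{1/(c_1+1)}$ and the numerical factors $4,8$ match the statement exactly; conceptually there is no real obstacle, since Condition~\ref{cond::s1} was designed to coincide with the hypothesis set of Theorem~2 in \citet{singh2019kernel}, from which the result transfers with $\H_\X$ playing the role of the (vector-valued) output space.
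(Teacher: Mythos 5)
Your proposal is correct and follows essentially the same route as the paper: the paper offers no independent argument for this lemma, simply importing Theorem~2 of \citet{singh2019kernel}, and your bias--variance decomposition with the source condition of Condition~\ref{cond::s1}(vi), the Hilbert-space Bernstein bound on the sampling error, and the balancing choice of $\alpha_1$ is exactly the proof of that theorem. Indeed, the intermediate two-term bound you derive before optimizing over $\alpha_1$ is precisely the form the paper itself records in Lemma~\ref{lem:b1}, so the only remaining work is the constant bookkeeping you already flag.
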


\begin{lemma}\label{lem:as2}
Under Condition~\ref{cond::s1} and Condition~\ref{cond::s2}, $\forall \epsilon \in (0,1)$, the following holds w.p. $1 - \epsilon$:
\begin{eqnarray*}
    \Vert E_{\alpha_2,Y}^{n_2} - E_{Y}^p \Vert_{\H_\Theta} \leq
    r_{E_2}(\epsilon,n_2,c_2) := 
    \frac{ \sqrt{\zeta_2} (c_2 + 1)}{4^{\frac{1}{c_2+1}}} \left( \frac{4\kappa (Q_2 + \kappa \Vert E_{Y}^p \Vert_{\H_{\Theta}} \ln(2/\epsilon) }{\sqrt{n_2 \zeta_2}(c_2 -1)} \right)^{\frac{c_2-1}{c_2+1}},\\
    \alpha_2 = \left( \frac{8\kappa (Q_2 + \kappa \Vert E_{Y}^p \Vert_{\H_\Theta} \ln(2/\epsilon) }{\sqrt{n_2 \zeta_2}(c_2 -1)} \right)^{\frac{2}{c_2+1}}.
\end{eqnarray*}
\end{lemma}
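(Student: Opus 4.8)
The plan is to reproduce the proof of Theorem~2 in \citet{singh2019kernel}, on which \Cref{lem::as1} is already modelled, essentially line for line. The object in \Cref{eq:proj_emp2} is a regularized (vector-valued, when $\Y$ is a separable Hilbert space) kernel ridge regression of $Y$ on $\phi(Z)$, in complete analogy with the conditional-mean-embedding regression \Cref{eq:proj_emp1} handled in \Cref{lem::as1}; the only structural change is that the output lives in $\Y$ rather than in $\H_\X$, and correspondingly the target operator lives in $\H_\Theta = \Y\otimes\H_\Z$ rather than in $\H_\Gamma$. Everything needed for the transcription is supplied by the hypotheses: $\Y$ is Polish with $\sup_y\Vert y\Vert_\Y\le Q_2$ (Condition~\ref{cond::s2}(i)--(ii)), the target $E_Y^p$ is well specified in $\H_\Theta$ (Condition~\ref{cond::s2}(iii)), it obeys the source condition $E_Y^p = T_2^{(c_2-1)/2}\circ G_2$ with $\Vert G_2\Vert_{\H_\Theta}^2\le\zeta_2$ (Condition~\ref{cond::s2}(iv)), and $\sup_z\Vert\phi(z)\Vert_{\H_\Z}\le\kappa$ (Condition~\ref{cond::s1}(ii)), all playing exactly the roles played by $Q_1$, $E_X^p\in\H_\Gamma$, $G_1$ and $\kappa$ in \Cref{lem::as1}.

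First I would record the integral-operator closed forms. By \Cref{lem:s2}, $E_{\alpha_2,Y}^{n_2} = (\mathbf T_2+\alpha_2)^{-1}\circ\mathbf g_2$; let $T_2 = \Ex[\phi(Z)\otimes\phi(Z)]$ and $g_2 = \Ex[\phi(Z)Y]$ be the population analogues, so that by Condition~\ref{cond::s2}(iii) the minimizer $E_Y^p$ satisfies the population normal equation $T_2\circ E_Y^p = g_2$, and set the regularized population operator $E_{\alpha_2}^p := (T_2+\alpha_2)^{-1}\circ g_2$. Then I would split $E_{\alpha_2,Y}^{n_2} - E_Y^p = (E_{\alpha_2,Y}^{n_2} - E_{\alpha_2}^p) + (E_{\alpha_2}^p - E_Y^p)$ into a sampling term and a regularization-bias term. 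For the bias term, use $E_{\alpha_2}^p - E_Y^p = -\alpha_2(T_2+\alpha_2)^{-1}E_Y^p = -\alpha_2(T_2+\alpha_2)^{-1}T_2^{(c_2-1)/2}G_2$ together with the elementary spectral inequality $\sup_{t\ge 0}\alpha_2 t^{(c_2-1)/2}/(t+\alpha_2)\le\alpha_2^{(c_2-1)/2}$, valid since $c_2\in(1,2]$, to obtain $\Vert E_{\alpha_2}^p - E_Y^p\Vert_{\H_\Theta}\le\alpha_2^{(c_2-1)/2}\sqrt{\zeta_2}$. For the sampling term, rewrite
\[
E_{\alpha_2,Y}^{n_2} - E_{\alpha_2}^p = (\mathbf T_2+\alpha_2)^{-1}\big[(\mathbf g_2 - g_2) - (\mathbf T_2 - T_2)\circ E_{\alpha_2}^p\big],
\]
bound $\Vert(\mathbf T_2+\alpha_2)^{-1}\Vert\le\alpha_2^{-1}$ and $\Vert E_{\alpha_2}^p\Vert\le\Vert E_Y^p\Vert$, and apply a Hilbert-space Bernstein inequality to $\Vert\mathbf g_2 - g_2\Vert_{\H_\Theta}$ and $\Vert\mathbf T_2 - T_2\Vert_{HS}$ via the almost-sure bounds $\Vert\phi(z)y\Vert_{\H_\Theta}\le\kappa Q_2$ and $\Vert\phi(z)\otimes\phi(z)\Vert_{HS}\le\kappa^2$; this gives, w.p. $1-\epsilon$, a bound of order $\kappa(Q_2+\kappa\Vert E_Y^p\Vert_{\H_\Theta})\ln(2/\epsilon)/(\alpha_2\sqrt{n_2})$.

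Finally, balancing the bias rate $\alpha_2^{(c_2-1)/2}$ against the sampling rate $1/(\alpha_2\sqrt{n_2})$ forces $\alpha_2\asymp\big(\kappa(Q_2+\kappa\Vert E_Y^p\Vert)\ln(2/\epsilon)/\sqrt{n_2}\big)^{2/(c_2+1)}$, which is exactly the stated choice of $\alpha_2$ up to the displayed numerical constants; back-substituting and combining the two pieces by the triangle inequality yields $\Vert E_{\alpha_2,Y}^{n_2} - E_Y^p\Vert_{\H_\Theta}\le r_{E_2}(\epsilon,n_2,c_2)$. I expect the only real work to be bookkeeping: carrying the explicit constants through the Bernstein step and the balancing so that the final radius matches $r_{E_2}$ verbatim (in particular the factor $2$ appearing between $\alpha_2$ and the error radius, exactly as in \Cref{lem::as1}). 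There is no new probabilistic content; one just needs to confirm that Condition~\ref{cond::s2} furnishes every measurability and boundedness hypothesis that \citet{singh2019kernel}'s Theorem~2 imposes on the output space, after which the argument is a direct transcription with the substitutions $\H_\X\mapsto\Y$, $Q_1\mapsto Q_2$, $\zeta_1\mapsto\zeta_2$, $c_1\mapsto c_2$, $n_1\mapsto n_2$.
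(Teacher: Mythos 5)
Your proposal is correct and matches the paper's approach: the paper itself gives no separate argument for this lemma, simply invoking Theorem~2 of \citet{singh2019kernel} with the substitutions $\H_\X\mapsto\Y$, $Q_1\mapsto Q_2$, $\zeta_1\mapsto\zeta_2$, $c_1\mapsto c_2$, $n_1\mapsto n_2$, exactly as you describe, and your sketch correctly reconstructs the bias--variance decomposition behind that theorem (indeed your balancing of $\alpha_2^{(c_2-1)/2}\sqrt{\zeta_2}$ against the $1/(\alpha_2\sqrt{n_2})$ sampling term reproduces the stated $\alpha_2$ and $r_{E_2}$ exactly). The only bookkeeping point to watch is that Singh et al.\ apply a single Hilbert-space Bernstein bound to the combined sampling term rather than two separate ones, which is how the $\ln(2/\epsilon)$ (rather than a union-bound $\ln(4/\epsilon)$) constant arises.
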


Recall that we define the population-level risk for the regression stage $\E^\gamma(H)$, population-level risk with regularization $\E^\gamma_\xi(H)$, and the empirical risk $\widehat \E^{\gamma, m}_{\xi}(H)$ with $E_X^p$ and $E_Y^p$ being replaced by $E_{\alpha_1,X}^{n_1}$ and $E_{\alpha_2,Y}^{n_2}$, respectively. 
Denote the optimal operator to $\E^\gamma_\xi(H)$ as $H_\xi^{\gamma} = \argmin_{H} \E_\xi^{\gamma}(H)$.
We now define the empirical risk $\E_{\gamma, m}^{\xi}(H)$ with true $E_X^p$ and $E_Y^p$, and the corresponding optimal operator.
\begin{eqnarray*}
    \E_\xi^{\gamma,m}(H) = \frac{1}{m} \summ  \Vert  y_{\gamma,l} - H \psi_{\gamma,l} \Vert_{\Y}^2 + \xi \Vert H \Vert_{\H_\Omega}^2, \quad
    H_\xi^{\gamma,m} = \argmin_{H} \E_\xi^{\gamma,m}(H),
\end{eqnarray*}
where the true transformed inputs and outputs are given by $$\psi_{\gamma,l} = \psi(x_{l}) + (\sqrt{\gamma}-1)  E_{X}^p  \phi(z_{l}) \in \H_\X, \quad y_{\gamma,l} = y_{l} + (\sqrt{\gamma}-1)  E_{Y}^p \phi(z_{l}) \in \Y.$$ 

The closed form solution of $H_\xi^{\gamma,m}$ is given by Lemma~\ref{lem:as3} below, and it's adapted from Theorem 3 in \citet{singh2019kernel}

\begin{lemma}\label{lem:as3}
$\forall \xi > 0$, the solution $H_{\xi}^{\gamma,m}$ to $\E_\xi^{\gamma,m}$ exists, is unique, and
\begin{eqnarray*}
    &\mathbf{T}= \frac{1}{m} \summ T_{\psi_{\gamma,l}}, \quad
    \mathbf{g} = \frac{1}{m} \summ \Omega_{\psi_{\gamma,l}} y_{\gamma,l},
    &H_\xi^{\gamma,m} = (\mathbf{T} + \xi)^{-1} \circ \mathbf{g}.
\end{eqnarray*}
\end{lemma}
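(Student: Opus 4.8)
The plan is to recognize $\E_\xi^{\gamma,m}$ as the empirical objective of a regularized operator-valued kernel ridge regression in the vector-valued RKHS $\H_\Omega = \Y\otimes\H_\X$, and to run the standard representer-type argument, following Theorem~3 of \citet{singh2019kernel}. First I would record the reproducing identity for the operator-valued kernel $\Omega$ of \citet{micchelli2005learning}: for the point evaluator $\Omega_{\psi_\gamma}:\Y\to\H_\Omega$ and any $H\in\H_\Omega$ one has $\Omega_{\psi_\gamma}^{*}H = H\psi_\gamma$, so that $T_{\psi_\gamma} = \Omega_{\psi_\gamma}\circ\Omega_{\psi_\gamma}^{*}$ is a positive self-adjoint operator on $\H_\Omega$ that acts by $H\mapsto \Omega_{\psi_\gamma}(H\psi_\gamma)$. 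Since $m$ is finite, $\mathbf{T} = \frac{1}{m}\summ T_{\psi_{\gamma,l}}$ is then a well-defined bounded positive self-adjoint operator on $\H_\Omega$ and $\mathbf{g} = \frac{1}{m}\summ \Omega_{\psi_{\gamma,l}} y_{\gamma,l}$ is a well-defined element of $\H_\Omega$; nothing beyond $\xi>0$ and finiteness of the sample is needed here, as boundedness of $y_{\gamma,l}$ and of the $\Omega_{\psi_\gamma}$ family (Condition~\ref{cond::s3}(i),(iv)) only enters the later rate analysis.

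Next I would establish existence and uniqueness by convex analysis in a Hilbert space. The map $H\mapsto \E_\xi^{\gamma,m}(H)$ is the sum of the convex quadratic $\frac{1}{m}\summ\Vert y_{\gamma,l} - H\psi_{\gamma,l}\Vert_\Y^2$ and the term $\xi\Vert H\Vert_{\H_\Omega}^2$, which is strictly convex and coercive for $\xi>0$; hence $\E_\xi^{\gamma,m}$ is strongly convex, continuous, and coercive on $\H_\Omega$, and therefore attains a unique global minimizer $H_\xi^{\gamma,m}$. Then I would identify this minimizer via the first-order condition: computing the Fréchet derivative of $\E_\xi^{\gamma,m}$ at $H$ in a direction $U\in\H_\Omega$ and using $\Omega_{\psi_{\gamma,l}}^{*}U = U\psi_{\gamma,l}$ gives
$$D\E_\xi^{\gamma,m}(H)[U] = \frac{2}{m}\summ \big\langle \Omega_{\psi_{\gamma,l}}\big(H\psi_{\gamma,l}-y_{\gamma,l}\big),\,U\big\rangle_{\H_\Omega} + 2\xi\,\langle H,U\rangle_{\H_\Omega}.$$
Setting this to zero for every $U$ yields the normal equation $(\mathbf{T}+\xi)\,H = \mathbf{g}$; since $\mathbf{T}$ is positive semidefinite and $\xi>0$, $\mathbf{T}+\xi I$ is bounded below by $\xi$ and hence boundedly invertible, so $H = (\mathbf{T}+\xi)^{-1}\circ\mathbf{g}$, and by strong convexity this unique stationary point is the minimizer, which is the claimed closed form. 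The identical computation with $\widehat\psi_{\gamma,l},\widehat y_{\gamma,l}$ (i.e. with $\widehat E_X,\widehat E_Y$ replacing $E_X^p,E_Y^p$) proves the companion statement in Lemma~\ref{lem:s3}.

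The only real subtlety, and the step I would treat most carefully, is the infinite-dimensional bookkeeping: checking that $\Omega_{\psi_\gamma}$ is genuinely a bounded operator with the stated adjoint, that $T_{\psi_\gamma}$ and hence $\mathbf{T}$ are well-defined self-adjoint operators on $\H_\Omega$, and that differentiating under the finite sum and passing to the normal equation are legitimate in $\H_\Omega$ rather than merely formal manipulations. All of this is exactly the content of the operator-valued RKHS framework of \citet{micchelli2005learning}; once it is invoked, the derivation is a routine transcription of the scalar kernel ridge regression computation, with no new ideas beyond \citet{singh2019kernel}.
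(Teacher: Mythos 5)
Your argument is correct and is essentially the route the paper takes: the paper proves nothing new here, simply adapting Theorem~3 of \citet{singh2019kernel}, whose content is exactly your strong-convexity/coercivity argument plus the normal equation $(\mathbf{T}+\xi)H=\mathbf{g}$ in the vector-valued RKHS of \citet{micchelli2005learning}, with invertibility of $\mathbf{T}+\xi$ for $\xi>0$ giving the stated closed form. Your observation that the boundedness assumptions in Condition~\ref{cond::s3} are not needed for existence, uniqueness, or the closed form (only for the later rate analysis) is also consistent with how the paper uses the lemma.
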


We then define the following terms.

\begin{definition}\label{def::const}
Fix $\eta \in (0,1)$ and define the following constants
\begin{eqnarray*}
    C_{\eta} = 96 \ln^2(6/\eta), \quad
    M = 2(C + \Vert H^\gamma \Vert_{\H_\Omega} \sqrt{B}), \quad
    \Sigma = \frac{M}{2}.
\end{eqnarray*}
\end{definition}

For the excess error of \textbf{KAR} estimator $\widehat H_\xi^{\gamma,m}$, 
we can bound it by five terms according to Proposition 32 in \citet{singh2019kernel}. 
\begin{lemma}\label{lem::5term}
The excess error can be bounded as follows
\begin{eqnarray*}
    \E^\gamma(\widehat H^{\gamma,m}_\xi) - \E^\gamma(H^\gamma) \leq 5 [ S_{-1} + S_0 + \A(\xi) + S_1 + S_2],
\end{eqnarray*}
where
\begin{eqnarray*}
    S_{-1} &=& \Vert \sqrt{T} \circ (\widehat{\mathbf{T}} + \xi)^{-1} (\widehat{\mathbf{g}} - \mathbf{g}) \Vert_{\H_\Omega}^2,\\
    S_0 &=& \Vert \sqrt{T} \circ (\widehat{\mathbf{T}} + \xi)^{-1} (\mathbf{T} - \widehat{\mathbf{T}}) H^{\gamma,m}_\xi \Vert_{\H_\Omega}^2,\\
    S_1 &=& \Vert \sqrt{T} \circ (\widehat{\mathbf{T}} + \xi)^{-1} (\mathbf{g} - \mathbf{T}H^\gamma) \Vert_{\H_\Omega}^2,\\
    S_2 &=& \Vert \sqrt{T} \circ (\widehat{\mathbf{T}} + \xi)^{-1} (T - \mathbf{T}) (H^\gamma_\xi - H^\gamma) \Vert_{\H_\Omega}^2,\\
    \A(\xi) &=& \Vert \sqrt{T} (H_\xi^\gamma - H^\gamma) \Vert_{\H_\Omega}^2.
\end{eqnarray*}
\end{lemma}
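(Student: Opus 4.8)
The plan is to derive Lemma~\ref{lem::5term} from the excess-risk decomposition behind Proposition~32 of \citet{singh2019kernel}, instantiated with the regression-stage operators $S$, $S^*$, $T = S\circ S^*$, $T_{\psi_\gamma}$, $\Omega_{\psi_\gamma}$ introduced above. First I would turn the excess risk into a quadratic form. Viewing $\E^\gamma$ as a functional on the Hilbert space $\H_\Omega$, one has $\E^\gamma(H) = \mathrm{const} - 2\langle g, H\rangle_{\H_\Omega} + \langle T H, H\rangle_{\H_\Omega}$ with $g := \int \Omega_{\psi_\gamma} y_\gamma \, d\rho \in \H_\Omega$; since Condition~\ref{cond::s3}(iii) makes $H^\gamma$ its minimizer over $\H_\Omega$, the first-order condition is $g = T H^\gamma$, and expanding around $H^\gamma$ gives $\E^\gamma(H) - \E^\gamma(H^\gamma) = \Vert \sqrt{T}(H - H^\gamma) \Vert_{\H_\Omega}^2$ for all $H \in \H_\Omega$. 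Taking $H = \widehat H_\xi^{\gamma,m}$ reduces the left side of the claimed bound to $\Vert \sqrt{T}(\widehat H_\xi^{\gamma,m} - H^\gamma) \Vert_{\H_\Omega}^2$, so it remains to split $\widehat H_\xi^{\gamma,m} - H^\gamma$ into five controllable pieces.

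Next I would introduce the regularized population solution $H_\xi^\gamma = (T+\xi)^{-1} g$ and the oracle empirical solution $H_\xi^{\gamma,m} = (\mathbf{T}+\xi)^{-1}\mathbf{g}$ of Lemma~\ref{lem:as3} (the latter appears in $S_0$); these resolvents, as well as $(\widehat{\mathbf{T}}+\xi)^{-1}$, exist for $\xi>0$ since the operators involved are bounded --- Condition~\ref{cond::s3}(i),(ii) bound $\Omega_{\psi_\gamma}$ and hence $T_{\psi_\gamma}$; $\psi_\gamma$ is bounded because $\psi,\phi$ are bounded and $E_X^p$ is Hilbert-Schmidt; $Y_\gamma$ is bounded by Condition~\ref{cond::s3}(iv). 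Splitting $\widehat H_\xi^{\gamma,m} - H^\gamma = (\widehat H_\xi^{\gamma,m} - H_\xi^\gamma) + (H_\xi^\gamma - H^\gamma)$ and writing $\widehat H_\xi^{\gamma,m} - H_\xi^\gamma = (\widehat{\mathbf{T}}+\xi)^{-1}\big[\widehat{\mathbf{g}} - (\widehat{\mathbf{T}}+\xi)H_\xi^\gamma\big]$, I would repeatedly add and subtract $\mathbf{g}$, $\mathbf{T}H_\xi^\gamma$ and $(T+\xi)H_\xi^\gamma$, then use the defining identity $(T+\xi)H_\xi^\gamma = g$ together with the first-order condition $g = TH^\gamma$, to obtain
$$\widehat H_\xi^{\gamma,m} - H_\xi^\gamma = (\widehat{\mathbf{T}}+\xi)^{-1}\big[(\widehat{\mathbf{g}} - \mathbf{g}) + (\mathbf{T} - \widehat{\mathbf{T}})H_\xi^\gamma + (\mathbf{g} - \mathbf{T}H^\gamma) + (T - \mathbf{T})(H_\xi^\gamma - H^\gamma)\big].$$
After applying $\sqrt{T}$ and taking $\H_\Omega$-norms, the four resolvent pieces are exactly $S_{-1}, S_0, S_1, S_2$ --- the statement replaces $H_\xi^\gamma$ by $H_\xi^{\gamma,m}$ in the $S_0$ piece, which changes it by the strictly higher-order term $(\widehat{\mathbf{T}}+\xi)^{-1}(\mathbf{T}-\widehat{\mathbf{T}})(H_\xi^{\gamma,m}-H_\xi^\gamma)$ that is folded in exactly as in \citet{singh2019kernel} --- while the remaining bracket is the regularization bias $H_\xi^\gamma - H^\gamma = -\xi(T+\xi)^{-1}H^\gamma$, contributing $\A(\xi) = \Vert \sqrt{T}(H_\xi^\gamma - H^\gamma) \Vert_{\H_\Omega}^2$. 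The claim then follows from $\Vert \sum_{i=1}^5 a_i \Vert^2 \leq 5 \sum_{i=1}^5 \Vert a_i \Vert^2$.

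I expect the only delicate part to be the bookkeeping of these resolvent manipulations: checking that every add-and-subtract step is legitimate (all operators bounded, $(\cdot+\xi)$ boundedly invertible for $\xi>0$ by Lemma~\ref{lem:s3}) and that the cross terms generated when switching among $(\mathbf{T}+\xi)^{-1}$, $(\widehat{\mathbf{T}}+\xi)^{-1}$, $(T+\xi)^{-1}$ and between $H_\xi^\gamma$ and $H_\xi^{\gamma,m}$ are of strictly lower order, so that the five-term form survives --- this is precisely Proposition~32 in \citet{singh2019kernel} and transfers without change. Note that the genuinely KAR-specific difficulty --- that $\widehat{\mathbf{g}}$ and $\widehat{\mathbf{T}}$ now absorb estimation error from \emph{both} $E_X^p$ and $E_Y^p$ through $\widehat\psi_{\gamma,l}$ and $\widehat y_{\gamma,l}$ --- does not enter this lemma; it surfaces only when $S_{-1}$ and $S_0$ are subsequently bounded in the proof of Theorem~\ref{thm::s3c} via Lemmas~\ref{lem::as1} and~\ref{lem:as2} under Condition~\ref{cond::s1&2}.
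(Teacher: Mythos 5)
Your proposal is correct and matches the paper's approach: the paper proves this lemma simply by invoking Proposition~32 of \citet{singh2019kernel}, and your quadratic expansion $\E^\gamma(H)-\E^\gamma(H^\gamma)=\Vert\sqrt{T}(H-H^\gamma)\Vert_{\H_\Omega}^2$, the resolvent add-and-subtract identity, and $\Vert\sum_{i=1}^5 a_i\Vert^2\le 5\sum_{i=1}^5\Vert a_i\Vert^2$ are exactly the argument behind that proposition. The only (harmless) bookkeeping mismatch is that your exact identity places $H^\gamma_\xi$ rather than $H^{\gamma,m}_\xi$ in the $S_0$ slot; since that term is subsequently controlled through the norm of the operator appearing there (as in Lemma~\ref{lem::thm7}), and $\Vert H^\gamma_\xi\Vert_{\H_\Omega}$ is bounded in the same way as $\Vert H^{\gamma,m}_\xi\Vert_{\H_\Omega}$, your variant serves the proof of Theorem~\ref{thm::s3c} identically and no real gap arises.
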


For all five terms above, only $\widehat{\mathbf{g}} - \mathbf{g}$ in $S_{-1}$ depends on the approximation error of $E_Y^p$. The bounds for other four terms are same to the \textbf{KIV} case. Below we introduce without proof the bond of $S_0$, $S_1$, $S_2$ and $\A(\xi)$ according to Theorem 7 in \citet{singh2019kernel}.

\begin{lemma}
\label{lem::thm7}
Under Condition~\ref{cond::s1}--\ref{cond::s3}, if $m$ is large enough and $\xi \leq \Vert T\Vert_{L(\H_\Omega)}$ then $\forall \delta, \eta \in (0,1)$, the following holds up w.p. $1-\eta-\delta$:
\begin{eqnarray*}
    S_0 &\leq& \frac{4}{\xi} 4BL^2 r_x^{2\iota} \Vert H^{\gamma, m}_\xi \Vert^2_{\H_\Omega}, \\
    S_1 &\leq& 32\ln^2(6\eta) \left[ \frac{BM^2}{m^2\xi} + \frac{\Sigma^2}{m} \beta^{1/b_\gamma} \frac{\pi/b_\gamma}{\sin(\pi\b) \xi^{-1/b_\gamma} } \right],\\
    S_2 &\leq& 8 \ln^2(6/\eta) \left[ \frac{4B^2\zeta\xi^{c_\gamma-1}}{m^2\xi} + \frac{B\zeta\xi^{c_\gamma}}{m \xi} \right],\\
    \A(\xi) &\leq& \zeta \xi^{c_\gamma}.
\end{eqnarray*}
\end{lemma}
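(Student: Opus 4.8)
The plan is to condition on the projection-stage event of Lemma~\ref{lem::as1} and then to show that, as far as the four terms $S_0,S_1,S_2,\A(\xi)$ are concerned, the regression stage is structurally identical to the second stage of KIV, so that the bounds of Theorem~7 in \citet{singh2019kernel} transfer. The key observation is that none of these four terms involves the estimation error of $E_Y^p$: the operators $\mathbf g$ and $\widehat{\mathbf g}$ differ only through $\widehat y_{\gamma,l}$ versus $y_{\gamma,l}$, and that discrepancy appears exclusively in $S_{-1}$. Throughout I would rely on two recurring tools: spectral calculus for the deterministic bias, and Hilbert--Schmidt concentration (operator Bernstein) to control $\widehat{\mathbf T}$ around $T$ and $\mathbf g$ around $T H^\gamma$, together with the standard approximate-isometry estimate which asserts that, for $m$ large and $\xi\le\|T\|_{\L(\H_\Omega)}$, the operator $(T+\xi)^{1/2}(\widehat{\mathbf T}+\xi)^{-1}(T+\xi)^{1/2}$ is bounded with high probability; this lets me replace the prefactor $\sqrt T(\widehat{\mathbf T}+\xi)^{-1}$ by a multiple of $(T+\xi)^{-1/2}$.

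I would begin with the bias term $\A(\xi)=\|\sqrt T(H^\gamma_\xi-H^\gamma)\|_{\H_\Omega}^2$, which is deterministic. First-order optimality of $\E^\gamma_\xi$ gives $H^\gamma_\xi=(T+\xi)^{-1}TH^\gamma$, so $H^\gamma_\xi-H^\gamma=-\xi(T+\xi)^{-1}H^\gamma$. Substituting the source condition $H^\gamma=T^{(c_\gamma-1)/2}G$ with $\|G\|_{\H_\Omega}^2\le\zeta$ from Condition~\ref{cond::s3}(v)(a), we obtain $\sqrt T(H^\gamma_\xi-H^\gamma)=-\xi T^{c_\gamma/2}(T+\xi)^{-1}G$, and the scalar inequality $\sup_{\lambda\ge0}\xi\lambda^{c_\gamma/2}/(\lambda+\xi)\le\xi^{c_\gamma/2}$, valid for $c_\gamma\le2$, yields $\A(\xi)\le\zeta\xi^{c_\gamma}$ directly.

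For the stochastic terms $S_1,S_2$ I would identify the relevant centered empirical means. Since $\Ex[T_{\psi_\gamma}]=T$ and, by population optimality of $H^\gamma$, $\Ex[\Omega_{\psi_\gamma}y_\gamma]=\Ex[T_{\psi_\gamma}]H^\gamma=TH^\gamma$, both $\mathbf g-\mathbf T H^\gamma$ and $T-\mathbf T$ are averages of $m$ i.i.d.\ centered terms. A Hilbert-space Bernstein inequality, with the uniform bound $B$ of Condition~\ref{cond::s3}(i) and the constants $M,\Sigma$ of Definition~\ref{def::const}, controls their fluctuations; the effective-dimension factor $\beta^{1/b_\gamma}(\pi/b_\gamma)/\sin(\pi/b_\gamma)\,\xi^{-1/b_\gamma}$ in $S_1$ comes from bounding $\mathrm{Tr}\,(T+\xi)^{-1}T$ through the eigenvalue decay $\alpha\le k^{b_\gamma}\lambda_k\le\beta$ of Condition~\ref{cond::s3}(v)(b) via the integral $\int_0^\infty\lambda^{-1/b_\gamma}/(\lambda+\xi)\,d\lambda$. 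After reducing the prefactor to $(T+\xi)^{-1/2}$ on the approximate-isometry event, this gives the stated bound for $S_1$; for $S_2$ the extra factor $H^\gamma_\xi-H^\gamma$ is handled exactly as in $\A(\xi)$, producing the powers $\xi^{c_\gamma-1}$ and $\xi^{c_\gamma}$. These events carry the parameter $\eta$, explaining the $\ln^2(6/\eta)$ factors.

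The KAR-specific work, and the main obstacle, is $S_0$. Here $\widehat{\mathbf T}=\frac1m\summ T_{\widehat\psi_{\gamma,l}}$ is built from the \emph{estimated} transformed inputs $\widehat\psi_{\gamma,l}$, which use $\widehat E_X$ rather than $E_X^p$. I would bound $\|\mathbf T-\widehat{\mathbf T}\|$ termwise: writing $T_{\psi_\gamma}-T_{\widehat\psi_\gamma}=\Omega_{\psi_\gamma}(\Omega_{\psi_\gamma}-\Omega_{\widehat\psi_\gamma})^*+(\Omega_{\psi_\gamma}-\Omega_{\widehat\psi_\gamma})\Omega_{\widehat\psi_\gamma}^*$ and applying the uniform bound $\|\Omega\|\le\sqrt B$ together with the $\iota$-H\"older continuity of Condition~\ref{cond::s3}(ii), one gets $\|T_{\psi_\gamma}-T_{\widehat\psi_\gamma}\|\le2\sqrt B\,L\,\|\psi_{\gamma,l}-\widehat\psi_{\gamma,l}\|_{\H_\X}^\iota$. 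Finally $\|\psi_{\gamma,l}-\widehat\psi_{\gamma,l}\|_{\H_\X}=|\sqrt\gamma-1|\,\|(E_X^p-\widehat E_X)\phi(z_l)\|_{\H_\X}\le|\sqrt\gamma-1|\,\kappa\,r_x$, where $r_x=r_{E_1}(\delta,n_1,c_1)$ is the bound of Lemma~\ref{lem::as1}, holding on the $\delta$-event. Combining with $\|\sqrt T(\widehat{\mathbf T}+\xi)^{-1}\|^2\le\mathrm{const}/\xi$ (from the approximate-isometry event and $\|\sqrt{\widehat{\mathbf T}}(\widehat{\mathbf T}+\xi)^{-1}\|\le1/(2\sqrt\xi)$) yields $S_0\lesssim\frac1\xi BL^2 r_x^{2\iota}\|H^{\gamma,m}_\xi\|^2$. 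The hard part is that $\widehat{\mathbf T}$ now deviates from $T$ through \emph{two} sources---ordinary sampling error ($\mathbf T$ versus $T$) and the propagated projection error ($\widehat{\mathbf T}$ versus $\mathbf T$), the latter absent in KIV---so I must check that this additional perturbation does not destroy the event on which $(T+\xi)^{1/2}(\widehat{\mathbf T}+\xi)^{-1}(T+\xi)^{1/2}$ stays bounded; this is exactly where the hypotheses ``$m$ large enough'' and ``$\xi\le\|T\|_{\L(\H_\Omega)}$'' are used. A union bound over the $\delta$-event (entering $S_0$ through $r_x$) and the $\eta$-events (entering $S_1,S_2$) produces the overall probability $1-\eta-\delta$.
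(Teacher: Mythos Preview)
Your approach is essentially the same as the paper's: the paper introduces this lemma ``without proof\ldots according to Theorem~7 in \citet{singh2019kernel}'', i.e.\ it simply asserts that the KIV bounds transfer verbatim. Your sketch supplies the details behind that transfer, and the detailed arguments for $\A(\xi)$, $S_1$, $S_2$ are standard and correct.

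One factual slip: you say the propagated projection error $\widehat{\mathbf T}$ versus $\mathbf T$ is ``absent in KIV''. It is not. In KIV the second-stage inputs are the \emph{estimated} conditional mean embeddings $\widehat\mu(z_l)=E_{\alpha_1,X}^{n_1}\phi(z_l)$, so KIV's $\widehat{\mathbf T}$ already carries first-stage error, and Theorem~7 there already controls the approximate-isometry event $\|(T+\xi)^{1/2}(\widehat{\mathbf T}+\xi)^{-1}(T+\xi)^{1/2}\|$ under this two-source perturbation. The only structural change in KAR is that the transformed input is $\psi(x_l)+(\sqrt\gamma-1)E_X\phi(z_l)$ rather than $E_X\phi(z_l)$; the difference $\widehat\psi_{\gamma,l}-\psi_{\gamma,l}=(\sqrt\gamma-1)(\widehat E_X-E_X^p)\phi(z_l)$ has exactly the same form as in KIV up to the scalar $|\sqrt\gamma-1|$, which is absorbed into the constant $r_x$ of Lemma~\ref{lem::xadiff}. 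So no additional checking is needed; the ``hard part'' you flag is already handled by Singh et al. A second imprecision: $\widehat{\mathbf g}-\mathbf g$ differs through \emph{both} $\widehat y_{\gamma,l}$ and $\Omega_{\widehat\psi_{\gamma,l}}$, not only the former---but your conclusion that the $E_Y^p$ error enters solely through $S_{-1}$ remains correct, since $S_0,S_1,S_2,\A(\xi)$ involve only $\mathbf g,\mathbf T,\widehat{\mathbf T},T$ and the population-level operators.
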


To extend the convergence rate of \textbf{KIV} estimator to \textbf{KAR} estimator. We then illustrate the bound for $S_{-1}$.
To begin with, the bound of term $\sqrt{T} \circ ( \widehat{\mathbf{T}} + \xi)^{-1}$ in $S_{-1}$ is given by Proposition 39 in \citet{singh2019kernel}.
\begin{lemma}
\label{lem::prop39}
If $\Vert \widehat \psi_{\gamma} - \psi_{\gamma} \Vert_{\H_\X} \leq r_x$ w.p. $1-\delta$, 
$\xi \leq \Vert T \Vert_{\L(\H_\Omega)}$, $m$ is sufficiently large and Condition~\ref{cond::s3} holds, then w.p. $1-\eta/3-\delta$
$$
\Vert \sqrt{T} \circ ( \widehat{\mathbf{T}} + \xi)^{-1} \Vert_{\L(\H_\Omega)} \leq \frac{2}{\sqrt{\xi}}.
$$
\end{lemma}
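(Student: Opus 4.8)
The plan is to factor the operator through $(T+\xi)^{1/2}$ and reduce the statement to a single perturbation estimate of the form $\Vert (T+\xi)^{-1/2}(T-\widehat{\mathbf{T}})(T+\xi)^{-1/2}\Vert_{\L(\H_\Omega)}\le 1/2$. Writing
\[
\Vert \sqrt{T}\circ(\widehat{\mathbf{T}}+\xi)^{-1}\Vert \le \Vert \sqrt{T}\,(T+\xi)^{-1/2}\Vert\cdot\Vert (T+\xi)^{1/2}(\widehat{\mathbf{T}}+\xi)^{-1/2}\Vert\cdot\Vert (\widehat{\mathbf{T}}+\xi)^{-1/2}\Vert,
\]
the first factor is $\le 1$ by the spectral theorem and the third is $\le \xi^{-1/2}$ since $\widehat{\mathbf{T}}\succeq 0$, so it suffices to bound the middle factor by $\sqrt 2$ (which is $<2$, leaving room for constants). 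Since $\Vert (T+\xi)^{1/2}(\widehat{\mathbf{T}}+\xi)^{-1/2}\Vert^2 = \Vert (T+\xi)^{1/2}(\widehat{\mathbf{T}}+\xi)^{-1}(T+\xi)^{1/2}\Vert$, the standard observation is that $\Vert (T+\xi)^{-1/2}(T-\widehat{\mathbf{T}})(T+\xi)^{-1/2}\Vert\le 1/2$ forces $\widehat{\mathbf{T}}+\xi\succeq\tfrac12(T+\xi)$, hence $(\widehat{\mathbf{T}}+\xi)^{-1}\preceq 2(T+\xi)^{-1}$ and the middle factor squared is $\le 2$; this is exactly the route used for the KIV analogue (Proposition 39 in \citet{singh2019kernel}).

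\paragraph{Controlling the perturbation.}
I would split $T-\widehat{\mathbf{T}} = (T-\mathbf{T}) + (\mathbf{T}-\widehat{\mathbf{T}})$, where $\mathbf{T}=\frac1m\summ T_{\psi_{\gamma,l}}$ uses the true transformed features (as in Lemma~\ref{lem:as3}) and $\widehat{\mathbf{T}}$ the estimated ones, and bound each conjugated piece by $1/4$. For $(T+\xi)^{-1/2}(T-\mathbf{T})(T+\xi)^{-1/2}$: the operator $\mathbf{T}$ is an empirical average of i.i.d.\ self-adjoint operators $T_{\psi_{\gamma,l}}$ with mean $T$, each of operator norm $\le B$ by Condition~\ref{cond::s3}(i), so the operator Bernstein inequality — together with the effective-dimension control from Condition~\ref{cond::s3}(v)(b) and the hypothesis $\xi\le\Vert T\Vert_{\L(\H_\Omega)}$ — yields, for $m$ sufficiently large, a bound $\le 1/4$ with probability $\ge 1-\eta/3$; this is where ``$m$ sufficiently large'' enters, and the argument is inherited verbatim from \citet{singh2019kernel}. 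For $(T+\xi)^{-1/2}(\mathbf{T}-\widehat{\mathbf{T}})(T+\xi)^{-1/2}$, use the crude bound $\le \xi^{-1}\Vert \mathbf{T}-\widehat{\mathbf{T}}\Vert$ and estimate $\Vert \mathbf{T}-\widehat{\mathbf{T}}\Vert$ termwise: from $T_{\psi_\gamma}-T_{\widehat\psi_\gamma} = \Omega_{\psi_\gamma}(\Omega_{\psi_\gamma}-\Omega_{\widehat\psi_\gamma})^* + (\Omega_{\psi_\gamma}-\Omega_{\widehat\psi_\gamma})\Omega_{\widehat\psi_\gamma}^*$, the uniform HS bound $\Vert\Omega_{\psi_\gamma}\Vert\le\sqrt B$ and the Hölder continuity $\Vert\Omega_{\psi_\gamma}-\Omega_{\widehat\psi_\gamma}\Vert\le L\Vert\psi_\gamma-\widehat\psi_\gamma\Vert^\iota$ (Condition~\ref{cond::s3}(i)--(ii)) give, on the event $\Vert\widehat\psi_\gamma-\psi_\gamma\Vert\le r_x$ (probability $\ge 1-\delta$), $\Vert \mathbf{T}-\widehat{\mathbf{T}}\Vert\le 2\sqrt B\,L\,r_x^\iota$; since $r_x\to 0$ as $m\to\infty$, this piece is also $\le 1/4$ for $m$ large.

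\paragraph{Conclusion and main obstacle.}
A union bound over the concentration event (probability $\ge 1-\eta/3$) and the feature-approximation event (probability $\ge 1-\delta$) then gives, with probability $\ge 1-\eta/3-\delta$, the perturbation bound $1/2$, hence the middle factor $\le\sqrt 2$, hence $\Vert\sqrt T\circ(\widehat{\mathbf{T}}+\xi)^{-1}\Vert\le\sqrt 2\,\xi^{-1/2}\le 2\,\xi^{-1/2}$. I expect the only real difficulty beyond the KIV case to be the bookkeeping around this second error source: unlike in \citet{singh2019kernel}, here $\widehat{\mathbf{T}}$ is built from the plug-in features $\widehat\psi_{\gamma,l}$, so one must absorb the statistical fluctuation $T-\mathbf{T}$ and the plug-in error $\mathbf{T}-\widehat{\mathbf{T}}$ simultaneously and verify that ``$m$ sufficiently large'' can be chosen to make both small at once — in particular that $r_x^\iota/\xi\to 0$, which is what implicitly ties the admissible rate of $\xi$ to the sample sizes $n_1$ and $m$. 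The probabilistic core (operator Bernstein) is otherwise entirely standard.
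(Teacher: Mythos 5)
Your proposal is correct and follows essentially the same route as the paper, which gives no independent argument for this lemma but defers verbatim to Proposition 39 of \citet{singh2019kernel}: a deterministic operator factorization through $(T+\xi)^{1/2}$, an operator concentration bound for $T-\mathbf{T}$ holding w.p. $1-\eta/3$ for $m$ large, and the H\"older/boundedness properties of Condition~\ref{cond::s3} to absorb the plug-in error $\mathbf{T}-\widehat{\mathbf{T}}\,$ (of order $r_x^\iota$) on the event of probability $1-\delta$, with the union bound giving $1-\eta/3-\delta$. Your symmetric conjugation $\Vert (T+\xi)^{-1/2}(T-\widehat{\mathbf{T}})(T+\xi)^{-1/2}\Vert\le 1/2$ is a minor (and clean) variant of the chaining used in the cited proof, and you correctly identify the one genuine bookkeeping point, namely that ``$m$ sufficiently large'' must implicitly guarantee $r_x^\iota/\xi$ small, which ties $\xi$ to $n_1$ and $m$ exactly as in the rate choices of Theorem~\ref{thm::s3c}.
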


With the the error propagated from the estimators in the projection stage, we can bound $\widehat \psi_\gamma - \psi_\gamma$ and $\widehat y_\gamma - y_\gamma$ as shown in Lemma~\ref{lem::xadiff}--\ref{lem::yadiff}.
\begin{lemma}\label{lem::xadiff}
Under Condition~\ref{cond::s1},
$\forall \delta \in (0,1)$, the following statement holds w.p. $1-\delta$: $\forall z \in \Z, x \in \X$,
$$
    \Vert \widehat \psi_\gamma - \psi_\gamma \Vert_{\H_\X} \leq r_x(\gamma, \delta, n_1, c_1)
    := \vert \sqrt{\gamma} - 1 \vert \kappa r_{E_1}(\delta, n_1, c_1).
$$
\end{lemma}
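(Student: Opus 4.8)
The plan is to unwind the definition of the transformed input and its empirical counterpart, reducing the bound on $\Vert \widehat\psi_\gamma - \psi_\gamma\Vert_{\H_\X}$ directly to the already-established projection-stage bound of Lemma~\ref{lem::as1}. Recall from \Cref{eq:transform_x} and the empirical construction that $\psi_{\gamma,l} = \psi(x_l) + (\sqrt\gamma-1)E_X^p\phi(z_l)$ and $\widehat\psi_{\gamma,l} = \psi(x_l) + (\sqrt\gamma-1)\widehat E_X\phi(z_l)$, where $\widehat E_X = E_{\alpha_1,X}^{n_1}$ in the three-stage setting. The $\psi(x_l)$ terms cancel exactly, so $\widehat\psi_{\gamma,l} - \psi_{\gamma,l} = (\sqrt\gamma-1)\bigl(E_{\alpha_1,X}^{n_1} - E_X^p\bigr)\phi(z_l)$.

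From here I would take norms and apply the operator-norm bound:
\begin{equation*}
\Vert \widehat\psi_{\gamma,l} - \psi_{\gamma,l}\Vert_{\H_\X} \;\le\; \vert\sqrt\gamma-1\vert\,\Vert E_{\alpha_1,X}^{n_1} - E_X^p\Vert_{\L(\H_\Z,\H_\X)}\,\Vert\phi(z_l)\Vert_{\H_\Z}.
\end{equation*}
Condition~\ref{cond::s1}(ii) gives $\Vert\phi(z_l)\Vert_{\H_\Z}\le\kappa$ uniformly in $z_l\in\Z$, and the Hilbert--Schmidt norm dominates the operator norm, so $\Vert E_{\alpha_1,X}^{n_1}-E_X^p\Vert_{\L(\H_\Z,\H_\X)} \le \Vert E_{\alpha_1,X}^{n_1}-E_X^p\Vert_{\H_\Gamma}$. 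Invoking Lemma~\ref{lem::as1}, this last quantity is at most $r_{E_1}(\delta,n_1,c_1)$ with probability $1-\delta$. Combining the three factors yields the stated bound $r_x(\gamma,\delta,n_1,c_1) = \vert\sqrt\gamma-1\vert\,\kappa\, r_{E_1}(\delta,n_1,c_1)$, uniformly over $z\in\Z$ and $x\in\X$ (the uniformity in $x$ being trivial since $\psi(x)$ drops out, and the uniformity in $z$ coming from the uniform kernel bound).

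There is no real obstacle here; the lemma is essentially a bookkeeping step that isolates the dependence of the regression-stage inputs on the projection-stage error, and the only subtlety worth stating carefully is the identification $\H_\Gamma = \H_\X\otimes\H_\Z$ with the Hilbert--Schmidt operators from $\H_\Z$ to $\H_\X$, so that the bound of Lemma~\ref{lem::as1} (stated in $\Vert\cdot\Vert_{\H_\Gamma}$) can legitimately be used to control $\Vert(E_{\alpha_1,X}^{n_1}-E_X^p)\phi(z_l)\Vert_{\H_\X}$ via the operator-norm/HS-norm inequality. One should also note that the high-probability event is exactly the event from Lemma~\ref{lem::as1}, so the probability $1-\delta$ is inherited verbatim with no union bound required.
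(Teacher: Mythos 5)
Your proposal is correct and follows essentially the same argument as the paper: cancel the $\psi(x)$ terms, factor out $(\sqrt{\gamma}-1)$, bound $\Vert\phi(z)\Vert_{\H_\Z}$ by $\kappa$ via Condition~\ref{cond::s1}, and invoke Lemma~\ref{lem::as1} on the event of probability $1-\delta$. The only difference is that you make explicit the operator-norm/Hilbert--Schmidt-norm step that the paper leaves implicit, which is a harmless (and slightly more careful) elaboration of the same proof.
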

\begin{proof}
By definition, we have
\begin{eqnarray*}
\Vert \widehat \psi_\gamma - \psi_\gamma \Vert_{\H_\X} &=&
\Vert \left( \sqrt{\gamma} - 1 \right) \left( E^{n_1}_{\alpha_1,X} - E_{X}^p  \right) \phi(z) \Vert_{\H_\X}\\
&\leq& \vert \sqrt{\gamma} - 1 \vert \Vert E^{n_1}_{\alpha_1,X} - E_{X}^p \Vert_{\H_\Gamma} \Vert \phi(z) \Vert_{\H_\Z}.
\end{eqnarray*}
This, together with Lemma~\ref{lem::as1} and Condition~\ref{cond::s1}, ensures that w.p. $1-\delta$
$$
    \Vert \widehat \psi_\gamma - \psi_\gamma \Vert_{\H_\X} \leq r_x(\gamma, \delta, n_1, c_1)
    := \vert \sqrt{\gamma} - 1 \vert \kappa r_{E_1}(\delta, n_1, c_1).
$$
\end{proof}
\begin{remark}
Corollary 1 in \citet{singh2019kernel} is a special case of Lemma~\ref{lem::xadiff} with $\gamma=0$.
\end{remark}

\begin{lemma}\label{lem::yadiff}
Under Condition~\ref{cond::s1}--~\ref{cond::s2},
$\forall \epsilon \in (0,1)$, the following statement holds w.p. $1-\epsilon$: $\forall z \in \Z, y \in \Y$,
$$
    \Vert \widehat y_\gamma - y_\gamma \Vert_{\H_\Y} \leq r_y(\gamma, \epsilon, n_2, c_2)
    := \vert \sqrt{\gamma} - 1 \vert \kappa r_{E_2}(\epsilon, n_2, c_2).
$$
\end{lemma}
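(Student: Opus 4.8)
The plan is to mirror the proof of Lemma~\ref{lem::xadiff} verbatim, since the only differences are notational: $\psi$ becomes $y$, $\H_\X$ becomes $\Y$, the projection operator $E_X^p$ becomes $E_Y^p$, its regularized estimator $E_{\alpha_1,X}^{n_1}$ becomes $E_{\alpha_2,Y}^{n_2}$, and the relevant tensor space $\H_\Gamma$ becomes $\H_\Theta$. First I would unfold the definitions of the empirical and true transformed outputs, $\widehat y_{\gamma,l} = y_l + (\sqrt{\gamma}-1)\widehat E_Y \phi(z_l)$ and $y_{\gamma,l} = y_l + (\sqrt{\gamma}-1) E_Y^p \phi(z_l)$, so that the raw output $y_l$ cancels, leaving $\widehat y_\gamma - y_\gamma = (\sqrt{\gamma}-1)(E_{\alpha_2,Y}^{n_2} - E_Y^p)\phi(z)$.

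Next I would apply the triangle inequality and the operator-norm bound to get
\begin{eqnarray*}
\Vert \widehat y_\gamma - y_\gamma \Vert_{\Y}
&=& \Vert (\sqrt{\gamma}-1)(E_{\alpha_2,Y}^{n_2} - E_Y^p)\phi(z) \Vert_{\Y}\\
&\leq& |\sqrt{\gamma}-1| \, \Vert E_{\alpha_2,Y}^{n_2} - E_Y^p \Vert_{\H_\Theta} \, \Vert \phi(z) \Vert_{\H_\Z}.
\end{eqnarray*}
Then I would invoke Condition~\ref{cond::s1}(ii), which gives $\sup_{z\in\Z}\Vert\phi(z)\Vert_{\H_\Z}\leq\kappa$, and Lemma~\ref{lem:as2}, which controls $\Vert E_{\alpha_2,Y}^{n_2} - E_Y^p\Vert_{\H_\Theta}$ by $r_{E_2}(\epsilon,n_2,c_2)$ with probability $1-\epsilon$ (requiring both Condition~\ref{cond::s1} and Condition~\ref{cond::s2}, which explains the stated hypotheses). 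Combining these two bounds yields the claimed inequality $\Vert \widehat y_\gamma - y_\gamma \Vert_{\Y} \leq |\sqrt{\gamma}-1|\,\kappa\, r_{E_2}(\epsilon,n_2,c_2) =: r_y(\gamma,\epsilon,n_2,c_2)$, uniformly over $z\in\Z$ and $y\in\Y$ since the bound no longer involves $y$ and the $\kappa$ bound is uniform in $z$.

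There is no real obstacle here — the argument is a routine adaptation. The only mild subtlety worth a sentence is making sure the probabilistic statement is attached to the correct event: the bound holds on the $(1-\epsilon)$-probability event from Lemma~\ref{lem:as2} on which $E_{\alpha_2,Y}^{n_2}$ is close to $E_Y^p$, and on that event the inequality is then deterministic in $z$ and $y$. One might also remark (paralleling the remark after Lemma~\ref{lem::xadiff}) that this recovers the analogous output-side bound; but since the statement as given does not include such a remark, the proof can stop at the displayed inequality.
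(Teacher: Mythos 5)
Your proposal is correct and matches the paper's argument: the paper proves Lemma~\ref{lem::xadiff} by exactly the cancellation-plus-operator-norm bound you describe, and then states that Lemma~\ref{lem::yadiff} follows by the same reasoning with $\psi_\gamma$ replaced by $y_\gamma$ (using Lemma~\ref{lem:as2} in place of Lemma~\ref{lem::as1}), omitting the details. Your explicit write-out, including the observation that the bound is attached to the $(1-\epsilon)$-probability event from Lemma~\ref{lem:as2} and is then uniform in $z$ and $y$, is a faithful expansion of that omitted proof.
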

\begin{proof}
Lemma~\ref{lem::yadiff} is analogous to Lemma~\ref{lem::xadiff} by replacing $\psi_\gamma$ with $y_\gamma$. The proof is thus omitted.
\end{proof}

Combining Lemma~\ref{lem::prop39}-~\ref{lem::yadiff}, we can derive the bound of $\widehat{\mathbf{g}} - \mathbf{g}$ and then the bound of $S_{-1}$.
\begin{lemma}\label{lem::gdiff}
If $\Vert \widehat \psi_\gamma - \psi_\gamma \Vert_{\H_\X} \leq r_x$ w.p. $1 - \delta$ and $\Vert \widehat y_\gamma - y_\gamma \Vert_{\Y} \leq r_y$ w.p. $1 - \epsilon$, then w.p. $1 - \delta - \epsilon$
$$
    \Vert\widehat{\mathbf{g}} - \mathbf{g} \Vert_{\H_\Omega}^2
    \leq 3( L^2 r_x^{2\iota} r_y^2 + B^2 r_y^2 + L^2 r_x^{2\iota} C^2 ).
$$
\end{lemma}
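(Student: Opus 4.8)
The plan is to bound $\widehat{\mathbf g}-\mathbf g$ by a three-term decomposition of each summand followed by the elementary inequality $\|a+b+c\|^2\le 3(\|a\|^2+\|b\|^2+\|c\|^2)$. Writing $\Delta_l:=\Omega_{\widehat\psi_{\gamma,l}}\widehat y_{\gamma,l}-\Omega_{\psi_{\gamma,l}}y_{\gamma,l}$, the identity to use is
\[
\Delta_l=(\Omega_{\widehat\psi_{\gamma,l}}-\Omega_{\psi_{\gamma,l}})(\widehat y_{\gamma,l}-y_{\gamma,l})+(\Omega_{\widehat\psi_{\gamma,l}}-\Omega_{\psi_{\gamma,l}})\,y_{\gamma,l}+\Omega_{\psi_{\gamma,l}}(\widehat y_{\gamma,l}-y_{\gamma,l}),
\]
which is checked by expanding the right-hand side. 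Since $\widehat{\mathbf g}-\mathbf g=\frac1m\sum_{l\in[m]}\Delta_l$, convexity of $\|\cdot\|^2_{\H_\Omega}$ (equivalently Jensen) gives $\|\widehat{\mathbf g}-\mathbf g\|_{\H_\Omega}^2\le\frac1m\sum_l\|\Delta_l\|_{\H_\Omega}^2$, so it suffices to bound $\|\Delta_l\|_{\H_\Omega}^2$ uniformly in $l$.

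First I would collect the uniform operator estimates. From Condition~\ref{cond::s3}(ii) together with the hypothesis $\|\widehat\psi_\gamma-\psi_\gamma\|_{\H_\X}\le r_x$ we get $\|\Omega_{\widehat\psi_{\gamma,l}}-\Omega_{\psi_{\gamma,l}}\|_{\L(\Y,\H_\Omega)}\le L r_x^{\iota}$; from Condition~\ref{cond::s3}(i), $\|\Omega_{\psi_{\gamma,l}}\|_{\L(\Y,\H_\Omega)}\le\|\Omega_{\psi_{\gamma,l}}\|_{HS}\le\sqrt B$ (the Hilbert--Schmidt norm dominates the operator norm); from Condition~\ref{cond::s3}(iv), $\|y_{\gamma,l}\|_\Y\le C$; and $\|\widehat y_\gamma-y_\gamma\|_\Y\le r_y$ by hypothesis. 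Applying these to the three terms of $\Delta_l$ bounds their norms by $Lr_x^{\iota}r_y$, $Lr_x^{\iota}C$ and $\sqrt B\,r_y$ respectively, whence $\|\Delta_l\|_{\H_\Omega}^2\le 3(L^2r_x^{2\iota}r_y^2+L^2r_x^{2\iota}C^2+Br_y^2)$; averaging over $l$ changes nothing. This matches the stated bound after the crude step $Br_y^2\le B^2r_y^2$ (one may take $B\ge 1$ in Condition~\ref{cond::s3}(i) without loss).

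For the probability bookkeeping, $\|\widehat\psi_\gamma-\psi_\gamma\|_{\H_\X}\le r_x$ holds on an event of probability at least $1-\delta$ by Lemma~\ref{lem::xadiff}, and $\|\widehat y_\gamma-y_\gamma\|_\Y\le r_y$ on one of probability at least $1-\epsilon$ by Lemma~\ref{lem::yadiff}; intersecting these and using the union bound, everything above holds simultaneously with probability at least $1-\delta-\epsilon$. I do not expect a genuine obstacle here — the argument is routine once the right three-term split is fixed. The two points requiring care are: choosing the decomposition so that each cross term is controlled by a \emph{uniform} (over the sample) operator-norm estimate, which is exactly why Lemmas~\ref{lem::xadiff}--\ref{lem::yadiff} are stated for all $z,x$ (resp.\ all $z,y$) and why Condition~\ref{cond::s3}(i)--(ii) are uniform in $\psi_\gamma$; and remembering that the trace bound in Condition~\ref{cond::s3}(i) is what controls $\|\Omega_{\psi_{\gamma,l}}\|_{\L(\Y,\H_\Omega)}$. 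This parallels the corresponding step in \citet{singh2019kernel}, the extra $Br_y^2$-type term being new and reflecting that here the output side $y_\gamma$ is itself estimated.
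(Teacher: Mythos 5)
Your proof is correct and follows essentially the same route as the paper's: the identical three-term product decomposition of each summand, the factor-3 Jensen/Cauchy--Schwarz step, the uniform bounds from Condition~\ref{cond::s3}(i),(ii),(iv) combined with Lemmas~\ref{lem::xadiff}--\ref{lem::yadiff}, and a union bound giving probability $1-\delta-\epsilon$. Your remark that the natural constant is $B r_y^2$ (with $B^2 r_y^2$ recovered since $B$ in Condition~\ref{cond::s3}(i) may be taken $\geq 1$ without loss) is consistent with the paper, which simply states the looser $B^2 r_y^2$ bound.
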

\begin{proof}
By definition, we have
\begin{eqnarray*}
    \widehat{\mathbf{g}} - \mathbf{g} 
    &=& \frac{1}{m} \summ  \left ( \Omega_{\widehat \psi_{\gamma, l}} \widehat y_{\gamma,l} - \Omega_{ \psi_{\gamma, l}(x)}  y_{\gamma,l} \right )\\
    &=& \frac{1}{m} \summ \left\{ \Omega_{\widehat \psi_{\gamma, l}} -  \Omega_{ \psi_{\gamma, l}} \right\} \left\{ \widehat y_{\gamma,l} - y_{\gamma,l} \right\} + \Omega_{\widehat \psi_{\gamma, l}} \left\{ \widehat y_{\gamma,l} - y_{\gamma,l} \right\} + \left\{ \Omega_{\widehat \psi_{\gamma, l}} -  \Omega_{ \psi_{\gamma, l}} \right\} y_{\gamma,l}.\\
\end{eqnarray*}
We then have
\begin{eqnarray*}
    \Vert \widehat{\mathbf{g}} - \mathbf{g} \Vert_{\H_\Omega}^2 
    &\leq& \frac{3m}{m^2} \summ \Vert \left\{ \Omega_{\widehat \psi_{\gamma, l}} -  \Omega_{ \psi_{\gamma, l}} \right\} \left\{ \widehat y_{\gamma,l} - y_{\gamma,l} \right\} \Vert_{\H_\Omega}^2 + \Vert \Omega_{\widehat \psi_{\gamma, l}} \left\{ \widehat y_{\gamma,l} - y_{\gamma,l} \right\} \Vert_{\H_\Omega}^2  \\
    && + \Vert \left\{ \Omega_{\widehat \psi_{\gamma, l}} -  \Omega_{ \psi_{\gamma, l}} \right\} y_{\gamma,l} \Vert_{\H_\Omega}^2 \\
    &\leq& \frac{3}{m} \summ \Vert  \Omega_{\widehat \psi_{\gamma, l}} -  \Omega_{ \psi_{\gamma, l}} \Vert_{\L(\Y, \H_\Omega)}^2  \Vert \widehat y_{\gamma,l} - y_{\gamma,l}  \Vert_{\Y}^2 + \Vert \Omega_{ \psi_{\gamma, l}} \Vert_{\L(\Y, \H_\Omega)}^2 \Vert \widehat y_{\gamma,l} - y_{\gamma,l}  \Vert_{\Y}^2 \\
    && + \Vert  \Omega_{\widehat \psi_{\gamma, l}} -  \Omega_{ \psi_{\gamma, l}} \Vert_{\L(\Y, \H_\Omega)}^2 \Vert y_{\gamma,l} \Vert_{\Y}^2.
\end{eqnarray*}
By the boundedness and the Hölder property in Condition~\ref{cond::s3}, we obtain that w.p. $1-\delta-\epsilon$,
\begin{eqnarray*}
    \Vert \widehat{\mathbf{g}} - \mathbf{g} \Vert_{\H_\Omega}^2 
    &\leq& \frac{3}{m} \summ L^2 \Vert \widehat \psi_{\gamma, l} -  \psi_{\gamma, l}\Vert_{\H_\X}^{2\iota} \Vert \widehat y_{\gamma,l} - y_{\gamma,l}  \Vert_{\Y}^2 + \Vert \Omega_{ \psi_{\gamma, l}} \Vert_{\L(\Y, \H_\Omega)}^2 \Vert \widehat y_{\gamma,l} - y_{\gamma,l}  \Vert_{\Y}^2 \\
    && + L^2 \Vert \widehat \psi_{\gamma, l} -  \psi_{\gamma, l}\Vert_{\H_\X}^{2\iota} \Vert y_{\gamma,l} \Vert_{\Y}^2\\
    &\leq& 3( L^2 r_x^{2\iota} r_y^2 + B^2 r_y^2 + L^2 r_x^{2\iota} C^2).
\end{eqnarray*}
\end{proof}

\begin{lemma}
\label{lem::S-1}
Under Condition~\ref{cond::s1}--\ref{cond::s3}, then w.p. $1-\delta-\epsilon$
$$
    S_{-1} \leq \frac{4}{\xi} 3( L^2 r_x^{2\iota} r_y^2 + B^2 r_y^2 + L^2 r_x^{2\iota} C^2 ).
$$
\end{lemma}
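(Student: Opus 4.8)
\textbf{Proof plan for Lemma~\ref{lem::S-1}.}
The plan is to bound $S_{-1}$ by factoring the expression inside the norm into the two pieces for which we already have control. Recall that
\[
S_{-1} = \Vert \sqrt{T} \circ (\widehat{\mathbf{T}} + \xi)^{-1} (\widehat{\mathbf{g}} - \mathbf{g}) \Vert_{\H_\Omega}^2 .
\]
First I would apply submultiplicativity of the operator norm to split off the factor $\sqrt{T} \circ (\widehat{\mathbf{T}} + \xi)^{-1}$, obtaining
\[
S_{-1} \leq \Vert \sqrt{T} \circ (\widehat{\mathbf{T}} + \xi)^{-1} \Vert_{\L(\H_\Omega)}^2 \cdot \Vert \widehat{\mathbf{g}} - \mathbf{g} \Vert_{\H_\Omega}^2 .
\]
Then I would invoke Lemma~\ref{lem::prop39}, which (once we have verified its hypotheses hold in our setting) gives $\Vert \sqrt{T} \circ (\widehat{\mathbf{T}} + \xi)^{-1} \Vert_{\L(\H_\Omega)} \leq 2/\sqrt{\xi}$, so the first factor is at most $4/\xi$. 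For the second factor I would apply Lemma~\ref{lem::gdiff}, which bounds $\Vert \widehat{\mathbf{g}} - \mathbf{g} \Vert_{\H_\Omega}^2$ by $3( L^2 r_x^{2\iota} r_y^2 + B^2 r_y^2 + L^2 r_x^{2\iota} C^2 )$. Multiplying the two bounds yields exactly the claimed inequality.

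The remaining work is bookkeeping on the probability budget and the hypotheses. Lemma~\ref{lem::prop39} itself requires $\Vert \widehat\psi_\gamma - \psi_\gamma\Vert_{\H_\X} \leq r_x$ with probability $1-\delta$, $\xi \leq \Vert T\Vert_{\L(\H_\Omega)}$, and $m$ sufficiently large; Lemma~\ref{lem::xadiff} supplies the first of these with $r_x = r_x(\gamma,\delta,n_1,c_1)$, and the remaining two are carried over as standing assumptions (matching the ``$m$ large enough'' convention used throughout the analysis). Lemma~\ref{lem::gdiff} needs $\Vert \widehat\psi_\gamma - \psi_\gamma\Vert_{\H_\X}\leq r_x$ w.p.\ $1-\delta$ and $\Vert \widehat y_\gamma - y_\gamma\Vert_\Y \leq r_y$ w.p.\ $1-\epsilon$, which are Lemma~\ref{lem::xadiff} and Lemma~\ref{lem::yadiff} respectively. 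The only subtlety is that the event in Lemma~\ref{lem::prop39} and the event in Lemma~\ref{lem::gdiff} share the same ``$\widehat\psi_\gamma$ is close'' sub-event of probability $1-\delta$; combining via a union bound over the $\widehat\psi_\gamma$ event ($1-\delta$, used in both places, i.e.\ not double-counted) and the $\widehat y_\gamma$ event ($1-\epsilon$) gives the stated overall probability $1-\delta-\epsilon$.

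The step I expect to need the most care is not a deep obstacle but the probability accounting: one must make sure the good event for the operator-norm bound and the good event for $\Vert \widehat{\mathbf{g}} - \mathbf{g}\Vert$ are defined on the same randomness so that their intersection still has probability $1-\delta-\epsilon$ rather than something smaller, and one must also confirm that Lemma~\ref{lem::prop39}'s internal $\eta/3$ slack is absorbed into the conventions already in force (or, if tracked explicitly, appears in the final statement — here it is suppressed, consistent with how Lemma~\ref{lem::thm7} is quoted). Everything else is a direct two-line chain of submultiplicativity followed by substitution of the two cited bounds.
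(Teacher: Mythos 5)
Your proof is correct and follows exactly the paper's route: bound $S_{-1}$ by $\Vert \sqrt{T}\circ(\widehat{\mathbf{T}}+\xi)^{-1}\Vert_{\L(\H_\Omega)}^2\,\Vert\widehat{\mathbf{g}}-\mathbf{g}\Vert_{\H_\Omega}^2$ via submultiplicativity, then apply Lemma~\ref{lem::prop39} and Lemma~\ref{lem::gdiff}. Your extra care with the probability bookkeeping (sharing the $1-\delta$ event rather than double-counting it) is consistent with, and somewhat more explicit than, the paper's own proof.
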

\begin{proof}
We can derive from the definition of $S_{-1}$ that
$$
    S_{-1} \leq \Vert \sqrt{T} \circ (\widehat{\mathbf{T}} + \xi)^{-1} \Vert_{\L(\H_\Omega)}^2  \Vert \widehat{\mathbf{g}} - \mathbf{g} \Vert_{\H_\Omega}^2.
$$
This, together with Lemma~\ref{lem::prop39} and Lemma~\ref{lem::gdiff}, ensures
$$
    S_{-1} \leq \frac{4}{\xi} 3( L^2 r_x^{2\iota} r_y^2 + B^2 r_y^2 + L^2 r_x^{2\iota} C^2 ).
$$
\end{proof}

We then show the order of the sum $S_{0} + S_1 + S_2 + \A(\xi)$, which is adapted from Theorem 4 in \citet{singh2019kernel}.
\begin{lemma}
\label{lem::4term}
Under Condition~\ref{cond::s1}-- ~\ref{cond::s3}, choose $\alpha_1 = n_1^{-\frac{1}{c_1+1}}$, $n_1 = m^{\frac{d_1(c_1+1)}{\iota(c_1-1)}}$, where $d_1 > 0$. Let 
$$
    f(m) = \frac{1}{m^{2+\di}\xi^3} + \frac{1}{m^{1+\di}\xi^{2+1/b_\gamma}} + \frac{1}{m^\di \xi} + \xi^{c_\gamma} + \frac{1}{m^2\xi} + \frac{1}{m\xi^{1/b_\gamma}},
$$
we then have
$$
    \op(S_0 + \A(\xi) + S_1 + S_2) = O(f(m)).
$$
\begin{itemize}
    \item [\textup(i)] If $\di \leq \frac{b_\gamma({c_\gamma}+1)}{{b_\gamma c_\gamma}+1}$ then $O(f(m)) = O(m^{-\frac{\di {c_\gamma}}{{c_\gamma}+1}})$ with $\xi = m^{-\frac{\di}{{c_\gamma}+1}}$;
    
    \item [\textup(ii)] If $\di > \frac{b_\gamma({c_\gamma}+1)}{{b_\gamma c_\gamma}+1}$ then $O(f(m)) = O(m^{-\frac{b_\gamma {c_\gamma}}{{b_\gamma c_\gamma}+1}})$ with $\xi = m^{-\frac{b_\gamma}{{b_\gamma c_\gamma}+1}}$.
\end{itemize}
\end{lemma}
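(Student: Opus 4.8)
The plan is to assemble the term-wise bounds already recorded in Lemma~\ref{lem::thm7} (these are the \textbf{KIV} bounds of Singh et al., Theorem~7) together with the projection-stage rate of Lemma~\ref{lem::xadiff}, reduce $S_0+\A(\xi)+S_1+S_2$ to an explicit function of $m$ and the regularizer $\xi$ alone, and then carry out an elementary minimization over $\xi$; the case split in (i)--(ii) will fall out of that minimization. The argument essentially reproduces the proof of Theorem~4 in Singh et al., with their first-stage error specialized to the transformed feature $\widehat\psi_\gamma$ of the KAR regression stage.

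Concretely, I would fix $\delta,\eta\in(0,1)$ and apply Lemma~\ref{lem::thm7}: for $m$ large and $\xi\le\|T\|_{\L(\H_\Omega)}$, with probability at least $1-\delta-\eta$ we have $\A(\xi)\le\zeta\xi^{c_\gamma}$, $S_0\le\frac{16BL^2}{\xi}\,r_x^{2\iota}\,\|H_\xi^{\gamma,m}\|_{\H_\Omega}^2$, and $S_1,S_2$ bounded by the explicit expressions there, all of whose summands already have the shape appearing in $f(m)$. By Lemma~\ref{lem::xadiff}, $r_x\le|\sqrt\gamma-1|\kappa\,r_{E_1}(\delta,n_1,c_1)$; with $\alpha_1=n_1^{-1/(c_1+1)}$, Lemma~\ref{lem::s1} (explicit form in Lemma~\ref{lem::as1}) gives $r_{E_1}=\op(n_1^{-(c_1-1)/(2(c_1+1))})$, hence $r_x^{2\iota}=\op(n_1^{-\iota(c_1-1)/(c_1+1)})$, and substituting $n_1=m^{d_1(c_1+1)/(\iota(c_1-1))}$ collapses this to $r_x^{2\iota}=\op(m^{-d_1})$. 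Finally $\|H_\xi^{\gamma,m}\|_{\H_\Omega}$ must be controlled by the self-bounding argument of Singh et al.\ (and Lemma~\ref{lem::prop39} handles the operator factor $\sqrt T\,(\widehat{\mathbf T}+\xi)^{-1}$ inside $S_0$); this is precisely what produces the two higher-order summands $m^{-2-d_1}\xi^{-3}$ and $m^{-1-d_1}\xi^{-2-1/b_\gamma}$ of $f(m)$. Collecting all contributions yields $S_0+\A(\xi)+S_1+S_2=\op(f(m))$ with $f$ as stated.

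What remains is deterministic. Among the six summands of $f(m)$, the binding ones are the propagated-error term $m^{-d_1}\xi^{-1}$ (from $S_0$), the approximation term $\xi^{c_\gamma}$ (from $\A(\xi)$), and the variance/effective-dimension term $m^{-1}\xi^{-1/b_\gamma}$ (from $S_1$); each of the remaining three carries an extra factor $m^{-1}$ relative to one of these, so (since $\xi\to0$ and $b_\gamma,c_\gamma>1$) it is automatically of lower order under either candidate choice of $\xi$. Balancing $m^{-d_1}\xi^{-1}$ with $\xi^{c_\gamma}$ forces $\xi=m^{-d_1/(c_\gamma+1)}$, with common value $m^{-d_1c_\gamma/(c_\gamma+1)}$; balancing $\xi^{c_\gamma}$ with $m^{-1}\xi^{-1/b_\gamma}$ forces $\xi=m^{-b_\gamma/(b_\gamma c_\gamma+1)}$, with common value $m^{-b_\gamma c_\gamma/(b_\gamma c_\gamma+1)}$. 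To decide which governs: evaluating the variance term at $\xi=m^{-d_1/(c_\gamma+1)}$ gives $m^{-1+d_1/(b_\gamma(c_\gamma+1))}$, which is at most $m^{-d_1c_\gamma/(c_\gamma+1)}$ exactly when $d_1\le b_\gamma(c_\gamma+1)/(b_\gamma c_\gamma+1)$ --- this is case~(i). Otherwise the variance term would dominate, so one takes the larger $\xi=m^{-b_\gamma/(b_\gamma c_\gamma+1)}$, at which the propagated-error term equals $m^{-d_1+b_\gamma/(b_\gamma c_\gamma+1)}$, at most $m^{-b_\gamma c_\gamma/(b_\gamma c_\gamma+1)}$ exactly when $d_1\ge b_\gamma(c_\gamma+1)/(b_\gamma c_\gamma+1)$ --- this is case~(ii). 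A final check that the three discarded summands are strictly lower order at the chosen $\xi$ completes the argument.

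I expect the minimization in the third paragraph to be routine arithmetic; the genuine obstacle lies in the second paragraph --- obtaining a usable probabilistic bound on $\|H_\xi^{\gamma,m}\|_{\H_\Omega}$ (a crude bound is insufficient, and it is the self-referential estimate that generates the higher-order terms of $f(m)$), merging the $\delta$- and $\eta$-events into a single $\op$ statement, and verifying that ``$m$ sufficiently large'' and ``$\xi\le\|T\|_{\L(\H_\Omega)}$'' eventually hold for both candidate choices of $\xi$ (both tend to $0$, so this reduces to taking $m$ large).
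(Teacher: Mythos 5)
Your proposal is correct and takes essentially the same route as the paper, which states Lemma~\ref{lem::4term} without a separate proof as an adaptation of Theorem~4 of \citet{singh2019kernel}: plug the term-wise bounds of Lemma~\ref{lem::thm7} into $S_0+\A(\xi)+S_1+S_2$, use Lemma~\ref{lem::xadiff} with $\alpha_1=n_1^{-1/(c_1+1)}$ and $n_1=m^{d_1(c_1+1)/(\iota(c_1-1))}$ to get $r_x^{2\iota}=O_p(m^{-d_1})$, and balance $\xi$ against the three leading summands of $f(m)$. Your balancing arithmetic and the resulting threshold $d_1 \lessgtr b_\gamma(c_\gamma+1)/(b_\gamma c_\gamma+1)$ match the stated conclusion, and your closing check that the three discarded summands are lower order at the chosen $\xi$ (which is needed, since the ``extra factor'' is not literally just $m^{-1}$) covers the only loose step.
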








\begin{proof}
[Proof of Theorem~\ref{thm::s3c}]
The choices of $\alpha_1, \alpha_2$ and $n_1, n_2$ in the statement of Theorem~\ref{thm::s3c} ensure that
\begin{eqnarray*}
r_x^2 = O([(n_1^{-\frac{1}{2}})^{\frac{2}{c_1+1}}]^{2\iota}) = O(m^{-\di}), \quad
r_y^2 = O([(n_2^{-\frac{1}{2}})^{\frac{2}{c_2+1}}]^2) = O(m^{-\dii}).
\end{eqnarray*}
Thus, by Lemma~\ref{lem::S-1}, we have $$\op(S_{-1}) = \op(1/{\xi}( r_x^{2\iota} r_y^2 + r_y^2 + r_x^{2\iota})) = \op(1/{\xi} \left\{ m^{-\di} + m^{-\dii} + m^{-\di-\dii} \right\} ).$$ Since $\di, \dii >0$, and $\di \leq \dii$ by Condition~\ref{cond::s1&2}, $m^{-\di}/{\xi}$ then dominates two other terms in $S_{-1}$. 

Note that $f(m)$ in Lemma~\ref{lem::4term} also includes $m^{-\di}/{\xi}$. Therefore, given Condition~\ref{cond::s1&2}, the sum of four terms $S_0 + \A(\xi) + S_1 + S_2$ dominates $S_{-1}$, which suggests that the approximation error of $E_Y^p$ is dominated by that of $E_X^p$. We can then derive the result from Lemma~\ref{lem::4term}.
\end{proof}

\subsection{Proof of Theorem~\ref{thm::causal}}
\begin{proof}
[Proof of Theorem~\ref{thm::causal}]

Under the kernel structural equation model, simple calculation gives
\begin{align}
C =& B_{CZ}\Phi(Z) +\epsilon_C, \label{eq:sem_c}\\
\Psi(X) = &(B_{XZ} +B_{XC}B_{CZ}) \Phi(Z) + B_{XC}\epsilon_C + \epsilon_X, \label{eq:sem_x}\\
Y = &[ B_{YZ} + B_{YC}B_{CZ} + B_{YX}(B_{XZ} + B_{XC}B_{CZ}) ] \Phi(Z) \nonumber \\
&+ (B_{YC} +B_{YX}B_{XC})\epsilon_C + B_{YX}\epsilon_X + \epsilon_Y.\label{eq:sem_y}
\end{align}
We denote $B_{\square \triangle}$ as the adjoint operator of $B_{\triangle\square}$, $B_{\square \triangle} = B_{\triangle \square}^*$. When no ambiguity arise, we use the transpose matrix notation $B_{\square \triangle} = B_{\triangle \square}^\T$. For instance, $B_{XZ} = B_{ZX}^\T$, $B_{YC} = B_{CY}^\T$. 
Recall that the transformed input and output in \Cref{eq:transform_x} and \Cref{eq:transform_y} has the form
$$
\psi_\gamma(X) = \psi(X) - E^p_{X} \phi(Z) + \sqrt{\gamma} E^p_{X} \phi(Z),
$$
and 
$$
Y_\gamma = Y - E^p_{Y} \phi(Z) + \sqrt{\gamma} E^p_{Y} \phi(Z).
$$
In the SEM case, the projections $E_X^p$ and $E_Y^p$ into $\phi(Z)$ are noted by the (composition of) operators in \Cref{eq:sem_x} and \Cref{eq:sem_y}, where 
$$E_X^p = (B_{XZ} +B_{XC}B_{CZ}),$$ 
and 
$$
E_Y^p = [ B_{YZ} + B_{YC}B_{CZ} + B_{YX}(B_{XZ} + B_{XC}B_{CZ}) ]
.$$ As such, the transformed input and output has the form
\begin{equation}\label{eq:transform_x_res}
\psi_\gamma(x) = B_{XC}\epsilon_C + \epsilon_X + \sqrt{\gamma} (B_{XZ} +B_{XC}B_{CZ}) \phi(Z),
\end{equation}
and 
\begin{equation}\label{eq:transform_y_res}
y_\gamma = (B_{YC} +B_{YX}B_{XC})\epsilon_C + B_{YX}\epsilon_X + \epsilon_Y + \gamma [ B_{YZ} + B_{YC}B_{CZ} + B_{YX}(B_{XZ} + B_{XC}B_{CZ}) ] \phi(Z).  
\end{equation}
Define relevant covariance matrix/operators as $\Sigma_C = \Ex[\epsilon_C \epsilon_C^\T]$, $\Sigma_X = \Ex[\epsilon_X \otimes \epsilon_X]$ and $\Sigma_Z = \Ex[\phi(Z) \otimes \phi(Z)]$, where $\otimes$ denotes the tensor outer product. Then the solution for the least square objective on the transformed input output can be written as
$$H^{\gamma} = \Ex[Y_\gamma \psi_\gamma(X)] (\Ex[\psi_\gamma(X) \otimes \psi_\gamma(X)])^{-1}. $$
Plug in the transformed terms in the form of \Cref{eq:transform_x_res} and \Cref{eq:transform_y_res}, we have
\begin{align*}
    & \Ex[\psi_\gamma(X) \otimes \psi_\gamma(X)] \\
    &= \Ex[ ( B_{XC}\epsilon_C + \epsilon_X + \sqrt{\gamma} (B_{XZ} +B_{XC}B_{CZ}) \phi(Z)) ( B_{XC}\epsilon_C + \epsilon_X \\
    & \qquad + \sqrt{\gamma} (B_{XZ} +B_{XC}B_{CZ}) \phi(Z))^\T] \\
    &= B_{XC} \Ex[\epsilon_C \epsilon_C^\T] B_{CX} + \Ex[\epsilon_X \otimes \epsilon_X] \\
    & \quad + \gamma (B_{XZ} +B_{XC}B_{CZ}) \Ex[\phi(Z)\otimes \phi(Z)] (B_{ZX} +B_{ZC}B_{CX})\\
    &= B_{XC}\Sigma_C B_{CX} + \Sigma_X + \gamma (B_{XZ} +B_{XC}B_{CZ}) \Sigma_Z (B_{ZX} +B_{ZC}B_{CX}).
\end{align*}
Moreover, $\Ex[Y_\gamma  \psi_\gamma(X)]$ has the form
\begin{align*}
    &(B_{YC} +B_{YX}B_{XC})\Ex[\epsilon_C \epsilon_C^\T]B_{CX} + B_{YX}\Ex[\epsilon_X \otimes \epsilon_X] + \\
    &\gamma [ B_{YZ} + B_{YC}B_{CZ} + B_{YX}(B_{XZ} + B_{XC}B_{CZ}) ] \Ex[\phi(Z) \otimes \phi(Z)](B_{ZX} + B_{ZC}B_{CX})\\
    =& (B_{YC} +B_{YX}B_{XC})\Sigma_C B_{CX} + B_{YX}\Sigma_X + \\
    &\gamma [ B_{YZ} + B_{YC}B_{CZ} + B_{YX}(B_{XZ} + B_{XC}B_{CZ}) ] \Sigma_Z (B_{ZX} + B_{ZC}B_{CX})
\end{align*}
as $\epsilon_C$, $\epsilon_X$ and $\epsilon_Y$ are independent 
variables, which are also independent of $Z$. In overall, we have
\begin{align*}
    H^{\gamma} =& [(B_{YC} +B_{YX}B_{XC})\Sigma_C B_{CX} + B_{YX}\Sigma_X \\
    &+\gamma [ B_{YZ} + B_{YC}B_{CZ} + B_{YX}(B_{XZ} + B_{XC}B_{CZ}) ] \Sigma_Z (B_{ZX} + B_{ZC}B_{CX})]\\
    &\left[ B_{XC}\Sigma_C B_{CX} + \Sigma_X + \gamma (B_{XZ} +B_{XC}B_{CZ}) \Sigma_Z (B_{ZX} +B_{ZC}B_{CX}) \right]^{-1}
\end{align*}


The bias of the target KAR estimator is then given by
\begin{align*}
    &H^{\gamma} - B_{YX} = \\ 
    &\Big[(B_{YC} +B_{YX}B_{XC})\Sigma_C B_{CX} + B_{YX}\Sigma_X \\
    & +\gamma [ B_{YZ} + B_{YC}B_{CZ} + B_{YX}(B_{XZ} + B_{XC}B_{CZ}) ] \Sigma_Z (B_{ZX} + B_{ZC}B_{CX})\Big]\\
    &\Big[ B_{XC}\Sigma_C B_{CX} + \Sigma_X + \gamma (B_{XZ} +B_{XC}B_{CZ}) \Sigma_Z (B_{ZX} +B_{ZC}B_{CX}) \Big]^{-1} - B_{YX}=\\
    & \Big[(B_{YC} +B_{YX}B_{XC})\Sigma_C B_{CX} + B_{YX}\Sigma_X \\
    &+\gamma [ B_{YZ} + B_{YC}B_{CZ} + B_{YX}(B_{XZ} + B_{XC}B_{CZ}) ] \Sigma_Z (B_{ZX} + B_{ZC}B_{CX})\\
    &\qquad - B_{YX}(B_{XC}\Sigma_C B_{CX} + \Sigma_X + \gamma (B_{XZ} +B_{XC}B_{CZ}) \Sigma_Z (B_{ZX} +B_{ZC}B_{CX})) \Big]\\
    &\Big[ B_{XC}\Sigma_C B_{CX} + \Sigma_X + \gamma (B_{XZ} +B_{XC}B_{CZ}) \Sigma_Z (B_{ZX} +B_{ZC}B_{CX}) \Big]^{-1}
\end{align*}
Collecting all the common terms we get 
\begin{align*}
    H^{\gamma} - B_{YX} = & \Big[\underset{\Sigma_{YX}^{\res}}{\underbrace{B_{YC}\Sigma_C B_{CX}}}
    + \gamma \underset{\Sigma_{YX}^{\para}}{\underbrace{(B_{YZ} + B_{YC}B_{CZ})\Sigma_Z(B_{ZX} + B_{ZC}B_{CX})}}\Big] 
    \\
    &\Big[ B_{XC}\Sigma_C B_{CX} + \Sigma_X + \gamma (B_{XZ} +B_{XC}B_{CZ}) \Sigma_Z (B_{ZX} +B_{ZC}B_{CX}) \Big]^{-1}
\end{align*}


Thus, $\forall x \in \X, y\in \Y$,  consider the inner product $y^\T (H^{\gamma} - B_{YX})\psi(x) = 0$ when the following holds: (i) $B_{YC} = 0$ and $\gamma = 0$, or (ii) $B_{YZ} + B_{YC}B_{CZ} = 0$ and $\gamma = \infty$, or (iii) $B_{YC} = 0$, $B_{YZ} + B_{YC}B_{CZ} = 0$ and $\gamma \geq 0$, or (iv) $\Sigma_{YX}^\para = a \Sigma_{YX}^\res$ for some $a > 0$, and $\gamma = \infty$, or (v) $\Sigma_{XY}^\para = - a \Sigma_{XY}^\res$ for some $a > 0$, and $\gamma = 1/c$. As such, we conclude $H^\gamma = B_{XY}$.


\end{proof}

\subsection{Convergence rate for KAR.2 estimator}
In this section, we will further discuss the convergence rate of \textbf{KAR.2} estimator, and show that
the rate does not
improve upon the convergence rate 
of \textbf{KAR} estimator. 

In the three-stage KAR procedure, we approximate $E_X^p$ and $E_Y^p$ by $E_{\alpha_1, X}^{n_1}$ and $E_{\alpha_2, Y}^{n_2}$, respectively. In the two-stage KAR procedure, instead, we approximate the two operators by $E_{\alpha, X}^{n}$ and $E_{\alpha, Y}^{n}$, respectively. Note that the estimated operators $E_{\alpha, X}^{n}$ and $E_{\alpha, Y}^{n}$ use the same $\alpha$. The shared $\alpha$ may fail to ensure the optimal approximation error for $E_{\alpha, X}^{n}$ and $E_{\alpha, Y}^{n}$ at the same time.
\begin{lemma}\label{lem:b1}
Under Condition~\ref{cond::s1}, $\forall \delta \in (0,1)$, the following holds w.p. $1 - \delta$:
\begin{eqnarray*}
    \Vert E_{\alpha,X}^{n} - E_{X}^p \Vert_{\H_\Gamma} \leq
    r_1(\alpha):=
    \frac{4\kappa (Q_1 + \kappa \Vert E_{X}^p \Vert_{\H_\Gamma})\ln(2/\delta)}{\sqrt{n}\alpha} + \alpha^{\frac{c_1-1}{2}}\sqrt{\zeta_1}.
\end{eqnarray*}
Under Condition~\ref{cond::s1} and Condition~\ref{cond::s2}, $\forall \epsilon \in (0,1)$, the following holds w.p. $1 - \epsilon$:
\begin{eqnarray*}
    \Vert E_{\alpha,Y}^{n} - E_{Y}^p \Vert_{\H_\Theta} \leq
    r_2(\alpha):=
    \frac{4\kappa (Q_2 + \kappa \Vert E_{Y}^p \Vert_{\H_\Theta})\ln(2/\epsilon)}{\sqrt{n}\alpha} + \alpha^{\frac{c_2-1}{2}}\sqrt{\zeta_2}.
\end{eqnarray*}
Approximation error bound $r_1(\alpha)$ for $E_{\alpha,X}^{n}$ achieves its minimum at rate $O(n^{-\frac{c_1-1}{2(c_1+1)}})$ when 
$$
    \alpha = \left(\frac{8\kappa (Q_1 + \kappa \Vert E_{X}^p \Vert_{\H_\Gamma})\ln(2/\delta)}{\sqrt{n\zeta_1}(c_1-1)}  \right)^{\frac{2}{c_1+1}} = O(n^{\frac{-1}{c_1+1}});
$$
and approximation error bound $r_2(\alpha)$ for $E_{\alpha,Y}^{n}$ achieves its minimum at rate $O(n^{-\frac{c_2-1}{2(c_2+1)}})$ when 
$$
    \alpha = \left(\frac{8\kappa (Q_2 + \kappa \Vert E_{Y}^p \Vert_{\H_\Theta})\ln(2/\epsilon)}{\sqrt{n\zeta_2}(c_2-1)}  \right)^{\frac{2}{c_2+1}} = O(n^{\frac{-1}{c_2+1}}).
$$
\end{lemma}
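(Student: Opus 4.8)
The plan is to reduce Lemma~\ref{lem:b1} to the bias--variance decomposition underlying Theorem~2 of \citet{singh2019kernel} --- the very tool already used for Lemmas~\ref{lem::as1}--\ref{lem:as2} --- but carried out with the regularization level $\alpha$ kept symbolic rather than pre-tuned. Recall from \Cref{eq:est_proj_x2} and Lemma~\ref{lem::s1} that the joint-sample estimator, in integral-operator form, is $E_{\alpha,X}^{n} = (\mathbf{T}+\alpha)^{-1}\circ\mathbf{g}$ with $\mathbf{T}=\tfrac1n\sum_{i\in[n]}\phi(z_i)\otimes\phi(z_i)$ and $\mathbf{g}=\tfrac1n\sum_{i\in[n]}\phi(z_i)\otimes\psi(x_i)$ built from all $n$ samples; write $E_{\alpha,X}^{p}:=(T_1+\alpha)^{-1}T_1 E_X^p$ for its population counterpart. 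The first step is the triangle inequality
\begin{equation*}
\Vert E_{\alpha,X}^{n} - E_{X}^p \Vert_{\H_\Gamma} \leq \underbrace{\Vert E_{\alpha,X}^{n} - E_{\alpha,X}^{p} \Vert_{\H_\Gamma}}_{\text{sampling error}} + \underbrace{\Vert E_{\alpha,X}^{p} - E_{X}^p \Vert_{\H_\Gamma}}_{\text{regularization bias}}.
\end{equation*}

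Second, I bound the regularization bias via the source condition in Condition~\ref{cond::s1}(vi): writing $E_X^p = T_1^{(c_1-1)/2}\circ G_1$ with $\Vert G_1\Vert_{\H_\Gamma}^2\le\zeta_1$, one gets $E_{\alpha,X}^{p}-E_X^p = -\alpha(T_1+\alpha)^{-1}T_1^{(c_1-1)/2}\circ G_1$, and since $c_1\in(1,2]$ the scalar estimate $\sup_{t\ge0}\alpha t^{(c_1-1)/2}/(t+\alpha)\le\alpha^{(c_1-1)/2}$ yields $\Vert E_{\alpha,X}^{p}-E_X^p\Vert_{\H_\Gamma}\le\alpha^{(c_1-1)/2}\sqrt{\zeta_1}$. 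For the sampling error I apply the Hilbert-space Bernstein/concentration bounds for $\Vert\mathbf{T}-T_1\Vert$ and $\Vert\mathbf{g}-\Ex[\mathbf{g}]\Vert$ (as in \citet{singh2019kernel}), using the boundedness constants $\kappa,Q_1$ from Condition~\ref{cond::s1}(ii) and $\Vert E_X^p\Vert_{\H_\Gamma}$; after the standard resolvent manipulation --- inserting $(T_1+\alpha)^{-1}(T_1+\alpha)$ and using $\Vert(\mathbf{T}+\alpha)^{-1}\Vert\le\alpha^{-1}$ --- this produces the term $4\kappa(Q_1+\kappa\Vert E_X^p\Vert_{\H_\Gamma})\ln(2/\delta)/(\sqrt n\,\alpha)$ with probability $1-\delta$. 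Adding the two pieces gives $r_1(\alpha)$; the argument for $E_{\alpha,Y}^{n}$ is word-for-word the same with $(Q_2,\zeta_2,c_2,\epsilon,\H_\Theta)$ replacing $(Q_1,\zeta_1,c_1,\delta,\H_\Gamma)$, additionally invoking Condition~\ref{cond::s2}.

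Third, optimizing $r_1(\alpha)=A/(\sqrt n\,\alpha)+\sqrt{\zeta_1}\,\alpha^{(c_1-1)/2}$ over $\alpha$ is elementary: setting the derivative to zero balances the two summands and gives $\alpha^{(c_1+1)/2}=2A/(\sqrt n\,\sqrt{\zeta_1}(c_1-1))$ with $A=4\kappa(Q_1+\kappa\Vert E_X^p\Vert_{\H_\Gamma})\ln(2/\delta)$, i.e.\ the stated minimizer $\alpha=O(n^{-1/(c_1+1)})$; substituting back, both summands are of order $\alpha^{(c_1-1)/2}=O(n^{-(c_1-1)/(2(c_1+1))})$, and identically for $E_{\alpha,Y}^{n}$ with $c_2$. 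I do not expect a genuine obstacle at the level of this lemma --- every estimate is a one-step adaptation of \citet{singh2019kernel}, so the only real work is carefully tracking the constants in the sampling-error concentration and keeping $\alpha$ free throughout. The point to highlight, which drives the rest of the subsection, is that the minimizing $\alpha$ for the $X$-regression and the one for the $Y$-regression coincide only if $c_1=c_2$ (and the constant factors agree); a single shared $\alpha$ therefore cannot make both $r_1$ and $r_2$ attain their optima simultaneously, which is exactly why the two-stage KAR.2 estimator cannot improve on the three-stage KAR rate.
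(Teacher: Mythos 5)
Your proposal is correct and follows essentially the same route as the paper, which proves Lemma~\ref{lem:b1} simply by adapting Theorem~2 of \citet{singh2019kernel} with the regularization parameter $\alpha$ left free: the same sampling-error plus regularization-bias decomposition, the source condition giving the $\alpha^{(c-1)/2}\sqrt{\zeta}$ bias term, and the elementary balancing of the two terms yielding the stated minimizer $\alpha=O(n^{-1/(c+1)})$ and rate $O(n^{-(c-1)/(2(c+1))})$. Your closing remark that a shared $\alpha$ cannot optimize both bounds unless $c_1=c_2$ is exactly the conclusion the paper draws from this lemma.
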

Lemma~\ref{lem:b1} above provides the upper bounds of the approximation errors for $E_{\alpha, X}^{n}$ and $E_{\alpha, Y}^{n}$, and it's adapted from Theorem 2 in \citet{singh2019kernel}. We can see that if $c_1 \neq c_2$, we cannot claim the optimal convergence rate for $E_{\alpha, X}^{n}$ and $E_{\alpha, Y}^{n}$ at the same time, which disjoint sample sets projection estimators can guarantee by setting different $\alpha_1$ and $\alpha_2$ as shown in Lemma~\ref{lem::s1} and ~\ref{lem:s2}. In other words, in \textbf{KAR.2} procedure, the error propagated to the final stage, which are caused by using $E_{\alpha, X}^{n}$ and $E_{\alpha, Y}^{n}$, can have larger order than using $E_{\alpha_1, X}^{n_1}$ and $E_{\alpha_2, Y}^{n_2}$ separately in the \textbf{KAR} procedure. Therefore, we cannot ensure a same or improved convergence rate for \textbf{KAR.2} estimator compared to \textbf{KAR} estimator. 

\section{Additional simulation details and results}

\renewcommand\thefigure{B\arabic{figure}}
\setcounter{figure}{0}

\subsection{Synthetic example in KIV setting}\label{app:kiv}
In this section, we show the  data generating process and implementation details for the example used in the KIV \citep{singh2019kernel} that follows the simulation case of learning counterfactual functions \citep{chen2018optimal} studied in \citet{singh2019kernel}.  The structural model is set as follows,
$$
    Y = C + \ln(|16X - 8| + 1) sgn (X - 0.5).
$$
The explanatory variables are generated from
\begin{eqnarray*}
\left( \begin{array}{c}
        C  \\
        V \\
        W \end{array} \right) &\sim&
        N \left( \left( \begin{array}{c}
        0  \\
        0 \\
        0 \end{array} \right), \left( \begin{array}{ccc}
        1, 0.5, 0  \\
        0.5, 1, 0 \\
        0, 0, 1 \end{array} \right)
        \right), \\
    X &=& F \left( \frac{W+V}{\sqrt{2}} \right),\\
    Z &=& F(W),
\end{eqnarray*}
where $F$ denote the c.d.f of standard normal distribution. This structural model ensures that anchor $Z$ is a valid instrumental variable, so that KIV is supposed to perform well in this case. We conduct kernel anchor regression with three-stage algorithm (KAR), kernel anchor regression with two-stage algorithm (KAR.2) and multiple $\gamma$s and kernel instrument variable regression (KIV). 
Set $n_1 = 200$, $n_2 = 200$, $m = 600$, $n=n_1+n_2 =400$.
For KAR and KAR.2, we set $\gamma$ to be 0, 0.5, 1, 2, 5, 10, and 100. We set $\alpha_1 = c_\alpha n_1^{-0.5}$, $\alpha_2 =  c_\alpha n_2^{-0.5}$, $\alpha =  c_\alpha n^{-0.5}$, and $\xi = 1 m^{-0.5}$, where $c_\alpha > 0$ is a constant chosen from $\{ 0.01,0.05,0.1,0.5,0.8,1,2,3\}$ for each estimator separately to minimise the corresponding MSE. We use Gaussian kernel for all kernel methods, where the lengthscales are set according to median heuristic \citep{gretton2012kernel}.

For each algorithm, we then implement 50 simulations and calculate MSE with respect to the true causal model $\Ex (Y| do(x))$, which can be computed from the structural model. As shown in Figure~\ref{fig:kiv}, though KIV performs better than most KAR and KAR.2 estimators, KAR and KAR.2 with $\gamma = 2$ defeat KIV in the KIV setting. The parameters $c_\alpha$s are chosen to be 1, 0.1, 3, 0.8, 3, 3, 3, 1, 0.1, 3, 1, 3, 3, 3 and 2 for KAR with $\gamma$ being 0, 0.5, 1, 2, 5, 10, 100, KAR.2 with same $\gamma$ series and KIV, respectively.

\begin{figure}[t!]
    \centering
    \includegraphics[width=1.\textwidth]{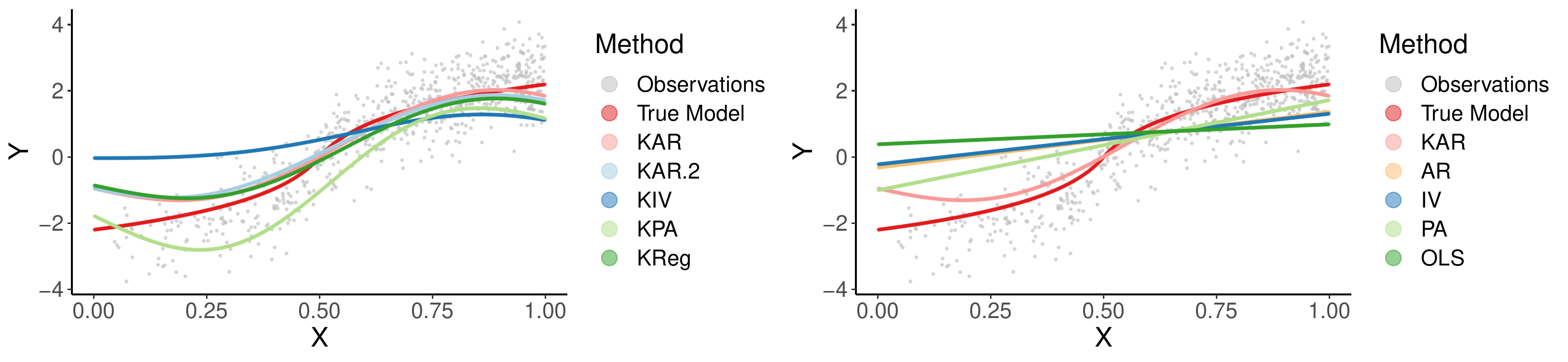}
    \caption{Variant synthetic example: fitted nonlinear (left) and linear (right) methods.}
        \label{fig:variant_fitting}
\end{figure}

\subsection{A variant of the synthetic data example}\label{app:variant}
We also consider a variant case where the structural equation is same to the case in \Cref{sec:synthetic} in the main text 
\begin{equation*}
    Y = 0.75C - 0.25Z + \ln(|16X - 8| + 1) sgn (X - 0.5),
\end{equation*}
and the explanatory variables are generated as
\begin{eqnarray*}
\begin{pmatrix}
        C  \\
        V \\
        W \end{pmatrix} 
        \sim
        N \left( 
        \begin{pmatrix}
        0  \\
        0 \\
        0 \end{pmatrix}, 
        \begin{pmatrix}
        1, 0.3, 0.2  \\
        0.3, 1, 0 \\
        0.2, 0, 1 \end{pmatrix} 
        \right).\\
\end{eqnarray*}
Instead, $X$ and $Z$ are set via the following transformation.
\begin{eqnarray*}
    X = F \left( \frac{ \left|W\right|  +V}{\sqrt{2}} \right), \quad
    Z = F(\left|W\right|) - 0.5.
\end{eqnarray*}

\begin{figure}[t!]
    \centering
        \subfigure
        [MSE results 
        in the KIV setting]
        {\includegraphics[width=0.7\textwidth]{aistats23/IVcase.pdf}\label{fig:kiv}}
        \subfigure
        [MSE results of all estimators in the variant case. ]
        {\includegraphics[width=0.7\textwidth]{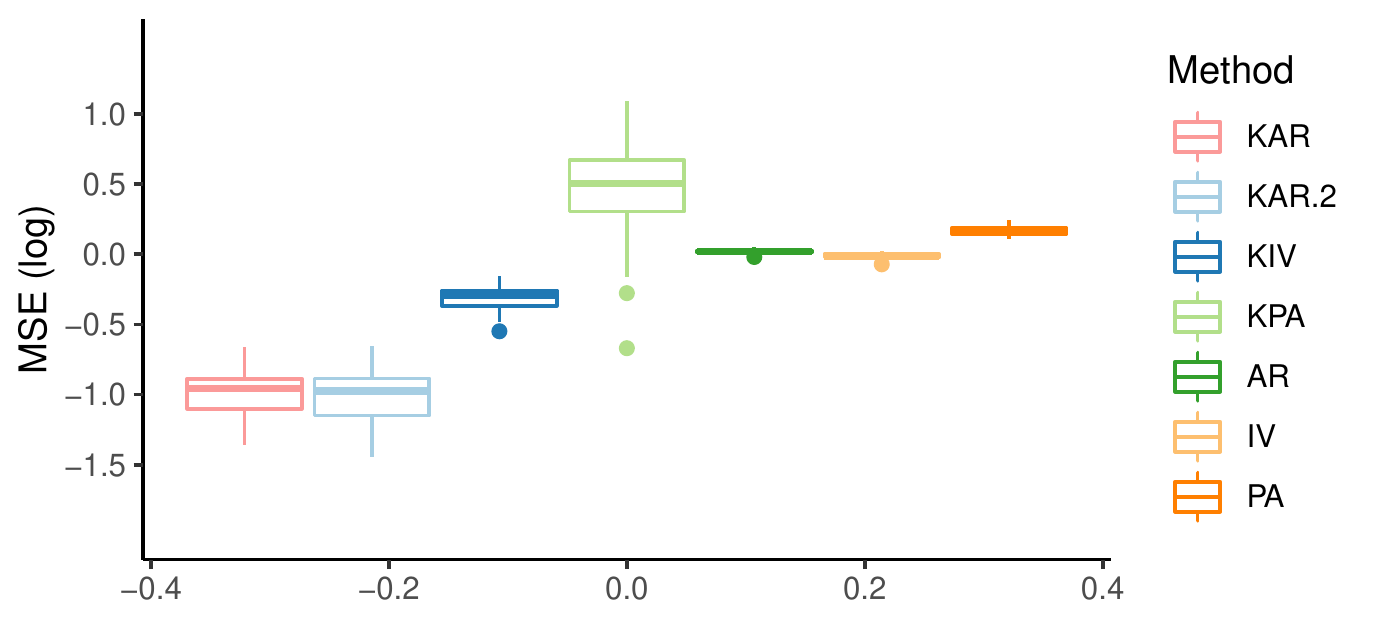}\label{fig:variant2}}
    \caption{Experimental results for additional experiments.
    }
    \label{fig::a1}
\end{figure}
The fitted result of nonlinear and linear methods is shown in Figure~\ref{fig:variant_fitting}.
The MSE averaged over 50 simulations is shown in 
Figure~\ref{fig:variant2}. From the result, we can also see that the proposed kernel anchor regression estimators still performs the best among others under the variant case.


\end{document}